\newcolumntype{P}[1]{>{\centering\arraybackslash}p{#1}}
\newcommand\norm[1]{\left\lVert#1\right\rVert}
\newcommand{\defeq}{\vcentcolon=}
\newtheorem{theorem}{Theorem}[section]
\newtheorem{lemma}[theorem]{Lemma}
\newtheorem{proposition}[theorem]{Proposition}
\newtheorem{definition}[theorem]{Definition}
\newtheorem{remark}[theorem]{Remark}
\newtheorem{assumption}{Assumption}[section]
\DeclareMathOperator*{\argmin}{arg\,min}
\providecommand{\keywords}[1]
{
  \small	
  \textbf{\textit{Keywords---}} #1
}
\begin{document}
\title{On Excess Risk Convergence Rates of Neural Network Classifiers}
\author{Hyunouk~Ko, Namjoon Suh, and Xiaoming~Huo}
\date{}

\maketitle

\begin{abstract} 
    The recent success of neural networks in pattern recognition and classification problems suggests that neural networks possess qualities distinct from other more classical classifiers such as SVMs or boosting classifiers. This paper studies the performance of plug-in classifiers based on neural networks in a binary classification setting as measured by their excess risks. Compared to the typical settings imposed in the literature, we consider a more general scenario that resembles actual practice in two respects: first, the function class to be approximated includes the Barron functions as a proper subset, and second, the neural network classifier constructed is the minimizer of a surrogate loss instead of the $0$-$1$ loss so that gradient descent-based numerical optimizations can be easily applied. While the class of functions we consider is quite large that optimal rates cannot be faster than $n^{-\frac{1}{3}}$, it is a regime in which dimension-free rates are possible and approximation power of neural networks can be taken advantage of. In particular, we analyze the estimation and approximation properties of neural networks to obtain a dimension-free, uniform rate of convergence for the excess risk. Finally, we show that the rate obtained is in fact minimax optimal up to a logarithmic factor, and the minimax lower bound shows the effect of the margin assumption in this regime.
\end{abstract} 

\keywords{Neural network classification, excess risk convergence rate, neural network approximation theory, minimax optimality, Barron approximation space}

\section{Introduction}\label{section: introduction}
Neural networks have a long history as a class of functions with competitive performance in pattern recognition and classification problems. Theoretically, their approximation capability of various significant classes of functions as well as their statistical properties as nonparametric estimators have been actively studied. More recently, the rise of deep neural networks as a solution to many previously unsolved problems in the computer science community has led to the investigation of their theoretical properties.

Since then, many papers have shown universal consistency properties for a variety of classifiers. Among the most successful were the support vector machines pioneered by \cite{boser1992training} and \cite{cortes1995support} and kernel methods based on function class of reproducing kernel Hilbert space pioneered by \cite{aizerman1964probability}, \cite{vapnik1974theory}, and others. Various types of neural networks such as shallow feed-forward neural networks with sigmoidal activation, polynomial networks, and Kolmogorov-Lorentz networks were also studied. In particular, a one hidden-layer neural network with sigmoidal activation obtained by minimization of empirical $L_1$ error was shown to be universally consistent in \cite{lugosi1995nonparametric}.

A central tool widely used in proving these consistency and convergence rate results is the collection of concentration inequalities in various contexts. Roughly speaking, the excess risk for classification can be bounded by a term involving the suprema of empirical process indexed by the class of candidate functions for decision rule. Arguably the most important of them for our purposes is the Talagrand inequality \cite{talagrand1994sharper} which gives a functional uniform concentration bound. This allows us to apply localized second-order arguments to obtain sharp convergence rates. See \cite{boucheron2013concentration} for details. 

With the help of relatively new techniques from empirical process theory, a series of seminal papers \cite{mammen1999smooth}, \cite{tsybakov2004optimal}, \cite{audibert2007fast} providing convergence rate that holds uniformly over a class of distributions satisfying some regularity assumptions were published. In particular, two types of classifiers were considered: empirical risk minimization (ERM) rules and plug-in rules. While an ERM rule provides a classifier directly based on a decision set, that is a classifier that outputs $1$ if data belongs to the set and $0$ otherwise, a plug-in rule tries to approximate the regression function $E[Y=1|X=x]$ first and outputs $1$ or $0$ based on whether the function value exceeds a given threshold. Accordingly, the set of assumptions on the joint distribution of $(X,Y)$ differ in that while results on the ERM rule apply under the set complexity assumption, those on the plug-in rule apply under the function class complexity assumption. In addition, the provided rates of convergence are shown to be minimax optimal in respective scenarios. Details of relevant results will be provided in Section \ref{section: review of existing works}.

\begingroup

\setlength{\tabcolsep}{8pt} 
\renewcommand{\arraystretch}{1.5} 
\small

\begin{table*}[t]
\centering
\caption{Rough comparison of related papers}\label{first table}

\begin{tabular}{ |P{2.0cm}||P{2.0cm}|P{2.3cm}|P{2.3cm}|P{2.3cm}|P{3.0cm}| }
    \hline
    Paper & Classifier Type & Classifier Class &Loss function & Assumptions &  Convergence Rate\\
    \hline
    \cite{mammen1999smooth}  & ERM  & ERM from set class, finite sieves &  0-1 loss & $S(\rho), M(\alpha)$ & $n^{-\frac{1+\alpha}{2+\alpha+\rho\alpha}}$\\
    \cite{tsybakov2004optimal} &  ERM & Finite sieves & 0-1 loss  & $S(\rho), M(\alpha)$ &  $n^{-\frac{1+\alpha}{2+\alpha+\rho\alpha}}$\\
    \cite{audibert2007fast} & ERM + plug-in & Linear polynomial &0-1 loss & $M(\alpha), D, H(\beta)$ & $n^{-\frac{\beta(1+\alpha)}{2\beta+d}}$\\
    \cite{bartlett2006convexity} & ERM + plug-in & Boosting classifier & convex losses &  $S(V), C, M(\alpha)$ & $n^{-\frac{V+2}{2(V+1)(2-\alpha)}}$\\
    \cite{blanchard2003rate} & ERM + plug-in & Boosting classifier & exponential or logistic &  $S(V), M(\alpha)$ & $n^{\frac{V+2}{2(2-\alpha)(V+1)}}$\\
    \cite{blanchard2003rate} & ERM + plug-in & Boosting based on decision stumps & exponential or logistic & $ B, M(\alpha)$  & $n^{-\frac{2(1+\alpha)}{3(2+\alpha)}}$\\
    \cite{steinwart2008support} & plug-in & Support vector machines & hinge loss & $M(\alpha), G(\beta)$ & $n^{-\frac{2\beta(\alpha+1)}{2\beta(\alpha+2) + 3\alpha + 4}}$\\
    \cite{kim2021fast} & ERM + plug-in & Neural networks & hinge loss &  $S(\beta), M(\alpha)$ & $n^{-\frac{\beta(\alpha+1)}{\beta(\alpha+2) + (d-1)(\alpha+1)}}$ \\
     \hline
     This paper & ERM + plug-in & Neural networks& logistic loss &  $ M(\alpha)$, $BA$ &$n^{-\frac{1+\alpha}{3(2+\alpha)}}$\\
     \hline
\end{tabular}

\end{table*}

\endgroup

This paper also focuses on the binary classification problem. Specifically, we will provide a non-asymptotic uniform rate of convergence for the excess risk $E[g(X)\neq Y] - L^*$ where $L^*$ is the Bayes risk, under several distributional assumptions for a plug-in rule based on feed-forward ReLU neural networks. In Table \ref{first table}, we provide a comparison of our work with other existing works in the literature. The capitalized letters in the assumptions column mean the following: $S(\rho)$: set class complexity assumption where $\rho$ is the entropy parameter, $S(V)$: set class complexity assumption where $V$ is the VC-dimension, $M(\alpha)$: margin condition, $H(\beta)$: H\"{o}lder class assumption of smoothness index $\beta$, $G(\beta)$: geometric noise assumption, $B$: bounded variation assumption, $C$: function class convexity assumption, $D$: density assumption on $X$, $BA$: Barron approximation space assumption. More details on various assumptions will be given in Section \ref{section: review of existing works}.
We analyze a new, more general setting that has not been studied before. 
The major distinguishing points of our analysis are:
\begin{itemize}
\item {We study a regime in which minimax optimal rates are ``slow" (as fast as $n^{-\frac{1}{3}}$). While it is much more general than typical settings in the literature, dimension-free rates are possible, and neural networks have the approximation power to achieve such rates. Indeed, we show that our neural network-based plug-in rule is minimax optimal up to a logarithmic factor.}
\item{We consider a feed-forward ReLU neural network-based plug-in rule that is chosen based on the minimization of the empirical average of logistic loss, which we denote by $\phi$. This contrasts with the work of \cite{kim2021fast} that uses hinge loss.}
\item{We apply state-of-the-art results on the complexity of deep feed-forward ReLU neural network class combined with the refined localization analysis to obtain a sharp convergence rate.}

\end{itemize}

In addition to controlling the estimation error via applications of empirical process techniques, we also need to control the approximation error. While approximation of high-dimensional functions usually suffers from the curse of dimensionality, which describes the phenomenon that the approximation rate deteriorates in the input dimension, it was first shown in \cite{barron1993universal} that for a class of functions whose variation is bounded in a suitable sense, shallow neural networks attain a dimension-free $N^{-1/2}$ rate of convergence in $L_2$ norm where $N$ is the number of weights in the architecture, and \cite{barron1992neural} refined the result so that the same rate holds also for the uniform norm. In fact, this is the key property we will use in defining the class of candidate functions for the regression function.

In this paper, we consider the scenario where the true regression function is locally characterized by elements of the Barron approximation space proposed in \cite{caragea2020neural}. Unlike the classical Barron space in \cite{barron1993universal}, which is actually a subset of the set of Lipschitz continuous functions, this space includes even discontinuous functions and hence is more general. Also, while \cite{caragea2020neural} does explore estimation bounds for Barron approximation space, the setting is rather restrictive in that it assumes a noiseless setting where there is a deterministic function $f$ such that $Y=f(X)$. In contrast, we work with a general class of probability measures defined jointly on $(X,Y)$. We will analyze the performance of an empirical risk minimizer of a surrogate loss that is widely used in practice for its computational and statistical advantages over the $0$-$1$ loss. Specifically, we derive a non-asymptotic uniform rate of convergence over a class of distributions that roughly speaking, can be characterized by the Barron approximation space. 

Finally, we will conclude by providing a minimax lower bound for the proposed class of distributions. The lower bound shows that achieving uniform convergence rate is inherently difficult in the sense that it cannot be better than $n^{-\frac{1}{3}}$.
\subsection{Main Contributions}
In summary, the purpose of this paper is to show a non-asymptotic and uniform rate of convergence for a sequence of classifiers based on neural networks in a binary classification setting when the regression function is locally characterized by the Barron approximation space. The classifier chosen is based on empirical risk minimization of the logistic loss. We combine refined results from classical classification theory with approximation results for neural networks. Specifically,
\begin{itemize}
    \item{We first derive a nonasymptotic bound (Theorem \ref{theo1}) on the approximate excess $\phi$-risk for a function obtained via empirical minimization of $\phi$-loss instead of $0$-$1$ loss, which is how neural networks are trained in practice. In this preliminary result, while we initially put minimal assumption on the distribution, the obtained bound is distribution-dependent.}
    \item{Second, we consider the class of distributions such that (1) the Mammen-Tsybakov noise condition holds, (2) the regression function $\eta$ locally belongs to the Barron approximation space. Then, we obtain a uniform bound on the excess risk (Theorem \ref{main excess bound}) over this class of distributions for the neural network plug-in classifiers using the preliminary result above. Apart from the general difference in distributional assumptions, this work differs from the two works \cite{mammen1999smooth} and \cite{tsybakov2004optimal} in that first, our classifier is a plug-in rule, and second, it provides results for a feasible classifier that can be obtained via available optimization methods based on gradient descent while in those works the classifier may not be feasible or difficult to obtain. Another comparable work is   \cite{kim2021fast} where a fast convergence rate is shown for a neural network classifier that minimizes empirical risk for hinge loss and when the Bayes classifier is characterized by function classes proposed by \cite{petersen2018optimal}, \cite{tsybakov2005square}. The regime they work under is perhaps less interesting because traditional classifiers such as local polynomials and support vector machines also lead to minimax optimal rates there. In fact, the optimal rates are dependent on the input-dimension and the smoothness of the regression function. Our work demonstrates a regime where dimension-free rates are possible without explicit smoothness or regularity conditions. Our work also critically differs from \cite{koltchinskii2006local} and \cite{massart2006risk} in that we include an approximation error analysis. Furthermore, we specifically focus on neural network learning with logistic loss, making use of the sharpest known bounds on VC-dimension while taking advantage of the approximation power of neural networks. A rough but honest summary and comparison of related works with convergence rate results for binary classification are given in Table \ref{first table}.} We would like to warn the reader, however, that it is oftentimes not a good idea to compare convergence rates per se because the assumptions applied in respective works are different, and even a very subtle difference can lead to wildly different convergence behaviors.
    
    \item{Third, we derive a minimax lower bound (Theorem \ref{minimax lower bound theorem}) for the same class of distributions considered in deriving the upper 
 bound. This result shows that the upper bound on the rate achieved with neural network classifiers is indeed minimax optimal upto a logarithmic factor. Closely related are the minimax lower bounds derived in \cite{audibert2007fast} under mild distributional assumptions and H\"{o}lder regression function class setting.} 
\end{itemize}

\subsection{Organization}
The rest of the paper is organized as follows. In Section \ref{section:background}, we provide all the necessary definitions from classification and empirical process theory that we will be using throughout the paper. In Section \ref{section: review of existing works}, we review existing results and discuss how their assumptions and results differ from each other and from our own. In Section \ref{section:main results}, we state our main results culminating in the rate of convergence for excess risk given in Theorem \ref{main excess bound}. In Section \ref{section: proofs}, we give all the technical proofs for results from Section \ref{section:main results}.

\section{Background}\label{section:background}
\subsection{Basic setup}\label{basic setup}
Let $Z = (X, Y)$ be a $S \defeq \mathbb{R}^d\times \{0,1\}$-valued random vector, and suppose we have a sample of size $n$: $\mathcal{D}_n = \{(X_1, Y_1), \dots (X_n, Y_n)\}$ that are i.i.d. with distribution $P$. Denote by $P_X$ the marginal distribution of $X$ and $(P_X)_n$ the empirical distribution based on the $n$ samples for $P_X$, which is a random measure on $\mathbb{R}^d$ based on the $n$ samples.
The goal is to construct a classifier 
\begin{align} \label{classifier formula}
    M_n: \mathbb{R}^d \times \{\mathbb{R}^d \times \{0,1\} \}^n \rightarrow \{0,1 \}
\end{align}
that assigns a label to any given input in $\mathbb{R}^d$ based on $n$ samples from $\mathcal{D}_n$. 
The quality of $M_n$ is measured by the error function $L$, which takes as input the classifier $M_n$ and outputs the following conditional probability:
\begin{align} \label{loss2}
    L(M_n) \defeq P(M_n(X; X_1,Y_1,\dots,X_n,Y_n)\neq Y|\mathcal{D}_n).
\end{align}
Hence, when the randomness of $\mathcal{D}_n$ is integrated over, $E[L(M_n)]$ also becomes a deterministic real number between $0$ and $1$. Then, we are interested in obtaining a provable upper bound on $E[L(M_n)]$ in terms of $n$ when $M_n$ is a function realized by a neural network.

The minimal possible expected error will be denoted by
\begin{align} \label{minimum error}
    L^* \defeq \inf_{g}E[L(g)]
\end{align}
where the infimum is taken over all measurable classifiers, and expectation is taken with respect to all sources of randomness. It is a well-known result that this infimum is actually achieved by a classifier with prior knowledge of $P$.

To describe the classifier achieving \eqref{minimum error}, we define the so-called regression function $\eta:\mathbb{R}^d \rightarrow [0,1]$ as the Borel-measurable function satisfying:
\begin{align}\label{regression function}
    \eta(\boldsymbol{X}) = E[Y=1|\boldsymbol{X}].
\end{align}
That such function $\eta$ exists and when composed with $\boldsymbol{X}$, is a version of the conditional expectation is shown, for example, in \cite[Theorem 9.1.2]{chung2001course} or in a more general setting of a Polish space, \cite[Theorem 10.2.1 and 10.2.2]{dudley2018real}. 
Then, it is a standard result (see, for example, \cite[Section 2.1]{devroye2013probabilistic}) that the infimum is achieved by the classifier
\begin{align}\label{Bayes classifier}
    M^*(\boldsymbol{X}) = \mathbbm{1}_{\{x: \eta(x)\geq 1/2\}}(\boldsymbol{X}) = 
    \begin{cases}
        1, & \text{if } \eta(\boldsymbol{X})\geq 1/2; \\
        0, & \text{otherwise,}
    \end{cases}
\end{align}
which one can construct with prior knowledge of $P$ and is independent of the samples $\mathcal{D}_n$ so that we have
\begin{align*}
    L^* &= E[L(M^*)] \\
    &\stackrel{\text{definition}}{=} E[P(M^*(X)\neq Y|\mathcal{D}_n)] \text{ }\\
    &\stackrel{\text{independence}}{=} P(M^*(X)\neq Y).
\end{align*}

For a given function $f: \mathbb{R}^d \rightarrow \mathbb{R}$, it is convenient to write
\begin{align} \label{eq6}
    p_f(\boldsymbol{x}) \defeq \mathbbm{1}_{\{x:f(x)\geq 0\}}(\boldsymbol{x}),
\end{align}
where for any subset $A\subset\mathbb{R}^d$, $\mathbbm{1}_{A}$ is the indicator function defined by
\begin{align}
    \mathbbm{1}_{A}(x) \defeq
    \begin{cases}
        1, & \text{ if } x\in A;\\
        0, & \text{ otherwise.}
    \end{cases}
\end{align}
Then, we can regard $p_f$ as an estimator of the function $\mathbbm{1}_{\{\eta(x)\geq 1/2\}}$. These types of classifiers are called plug-in rules in the literature. Using this notation, note that \eqref{Bayes classifier} is the plug-in rule, and $p_{\eta-1/2}$ is also called the Bayes classifier.

Now, we formally define the concepts needed to quantify the performance of a classifier.

\begin{definition}[Excess risk]\label{excess risk}
   The excess risk of a classifier $g_n:\mathbb{R}^d \times \{\mathbb{R}^d \times \{0,1\} \}^n \rightarrow \{0,1 \}$ that depends on samples $\mathcal{D}_n$ is defined as
    \begin{align*} 
        \mathcal{E}(g_n) \defeq E[L(g_n)]  - \inf_{h \text{ measurable}}P(h(\boldsymbol{X})\neq Y).
    \end{align*}
\end{definition}

We will use the notation $P(\cdot)$ to denote integration with respect to $P$ or simply $E[\cdot]$ when the measure is clear from the context. Now while the infimum in the right-hand side of the above display is taken with respect to all measurable $h$, for analysis, it will be convenient to first consider the case when the infimum is taken with respect to a given function class $\mathcal{G}$ that we take as the candidate set for estimation. Accordingly, we define the approximate excess risk below: 
\begin{definition}[Approximate excess risk]
     Suppose $g_n:\mathbb{R}^d \times \{\mathbb{R}^d \times \{0,1\} \}^n \rightarrow \{0,1 \} \in \mathcal{G}$ for some class of functions $\mathcal{G}$. We define the approximate excess risk of $g_n$ with respect to $\mathcal{G}$ as
    \begin{align*} 
        \widehat{\mathcal{E}}(g_n) \defeq E[L(g_n)]  - \inf_{h \in \mathcal{G}}P(h(\boldsymbol{X})\neq Y).
    \end{align*}
\end{definition}
Since we only consider plug-in rules, when a classifier $p_f$ is constructed from $f$ via \eqref{eq6}, we also write $\mathcal{E}(f)$ for $\mathcal{E}(p_f)$ and likewise for the approximate excess risk when there is no room for confusion. 

\subsection{Surrogate loss, classification calibration, and excess risks}
Let $\phi: \mathbb{R} \rightarrow [0,\infty)$ be the logistic loss function:
\begin{align} \label{logistic loss}
    \phi(t) \defeq \log\left(1+e^{-t}\right).
\end{align}
Let $\mathcal{G}$ be some class of real-valued measurable functions $g: \mathbb{R}^d\rightarrow \mathbb{R}$ realized by neural networks, and define the function $\phi\bullet g: \mathbb{R}^d\times\{0,1\}\rightarrow \mathbb{R}$,
\begin{align}\label{bullet function}
    \phi \bullet g(x,y) = \phi((2y-1)g(x)).
\end{align}
Based on the above notation, we can analogously define the excess $\phi$-risk and approximate excess $\phi$-risk of a function $g_n$ with respect to function class $\mathcal{G}$ as the approximate excess risk with respect to the class of functions $\{\phi\bullet g, g\in \mathcal{G} \}$ as follows:
\begin{align}\label{excess phi-risk}
    \mathcal{E}_{\phi}(g_n) &\defeq E(\phi \bullet g_n(X,Y; \mathcal{D}_n)) - \inf_{g \text{ measurable}}P(\phi \bullet g(X,Y)),\\
    \widehat{\mathcal{E}}_{\phi}(g_n) &\defeq E(\phi \bullet g_n(X,Y; \mathcal{D}_n)) - \inf_{g \in \mathcal{G}}P(\phi \bullet g(X,Y)). \label{approximate excess phi risk}
\end{align}
We will also write $L_{\phi}(g) \defeq E[\phi\bullet g]$
In our analysis, we will be interested in $\widehat{g}_n$, which is the solution of the following empirical risk minimization problem:
\begin{align*}
    \widehat{g}_n \defeq \argmin_{g\in \mathcal{G}}E_n(\phi\bullet g), 
\end{align*}
where $E_n$ denotes expectation with respect to the empirical measure $\frac{1}{n} \sum_{i=1}^n \delta_{X_i, Y_i}$ based on samples $\mathcal{D}_n$.
We will first study the statistical quality of $\widehat{g}_n\in \mathcal{G}_n$ as measured by its approximate excess $\phi$-risk, and from there derive a convergence rate for the excess risk $\mathcal{E}(\widehat{g}_n)$.

Although in our examination, $\phi$ will always be the logistic loss, statistical properties of a general class of convex losses have been extensively studied, for example in \cite{bartlett2006convexity}. With regard to the use of a surrogate loss, an important concept is that of classification-calibration. We first introduce some preliminary definitions.

\begin{definition}
    Define the optimal conditional $\phi$-risk as
    $$
    H(\eta) \defeq \inf_{\alpha \in \mathbb{R}} \eta \phi(\alpha) + (1-\eta)\phi(-\alpha), \quad \eta\in[0,1].
    $$
\end{definition}
Note that optimal $\phi$-risk is then given by
\begin{align}\label{optimal phi-risk}
    L_{\phi}^* \defeq E[H(\eta(X))] = \inf_{f \textit{ measurable}}E[\phi((2Y-1)f(X))]
\end{align}
where the second equality follows from the property of conditional distribution; see, for example, \cite[Theorem 10.2.1]{dudley2018real}, and that for the logistic loss, the infimum on the right-hand side is achieved by $f(x) = \log(\frac{\eta(x)}{1-\eta(x)})$, which is measurable..
We also define a similar function $H^-$ as
\begin{definition}
    $H^{-}(\eta) \defeq \inf_{\alpha : \alpha(2\eta-1)\leq 0} \eta \phi(\alpha) + (1-\eta)\phi(-\alpha), \quad \eta\in[0,1]$,
\end{definition}
\noindent which is the optimal conditional $\phi$-risk under the constraint that $\alpha$ takes a sign different from $2\eta -1$.
Based on the above definitions, we define classification calibration:
\begin{definition}\label{classification calibrated}
    The surrogate loss $\phi$ is classification calibrated if, for any $\eta \neq 1/2$,
    $$
    H^{-}(\eta)>H(\eta). 
    $$
\end{definition}
Intuitively this says that the $\phi$-risk associated with a ``wrong" classifier should always yield a higher value of $\phi$-risk than the ``correct" one.
It is not trivial to see how the excess $\phi$-risk \eqref{excess phi-risk} relates to the excess risk (Definition \ref{excess risk}), which is of ultimate interest. \cite{zhang2004statistical} first showed the so-called Zhang's inequality: for a function $f_n:\mathbb{R}^d \rightarrow \mathbb{R}$, if $\phi$ is such that for some positive constants $s \geq 1$ and $c \geq 0$
$$
\left|\frac{1}{2}-\eta\right|^s \leq c^s(1-H(\eta)), \quad \eta \in[0,1]
$$
then, we have 
\begin{align} \label{zhang's inequality}
    \mathcal{E}(f_n) \leq c\mathcal{E}_{\phi}(f_n)^{1/s}.
\end{align}
In fact, this result was later refined using the additional assumption of Mammen-Tsybakov noise condition by \cite{bartlett2006convexity} as follows:
\begin{align}\label{bartlett bound}
   \mathcal{E}(f_n) \leq C\mathcal{E}_{\phi}(f_n)^{(1+\alpha)/(s+\alpha)}
\end{align}
where $C>0$ is a constant, $s$ is as in Zhang's inequality, and $\alpha$ is the noise exponent in the Mammen-Tsybakov noise condition, which will be discussed later in Section \ref{dist assumption 3}.
In particular, since the logistic loss is convex and differentiable at $0$ with a negative derivative, it can be shown that it is classification-calibrated so that $\phi$ satisfies the conditions needed to conclude \eqref{bartlett bound}. 
Hence, it suffices to provide a bound of $\mathcal{E}_{\phi}(\widehat{g}_n)$ to bound $\mathcal{E}(\widehat{g}_n)$.

Lastly, following \cite{bartlett2006convexity}, we define the $\psi$-transform of a loss function $\phi$ in the following:
\begin{definition}\label{psi transform}
    Given $\phi:\mathbb{R} \rightarrow [0,\infty)$, define $\psi$-transform 
    $\psi:[-1,1] \rightarrow [0, \infty)$ by $\psi=\Tilde{\psi}^{**}$ where
    \begin{align*}
        \Tilde{\psi}(\eta) = H^-\left(\frac{1+\eta}{2}\right) - H\left(\frac{1+\eta}{2}\right)
    \end{align*}
    and $\Tilde{\psi}^{**}$ is the Fenchel-Legendre biconjugate of $\Tilde{\psi}$. Recall that the Fenchel-Legendre biconjugate of a function $f$ is the closed convex hull of $f$, or equivalently, the largest lower semi-continuous function $f^{**}$ such that $f^{**} \leq f$.
\end{definition}

\subsection{Concepts from empirical risk minimization theory}
In this subsection, we give a number of definitions that frequently appear in empirical risk minimization and empirical process theory used in the proofs of our main results. 
\begin{definition}[$\delta$-minimal set of $P$-risk]
For any class of functions $\mathcal{G}$, we define the $\delta$-minimal set of $P$ for $\mathcal{G}$ as 
\begin{align*}
    \mathcal{G}(\delta) \defeq \{g: g \in \mathcal{G}, E[g] - \inf_{f\in\mathcal{G}} E[f] \leq \delta \}.
\end{align*}
\end{definition}
Since we will be working with $\phi$-risks, it is convenient to define the analogous $\delta$-minimal set of $\phi$-risk as follows:
\begin{definition}[$\delta$-minimal set of $\phi$-risk]
    \begin{align}
        \mathcal{G}_{\phi}(\delta) \defeq \{g: g \in \mathcal{G}, \widehat{\mathcal{E}}_{\phi}(g) \leq \delta \}.
    \end{align}
\end{definition}

We also define VC-dimension widely used as a useful measure of function class complexity. First is the definition of VC-index or VC-dimension for a collection of subsets of a given space.

\begin{definition}[VC-index, VC-class of sets]
    Let $\mathcal{C}$ be a class of subsets of a set $\mathcal{X}$. For an arbitrary subset of $\mathcal{X}$, $\{x_1,\dots,x_n\}$, $\mathcal{C}$ is said to shatter $\{x_1,\dots,x_n\}$ if $|\{C \cap\{x_1,\dots,x_n\}: C\in\mathcal{C}\}| = 2^n$ where $|\cdot|$ denotes cardinality of the set. The VC-index of $\mathcal{C}$ is then defined as
    \begin{align*}
        V(\mathcal{\mathcal{C}}) \defeq \inf \{n: \forall A \subset \mathcal{X} \textit{ with } |A|=n, \mathcal{C} \textit{ does not shatter } A\}.
    \end{align*}
    If $V(\mathcal{C})$ is finite, $\mathcal{C}$ is said to be of VC-class with VC-index $V(\mathcal{C})$.
\end{definition}
Now we can further extend the definition to a collection of functions. By a subgraph of a function, we mean
\begin{definition}[subgraph of function]
    For a function $f:\mathcal{X}\rightarrow \mathbb{R},$ the subgraph of $f$ is the set
    \begin{align*}
        \{(x,t)\in \mathcal{X}\times \mathbb{R}: t < f(x)\}.
    \end{align*}
\end{definition}

\begin{definition}[VC-class of functions]
    A class of functions $\mathcal{F}$ is said to be of VC-class if the set $
    \mathcal{C}_{\mathcal{F}}\defeq\{\textit{subgraph of f}:f\in\mathcal{F}\}$ is of VC-class, and we define the VC-index of $\mathcal{F}$ as $V(\mathcal{F}) = V(\mathcal{C}_{\mathcal{F}})$
\end{definition}
In other words, the VC-index for a class of real-valued functions is defined  in terms of the VC-index of the class of their subgraphs.

\begin{remark}
    Sometimes, the VC-dimension is defined for a class of binary-valued functions and the definition is extended to real-valued functions by transforming them via a fixed threshold function, and the term ``pseudodimension" is used instead to refer to our definition of VC-index for real-valued functions. However, for our purposes, the two definitions can be used interchangeably up to change in constants. See page 2 of \cite{bartlett2019nearly} for details on justification.
\end{remark}
Now that we have defined the VC-class of functions, we introduce the Rademacher process, a special stochastic process widely used in functional concentration results:

\begin{definition}[Rademacher process]
    We define the Rademacher process indexed by a class of functions $\mathcal{F}$ as
    \begin{align}\label{Rademacher process}
        R_n(f) = \frac{1}{n}\sum_{i=1}^n\epsilon_i f(X_i), \quad f\in\mathcal{F},
    \end{align}
    where $\epsilon_i$'s are i.i.d. Rademacher random variables (discrete random variables with mass 1/2 each on -1 and 1).
\end{definition}

Based on $\delta$-minimal set, we can define the expected sup-norm 
\begin{align} \label{expected sup norm}
    \phi_n(\delta) \defeq \phi_n(\mathcal{G},\delta) \defeq E_{\mathcal{D}_n}\left[ \sup_{g_1,g_2 \in \mathcal{G}(\delta)} |(E_n- E)(g_1-g_2)| \right]
\end{align}
where $E_{\mathcal{D}_n}$ denotes expectation with respect to $\mathcal{D}_n$, $E_n$ denotes expectation with respect to empirical measure based on $\mathcal{D}_n$, and $E$ denotes expectation with respect to $(\boldsymbol{X},Y)$. We also define
the $L_2(P)$-diameter of the $\delta$-minimal set $\mathcal{G}(\delta)$ as
\begin{align}\label{diameter}
    D^2(\delta) \defeq D^2(\mathcal{G},\delta) \defeq \sup_{g_1,g_2 \in \mathcal{G}(\delta)}E(g_1-g_2)^2.
\end{align}
We also define some function transformations introduced, for example, in \cite{koltchinskii2011oracle} that will be used in technical parts of later proofs. Given a function $\psi: \mathbb{R}_{\geq 0} \rightarrow \mathbb{R}_{\geq 0}$, define
\begin{align}
    \psi^{b}(\delta) &\defeq \sup_{\sigma \geq \delta} \frac{\psi(\sigma)}{\sigma}, \nonumber\\
    \psi^{\sharp}(\epsilon) &\defeq \inf\{\delta: \delta>0, \psi^{b}(\delta) \leq \epsilon\}\label{transformations}.
\end{align}

\subsection{Neural Network Class}
We consider constructing classifiers through a hybrid of plug-in and ERM procedures as realized by neural networks. Namely, we obtain an estimator $\widehat{f}$ of the Bayes classifier that belongs to the class of feed-forward ReLU networks. 
\begin{figure}[!t]
    \centering
    \includegraphics[height=4cm]{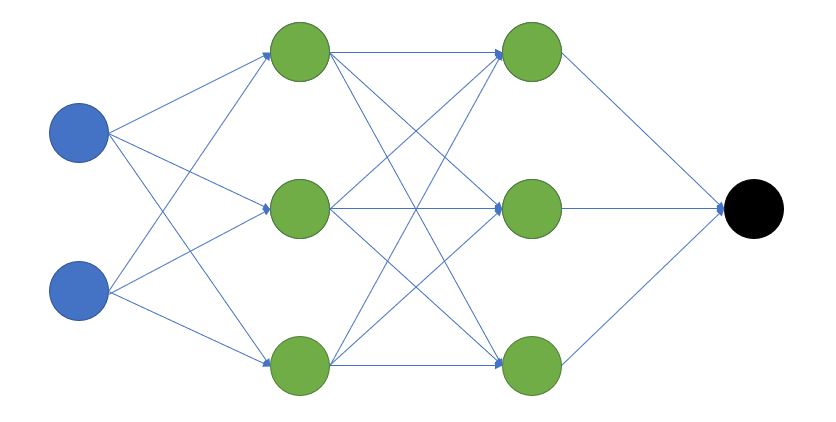}
    \caption{An example of a fully-connected feed-forward neural network with 2 hidden layers, input dimension 2, and output dimension 1.}
    \label{fig:neural net}
\end{figure}

Write the ReLU function
\begin{align*}
    \sigma(x) = \max\{x,0\}, x\in \mathbb{R}.
\end{align*}
Then, a feed-forward ReLU neural network with $L$ hidden-layers and width vector $\boldsymbol{p} = (p_1,\dots,p_L)$ is the function $f$ defined by the following:
\begin{align*}
    f(\boldsymbol{x}) = \sum_{i=1}^{p_L} c_{1,i}^L f_i^{L}(\boldsymbol{x}) + c_{1,0}^L,
\end{align*}
which is the output of the last layer of the neural network, and $f_i^l$ for $l=1,\dots, L, i=1,\dots,p_l$ are recursively defined by
\begin{align*}
    f_i^l(\boldsymbol{x}) =\sigma \left( \sum_{j=1}^{p_{l-1}} c_{i,j}^{l-1} f_j^{l-1}(\boldsymbol{x}) + c_{i,0}^{l-1}\right)
\end{align*}
and for the base case,
\begin{align*}
    f^0_{j}(x) \defeq x_j
\end{align*}
for constants $c_{i,j}^L \in \mathbb{R}$ for all corresponding indices $i,j,l$. We also say that the $l$th hidden layer has $m$ neurons when $p_l=m$. See Figure \ref{fig:neural net} for an illustration.

Then, we denote the class of feed-forward ReLU neural networks by $\mathcal{F}(L,p)$ where
\begin{align*}
    \mathcal{F}(L,p) = \{f: f \textit{ is a feed-forward ReLU neural network with}\\ \textit{$L$ layers and width vector $p$}\}.
\end{align*}

\subsection{Barron approximation space}\label{subsection: Barron approximation space}
In this subsection, we define the target class of functions we wish to approximate. Specifically, we give a characterization of the function class that the regression function $\eta$ belongs to. We introduce several definitions following \cite{caragea2020neural}. 

\begin{definition}[Barron Approximation Space]\label{barron approximation space}
    Let $U \subset \mathbb{R}^d$ be bounded with a non-empty interior and $C>0$ a constant. We define $\mathcal{BA}_C(U)$ to be the set of all measurable functions $f:U \rightarrow \mathbb{R}$ such that for every $m\in \mathbb{N}$, there exists a 1-hidden layer ReLU neural network $g$ with $m$ neurons such that 
    \begin{align}\label{eq38}
        \norm{f - g}_{\infty} \leq \sqrt{d}Cm^{-1/2},
    \end{align}
    and such that all weights involved in $g$ are bounded in absolute value by
    \begin{align}\label{weight bound}
        \sqrt{C} \cdot\left(5+\inf _{x_{0} \in U}\left[\left\|x_{0}\right\|_{1}+\vartheta\left(U, x_{0}\right)\right]\right)
    \end{align}
    where $\vartheta(U, x_{0}) \defeq \sup_{\xi \in \mathbb{R}^{d} \backslash\{0\}}(\|\xi\|_{\infty} /|\xi|_{U, x_{0}}) $ 
    and $|\xi|_{U, x_{0}} \defeq \sup_{x \in U}|\langle\xi, x-x_{0}\rangle|$.
    Then, we define $\mathcal{BA}(U) \defeq \bigcup_{C>0}\mathcal{BA}_C(U)$ and call it the Barron approximation space.
\end{definition}

We give some interpretations of the term \eqref{weight bound}. Since $U$ is open, it is possible to find an open rectangle in $U$ whose edge widths are all greater than some positive $\delta>0$. Then, it is straightforward to verify that independent of the choice of $x_0 \in U$, $\vartheta\left(U, x_{0}\right)$ is bounded from above by $\frac{2}{\delta}$ (c.f. \cite[Remark 2.3]{caragea2020neural}). It is also bounded from below by a positive number since $\|\xi\|_{\infty} /|\xi|_{U, x_{0}}\geq \frac{\norm{\xi}_{\infty}}{\sup_{x\in U}\norm{\xi}\norm{x-x_0}}\geq \frac{1}{\sqrt{d}D}$ where $D$ is the diameter of the set $U$. With this view, we see that \eqref{weight bound} is independent of the choice of $f$ and fully determined by the shape and size of $U$.

The definition of Barron approximation space is motivated by the direct approximation theorems for functions in the classical Barron space as introduced in \cite{barron1993universal}, \cite{barron1992neural}. Barron functions there are defined as functions that admit a Fourier integral representation with finite first moment with respect to the magnitude distribution of the defining complex measure $F$ in the integral representation.
It is worth mentioning that there exist several distinct definitions of ``Barron space" used in the literature.  
Under the definition of \cite{caragea2020neural} (Definition 2.1), it is shown that any Barron function can be estimated by a 1-hidden layer neural network at the same rate as in \eqref{eq38}, which motivates the given definition for Barron approximation space. Hence, Barron approximation space includes all the Barron functions as defined in \cite{barron1993universal}, which includes many important classes of functions such as functions with high-order derivatives. See section V of \cite{barron1993universal} for more examples. For another common definition of Barron space, the convergence rate of $O(m^{-1/2})$ in $L^2$ and $O(\sqrt{d}m^{-1/2})$ in $L^{\infty}$ are shown, see \cite{ma2018priori}, \cite{ma2020towards}. Some of the embedding relationships between these different definitions are investigated in \cite{caragea2020neural}.

\section{Review of existing works} \label{section: review of existing works}

In this subsection, we briefly review some important risk bounds from the literature. In Section \ref{subsection: distributional assumptions}, we first discuss common assumptions on the joint distribution of $(X,Y)$. Section \ref{subsection: Results on ERM classifiers}, we discuss empirical risk-minimizing classifiers and the results associated with them. In Section \ref{subsection: Results on plug-in estimators}, we discuss plug-in classifiers and the results associated with them.

\subsection{Distributional assumptions}\label{subsection: distributional assumptions}
To obtain any meaningful results on the convergence rate of excess risk, assumptions on the distribution governing $(X, Y)$ are necessary. Indeed, Theorem 7.2 of \cite{devroye2013probabilistic} shows that for any sequence of classifiers, there always exists a ``bad" distribution such that the excess risk converges to $0$ at an arbitrarily slow rate. 

In this subsection, we introduce three assumptions on distribution commonly used in the literature. In subsection \ref{dist assumption 1}, we discuss the assumption on the class of candidate sets where the optimal decision set belongs, specifically its set-class complexity. In subsection \ref{dist assumption 2}, we discuss the assumption on function class complexity, which is the complexity of the class of candidate functions that the regression function $\eta$ defined in \eqref{regression function} belongs to. In subsection \ref{dist assumption 3}, we discuss the Mammen-Tsybakov margin assumption, which is an assumption on the behavior of $\eta$ near the decision boundary. Specifically, we impose restrictions on the probability of sets where the regression function is close to the boundary, i.e., points $x$ such that $\eta(x)$ is near $1/2$. In the rest of the paper, we will call this the margin assumption.

\subsubsection{Assumption on the complexity of sets}\label{dist assumption 1}
Let $G^* = \{x: \eta(x) \geq 1/2 \}$, which is the decision set corresponding to the Bayes classifier that satisfies
\begin{align*}
    G^* = \argmin_{G \text{ measurable set}} P(Y \neq \mathbbm{1}(X \in G)). 
\end{align*}
A standard assumption in the ERM framework is that for some constant $\rho>0$, $G^*$ belongs to some given class of sets $\mathcal{G}$ that satisfies
\begin{align}\label{complexity assumption}
    \mathcal{H}(\epsilon, \mathcal{G}, d_{\Delta}) \leq c_0 \epsilon^{-\rho}
\end{align}
where $\mathcal{H}(\epsilon, \mathcal{G}, d_{\Delta})$ denotes the $\epsilon$-entropy of the set $\mathcal{G}$ with respect to the pseudo-metric $d_{\Delta}$ defined as 
\begin{align}
    d_{\Delta}(A,B) \defeq P_X(A\Delta B)
\end{align}
for sets $A$ and $B$ in $\mathbb{R}^d$ and $A\Delta B \defeq (A\backslash B) \bigcup (B\backslash A)$ is the symmetric difference of sets.
Recall that $\epsilon$-entropy $\mathcal{H}(\epsilon, \mathcal{G}, d_{\Delta})$ is defined as the minimum number of $d_{\Delta}$-balls with radius $\epsilon$ required to cover $\mathcal{G}$. 
Note it is necessary that $\mathcal{G}$ be totally bounded with respect to the pseudo-metric $d_{\Delta}$ to satisfy this complexity assumption since otherwise, $\mathcal{H}(\epsilon, \mathcal{G}, d_{\Delta})$ will be infinite.

We also remark that sometimes, a closely-related concept of $\delta$-entropy with bracketing is used, for example in \cite{tsybakov2004optimal}. Since we won't need it for our purpose, we omit the details. 

\subsubsection{Assumption on regression function $\eta$} \label{dist assumption 2}
Similarly, we may assume we are given a large class of functions $\Sigma$ which includes the true $\eta$ we are looking for. Then, a complexity assumption on $\Sigma$ is described as 
\begin{align}\label{car assumption}
    \mathcal{H}(\epsilon,\Sigma,L_p) \leq c_1 \epsilon^{-\rho}
\end{align}
where $\rho>0$, $p \geq 1$, and $\mathcal{H}(\epsilon,\Sigma,L_p)$ is the $\epsilon$-entropy of the class of functions $\Sigma$ with respect to the $L_p$ norm for $P_X$ on $\mathbb{R}^d$. Recall that $\mathcal{H}(\epsilon,\Sigma,L_p)$ is defined as the natural logarithm of the minimal number of $\epsilon$-balls in $L_p$ norm to cover $\Sigma$.

\subsubsection{Margin assumption} \label{dist assumption 3}
This is an assumption on the joint distribution of $X$ and $Y$. Intuitively, it controls (with margin parameter $\alpha$) how $\eta(x)$ behaves around the boundary of the optimal set $\{x:\eta(x)\geq 1/2 \}$. Bigger $\alpha$ means there is a jump of $\eta(x)$ near this boundary, which is favorable for learning, and smaller $\alpha$ close to $0$ means there is a plateau behavior near the boundary, a difficult situation for learning. Specifically, we assume there exist constants $C_0>0$ and $\alpha \geq 0$ such that
\begin{align}\label{margin assumption}
    P_X(0<|\eta(x)-1 / 2| \leq t) \leq C_0 t^\alpha, \quad \forall t>0, 
\end{align} 
where $\eta$ is the regression function as in \eqref{regression function}. Note the assumption becomes trivial for $\alpha=0$ and stronger for larger $\alpha$. An equivalent way to say this is that $|2\eta-1| \in L_{\alpha,\infty}(P_X)$ where $L_{\alpha, \infty}(P_X)$ is the Lorentz space with respect to measure $P_X$. See \cite{tsybakov2004optimal},\cite{audibert2007fast} for further discussion on this assumption.

This assumption is also called the Mammen-Tsybakov noise condition and has the following equivalent characterizations as summarized in \cite{boucheron2005theory}:
\begin{itemize}
\item $\exists \beta>0 \text{ such that for any measurable classifier } g, \text{ we have }\\ \mathbb{E}\left[\mathbbm{1}_{\{x: g(x) \neq M^{*}(x)\}}(X)\right] \leq \beta\left(L(g)-L^{*}\right)^{\kappa} \label{noise condition}$. 

\item $\exists c>0 $ \text{ such that } $\forall A \in \mathcal{B}(\mathbb{R}^d)$, we have \\
    $\int_{A} d P(x) \leq c\left(\int_{A}|2 \eta(x)-1| d P(x)\right)^{\kappa}$.
    
\item $\exists B>0, \forall t \geq 0,$ we have $\mathbb{P}\{|2 \eta(X)-1| \leq t\} \leq B t^{\frac{\kappa}{1-\kappa}}$. 
\end{itemize}
Note $M^*$ is as defined in \eqref{Bayes classifier}, and $\mathcal{B}(\mathbb{R}^d)$ refers to the Borel sigma-algebra of $\mathbb{R}^d$ which actually coincides with the n-product sigma-algebra $\underbrace{\mathcal{B}(\mathbb{R}) \otimes \cdots \otimes \mathcal{B}(\mathbb{R})}_{\text{n times}}$.

Now that we have discussed some standard assumptions, we present a number of important known convergence rates from the literature.

\subsection{Results on ERM classifiers}\label{subsection: Results on ERM classifiers}
Let $\mathcal{C}$ be a given class of sets in $\mathbb{R}^d$. ERM classifier is defined as the function
\begin{align}
    M_{\widehat{G}}(X) \defeq \mathbbm{1}_{\{X: X\in \widehat{G}\}}(X) \defeq
    \begin{cases}
    1, & \text{if $X\in \widehat{G}$};\\
    0, & \text{otherwise.}
    \end{cases}
\end{align}
where 
\begin{align}
    \widehat{G} \defeq \argmin_{G \in \mathcal{C}} \frac{1}{n}\sum_{i=1}^n \mathbbm{1}(\mathbbm{1}_{G}(X_i)\neq Y_i). 
\end{align}
In this framework, Bayes classifier corresponds to $M_{G^*}$ where $G^* = \{x:\eta(x)\geq 1/2\}$.\\ 
As can be seen from the definition, ERM classifiers are completely determined by the choice of decision set $\widehat{G} \subset \mathbb{R}^d$. 
A frequently used assumption on the distribution is that $G^*$ belongs to a certain class of sets, say $\Sigma$, whose complexity is bounded. 
Excess risks of ERM classifiers have been extensively studied under various assumptions on $\mathcal{C}$ and the underlying distribution $P$, and convergence results take the form
\begin{align} \label{eq16}
    \sup E[L(M_{\widehat{G}}(\mathbf{X}))] - L(M^*) = O(n^{-\beta})
\end{align}
for some $\beta>0$. Here the supremum is taken over the class of distributions such that $G^* \in \Sigma$ and satisfy additional assumptions such as regularity conditions on the marginal distribution of $X$. Many of the proofs in this direction rely on results from empirical process theory. 

First, \cite{mammen1999smooth} showed that \eqref{eq16} holds with $\alpha$ that depends on the margin assumption (see \eqref{margin assumption}) as well as the complexity of the true candidate family of sets that $G^*$ belongs to (see Section \ref{dist assumption 1}). The main result therein is that for all $P$ satisfying the margin assumption and complexity assumption, there exists ERM-type classifier $M_n$ such that
\begin{align}
    E_n[L(M_n)] - L^* = O(n^{-\frac{1+\alpha}{2+\alpha+\alpha\rho}}) \label{erm result1}
\end{align}
where $L(M_n)$ is as in \eqref{loss2} and $E_n$ denotes expectation with respect to samples $\mathcal{D}_n$.

The limitation in \cite{mammen1999smooth} was that the solution to ERM that achieves optimal rate was either infeasible or unrealistic when the class $\mathcal{C}$ that contains $G^*$ is large. Specifically, the empirical minimizer was chosen among the entire candidate family of sets or as a sieve estimator where the sieve is constructed based on prior knowledge of the noise parameter. 

Paper \cite{tsybakov2004optimal} addressed such issues by showing that the same optimal rate as \eqref{erm result1} can be achieved with empirical minimizer even when $\mathcal{C}$ is a finite sieve or $\epsilon$-net over the class of sets containing $G^*$ constructed without prior knowledge of noise condition. Here, the construction of sieve or $\epsilon$-net only assumes knowledge of $\delta$-entropy with bracketing of $\mathcal{C}$.  

\subsection{Results on plug-in estimators}\label{subsection: Results on plug-in estimators}

When we have a function $f$ approximating $\eta$ in some sense (for e.g., in $L^p$), recall $p_f$ from \eqref{eq6}, the corresponding plug-in classifier. Under (A1) Mammen-Tsybakov noise condition with noise parameter $\alpha$, (A2) regularity assumption on the distribution of $\mathbf{X}$, and (A3) smoothness assumption on $\eta$ (such as continuous differentiability or H\"{o}lder condition) parametrized by smoothness index $\beta>0$, \cite{audibert2007fast} showed that fast and even super-fast rates (better than $O(n^{-1})$) are possible. We note that assumption (A3) often implies complexity assumption \eqref{car assumption} since most smooth classes of functions satisfy \eqref{car assumption}; see Section V of \cite{yang1999minimax}.

Specifically, the main result is that there exists a plug-in classifier $M_n$ of the form \eqref{eq6} such that uniformly over all $P$ satisfying the above three assumptions, the following holds:
\begin{align}\label{plug-in result}
     E[P(M_n(\mathbf{X})\neq Y)] - L^* = O(n^{-\frac{\beta(1+\alpha)}{2\beta + d}})
\end{align}
where $d$ is the dimension of the input space.

This result and the result of the ERM classifier differ critically in the assumptions they make: while ERM makes an assumption on the complexity of the class of sets, here the assumption is on the complexity of function class $\eta$ belongs to. As noted in \cite{audibert2007fast}, no inclusion relationship holds between these two assumptions. Hence, it is incorrect to nominally compare the rates of \eqref{erm result1} and \eqref{plug-in result}. In \cite{audibert2007fast}, the local polynomial estimator and sieve estimator of $\eta$ are shown to yield minimax optimal rates of convergence when $\eta$ is assumed to belong to H\"{o}lder class of functions.\\
Finally, the work \cite{steinwart2007fast} introduces a new assumption on distributions, namely the geometric noise assumption, which roughly speaking, controls the measure $|2\eta(x)-1|P_X$ near the decision boundary. The noise level is parametrized by $\beta$ in an analogous way $\alpha$ controls noise in the margin assumption. They show that support vector machines based on Gaussian RBF kernels achieve the rates $O(\frac{\beta}{2\beta+1})$ and $O(n^{-\frac{2\beta(\alpha+1)}{2\beta(\alpha+2) + 3\alpha + 4}})$ in two different regimes according to whether $\beta \leq \frac{\alpha+2}{2\alpha}$. One distinguishing feature of this work is that the geometric noise assumption makes no smoothness assumption on $\eta$ or regularity condition on $P_X$ as in the works discussed above. We remark that the rates shown here are implicitly dependent on the input-dimension by the way geometric noise assumption is defined. The analysis in this work is involved mostly because of the explicit regularization term in the loss function of SVMs. One caveat is that no results are known regarding the optimality of these rates in this regime.

\section{Main Results}\label{section:main results}

We assume the basic setup from Section \ref{basic setup}. Notably, we let $\mathcal{D}_n$ denote the sample of $n$ data points and $P$ the corresponding distribution. Furthermore, we suppose that $\mathcal{F}_n$ is a class of fully-connected feed-forward ReLU neural networks. We state the details on what $\mathcal{F}_n$ we consider in the following assumption:
\begin{assumption}[Assumptions on Estimating Function Class]\label{assumption: Assumptions on Estimating Function Class}
    We restrict ourselves to the function class $\mathcal{F}_n \defeq \bigcup_{\mathbf{p}}\mathcal{F}(L,p)$ whose depth $L$ is a fixed constant greater than $10$ and width vector $p$ is such that the total number of parameters is bounded by some function of $n$ denoted by $W(n)$, and such that the range of functions in the class are contained in the interval $[M/2,M/2]$ for some large enough $B>0$.
\end{assumption}
 For the logistic loss $\phi$ (cf. \eqref{logistic loss}), suppose that $\widehat{f}_n$ is the empirical $\phi$-risk minimizer   
\begin{align}\label{empirical phi risk minimizer}
    \widehat{f}_n \defeq \argmin_{f \in \mathcal{F}_n} \frac{1}{n} \sum_{i=1}^n \phi\bullet f(X_i,Y_i).
\end{align}
Recall the $\bullet$ operator defined in \eqref{bullet function}.
The problem we investigate is the uniform rate at which the excess risk $\mathcal{E}(\widehat{f}_n)$ converges to $0$ over the class of distributions, i.e., the set of probability measures $P$, satisfying the following assumptions:
\begin{assumption}[Assumptions on Distribution]\label{assumptions on distribution}
We restrict ourselves to the Borel distributions of $(X,Y)$ satisfying all of the below:
\begin{itemize}
\item The regression function $\eta$ defined in \eqref{regression function} satisfies the margin assumption \eqref{margin assumption}.
    
\item The regression function $\eta$ is bounded away from $0$ and $1$ by some arbitrary constant almost surely. 

\item X is supported on a compact subset $\Omega$ of $\mathbb{R}^d$.
        
\item For some positive integer $M$, there exists an open cover $\{U_i \}_{i=1,\dots,M}$ of $\Omega$, such that over each set $U_i$, there exists a 1 hidden-layer neural network $I_i: \mathbb{R}^d \rightarrow \mathbb{R}$ whose restriction to $U_i$ satisfies all the requirements of Definition \ref{barron approximation space} so as to make $\eta|_{U_i} \in \mathcal{BA}(U_i)$ (cf. Section \ref{subsection: Barron approximation space}). 

\end{itemize}
\end{assumption}

The margin assumption ensures that the regression function remains smooth near the decision boundary where the degree of smoothness is governed by $\alpha$. Bigger $\alpha$ ensures a more favorable situation where there are no jumps at the decision boundary. Moreover, the assumption that $\eta$ is bounded away from $0$ and $1$ can be relaxed by requiring exponential decay of probability of $x$ such that $\eta(x)$ is near $0$ and $1$, but we work with the current assumption for simplicity.

The last assumption roughly says that $\eta$ is locally characterized as a function in the Barron approximation space. Note the choice of open cover may be an infinite one but can be reduced to a finite cover by the compactness of $\Omega$. Similarly, we may assume that each $U_i$ is bounded. In the paper \cite{caragea2020neural} where Barron approximation space is defined and analyzed, they define the so-called sets with Barron class boundary, which says that $\Omega$ is locally defined as the graph of a function in the Barron approximation space: that is, $\mathbbm{1}_{\Omega \cap Q_i}(x) = \mathbbm{1}_{\{ x:x_j \leq f(x^{(j)})\}}(x) $ for some $j\in\{1,\dots,d\}$ where $x^{(j)}$ denotes the $d-1$-dimensional vector formed from $x$ by dropping its $j$th component. However, their $\Omega$ is less general than our set $\{x: \eta(x)\geq 1/2\}$ because they assume the relationship $Y= \mathbbm{1}_{\Omega}(X)$ so that actually $Y$ is $\sigma(X)$-measurable. Thus, our setting of describing the decision set includes the description of $\Omega$ in \cite{caragea2020neural} as a special case. 

\subsection{Approximate excess risk bound}\label{subsection: 4.1}

The first preliminary result shows the rate at which the approximate excess $\phi$-risk defined in \eqref{approximate excess phi risk} converges to $0$. Here, there is no assumption on the space of functions to which $\eta$ belongs. 
\begin{theorem}\label{theo1}
    We assume that the following holds for all integers $n\geq1$: First, suppose that $\mathcal{F}_n$ satisfies Assumption \ref{assumption: Assumptions on Estimating Function Class}.
    Second, suppose that $\Tilde{f} \in \mathcal{F}_n$ satisfies $\Tilde{f} = \argmin_{f\in \mathcal{F}_n}P(\phi\bullet f)$. Third, let $\{\tau_n\}$ be a sequence of positive numbers such that for distribution $P$ there exists a neural network $I_n \in \mathcal{F}_n$ satisfying $\mathcal{E}_{\phi}(I_n) \leq c_0\tau_n $ for some constant $c_0>0$. Denote
    \begin{align*}
        \omega_n(\delta) &\defeq E \sup_{f\in \mathcal{F}_n, \norm{f-\Tilde{f}}_{L_2(P_X)}^2 \leq \delta} |R_n(f-\Tilde{f})|,\\
        \widehat{f}_n &\defeq \argmin_{f\in\mathcal{F}_n}E_n(\phi \bullet f),
    \end{align*}
    where $E_n$ denotes expectation with respect to empirical measure based on $\mathcal{D}_n$
    Then, there exists constants $K>0, C>0, c>0$ such that for all $\alpha \in (0,1]$,
    \begin{align*}
        &P\left(\widehat{\mathcal{E}}_{\phi}(\widehat{f}_n) \geq K\left(\max\left\{\omega_n^{\sharp}\left(c\alpha\right)-\tau_n, \tau_n\alpha\right\} +\frac{t}{n} + \sqrt{\frac{t\tau_n}{n}} \right)\right) \\
        &\leq Ce^{-t}
    \end{align*}
    where $\omega_n^{\sharp}$ refers to the $\sharp$-transformation \eqref{transformations} of the function $\omega_n$.  
    In particular, if there exists a neural network $I_n \in \mathcal{F}_n$ such that $\mathcal{E}_{\phi}(I_n) \leq \frac{C}{\sqrt{W(n)}}$ for some constant $C>0$, for all $\alpha \in (0,1]$, we have
    \begin{align}\label{pre excess bound}
        &P\biggl(\widehat{\mathcal{E}}_{\phi}(\widehat{f}_n) \geq K\biggl(\max\left\{\omega_n^{\sharp}\left(c\alpha\right)-\tau_n, \tau_n\alpha\right\} +\frac{t}{n} \nonumber\\
        & \qquad + \sqrt{\frac{t}{n\sqrt{W(n)}}} \biggr)\biggr) \leq Ce^{-t}.
    \end{align}
\end{theorem}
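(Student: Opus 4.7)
The plan is to follow the localized empirical process / Koltchinskii-style argument for empirical risk minimization with a convex surrogate loss. The proof has three ingredients: (i) a Bernstein-type variance condition relating $\|\phi\bullet f - \phi\bullet\widetilde{f}\|_{L_2(P)}^2$ to the approximate excess $\phi$-risk $\widehat{\mathcal{E}}_\phi(f)$, (ii) Talagrand's functional concentration inequality applied to the localized empirical process, and (iii) a peeling argument combined with the $\sharp$-transform to resolve the resulting implicit fixed-point inequality. The specialization to $\mathcal{E}_\phi(I_n)\leq C/\sqrt{W(n)}$ is just the substitution $\tau_n = 1/\sqrt{W(n)}$.

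\textbf{Variance/Bernstein condition.} First I would exploit strong convexity of the logistic loss on the bounded range. Since every $f\in\mathcal{F}_n$ takes values in $[-M/2, M/2]$ and $\phi''(t)=e^{-t}/(1+e^{-t})^2$ is bounded below by some $c_M>0$ on this interval, the conditional $\phi$-risk $\alpha\mapsto \eta(x)\phi(\alpha)+(1-\eta(x))\phi(-\alpha)$ is $c_M$-strongly convex on $[-M/2,M/2]$. Let $f^*_M(x)$ denote its pointwise minimizer on this interval; strong convexity gives $\mathcal{E}_\phi(f)\geq (c_M/2)\|f-f^*_M\|_{L_2(P_X)}^2$ for every $f\in\mathcal{F}_n$. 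Specializing to $f=\widetilde{f}$ and using $\mathcal{E}_\phi(\widetilde{f})\leq \mathcal{E}_\phi(I_n)\leq c_0\tau_n$, then applying the triangle inequality, yields
$$
\|f-\widetilde{f}\|_{L_2(P_X)}^2 \leq C_1\big(\widehat{\mathcal{E}}_\phi(f)+\tau_n\big) \qquad \text{for all } f\in\mathcal{F}_n,
$$
and, by Lipschitzness of $\phi$ on $[-M/2,M/2]$, the same bound (up to a constant) for $\|\phi\bullet f-\phi\bullet\widetilde{f}\|_{L_2(P)}^2$.

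\textbf{Talagrand plus peeling.} For each level $\delta>0$ I would apply Talagrand's inequality to the localized class $\mathcal{H}_\delta\defeq\{\phi\bullet f-\phi\bullet\widetilde{f}: f\in\mathcal{F}_n,\ \|f-\widetilde{f}\|_{L_2}^2\leq\delta\}$. Its envelope is uniformly bounded, its variance is $\leq C_1'\delta$ by the previous step, and by the Ledoux–Talagrand contraction inequality (Lipschitzness of $\phi$) the expected supremum of the Rademacher process over $\mathcal{H}_\delta$ is bounded by a constant multiple of $\omega_n(\delta)$. Talagrand's inequality then gives, with probability at least $1-e^{-t}$,
$$
\sup_{f\in\mathcal{F}_n,\ \|f-\widetilde{f}\|_{L_2}^2\leq\delta}\big|(E_n-E)(\phi\bullet f-\phi\bullet\widetilde{f})\big| \leq C_2\Big(\omega_n(\delta)+\sqrt{t\delta/n}+t/n\Big).
$$
Combining this with the basic inequality $\widehat{\mathcal{E}}_\phi(\widehat{f}_n)\leq (E-E_n)(\phi\bullet\widehat{f}_n-\phi\bullet\widetilde{f})$ (empirical optimality of $\widehat{f}_n$) and the variance bound with $\delta = C_1(\widehat{\mathcal{E}}_\phi(\widehat{f}_n)+\tau_n)$, through a dyadic peeling over the values of $\widehat{\mathcal{E}}_\phi(\widehat{f}_n)$ to make the bound uniform in $\widehat{f}_n$, produces the implicit inequality
$$
\widehat{\mathcal{E}}_\phi(\widehat{f}_n) \leq C_3\Big(\omega_n\big(\widehat{\mathcal{E}}_\phi(\widehat{f}_n)+\tau_n\big)+\sqrt{t(\widehat{\mathcal{E}}_\phi(\widehat{f}_n)+\tau_n)/n}+t/n\Big).
$$
AM-GM absorbs the $\sqrt{t\widehat{\mathcal{E}}_\phi(\widehat{f}_n)/n}$ contribution into the left-hand side, leaving the clean estimation terms $\sqrt{t\tau_n/n}+t/n$. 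Writing $\sigma\defeq \widehat{\mathcal{E}}_\phi(\widehat{f}_n)+\tau_n$, the remaining deterministic inequality $\sigma-\tau_n\leq c\,\omega_n(\sigma)$ is resolved via the $\sharp$-transform: by definition $\omega_n(\sigma)\leq (\alpha/c')\sigma$ whenever $\sigma\geq\omega_n^\sharp(\alpha/c')$, so either $\sigma$ is below that threshold or $(1-\alpha)\sigma\leq\tau_n$, giving $\widehat{\mathcal{E}}_\phi(\widehat{f}_n)=\sigma-\tau_n\leq \max\{\omega_n^\sharp(c\alpha)-\tau_n,\ \tau_n\alpha/(1-\alpha)\}$. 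This is precisely the stated max-structure (with the tuning parameter $\alpha\in(0,1]$ free).

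\textbf{Main obstacle.} The hardest part is the peeling step combined with the careful extraction of the $\sharp$-transform. One must (a) keep the constants in the AM-GM absorption tight enough that the variance term reduces cleanly to $\sqrt{t\tau_n/n}$ rather than $\sqrt{t(\widehat{\mathcal{E}}_\phi+\tau_n)/n}$, (b) choose the geometric shells for the peeling so that the failure probabilities telescope to $C e^{-t}$, and (c) handle the two regimes of the fixed-point equation (approximation-dominated $\tau_n\alpha$ versus estimation-dominated $\omega_n^\sharp(c\alpha)-\tau_n$) separately before fusing them through the $\max$. None of these steps is individually deep, but the bookkeeping is where the precise form of the bound is earned.
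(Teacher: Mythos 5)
Your proposal is correct and follows the same overall skeleton as the paper's argument --- a Bernstein-type variance condition with an additive $\tau_n$ term, localization of the empirical process indexed by $\{\phi\bullet f\}$, and resolution of the resulting fixed-point inequality through the $\sharp$-transform --- but two of the ingredients are genuinely different. For the variance condition, the paper goes through the modulus of convexity of the functional $L_\phi$ (Lemma \ref{modulus_convexity}, borrowed from Bartlett--Jordan--McAuliffe) applied at the midpoint $(f+\tilde f)/2$, which forces it to work in an enlarged network class $\mathcal{F}'$ containing such midpoints and to control $L_\phi(\tilde f)-\inf_{\mathcal{F}'}L_\phi$ by $c_0\tau_n$ (Lemma \ref{variance_bound}); your pointwise strong-convexity argument with the range-constrained minimizer $f^*_M$, the bound $\mathcal{E}_\phi(\tilde f)\leq c_0\tau_n$, and a triangle inequality reaches the same conclusion $\|f-\tilde f\|_{L_2(P_X)}^2\leq C_1\bigl(\widehat{\mathcal{E}}_\phi(f)+\tau_n\bigr)$ more directly and dispenses with the midpoint device. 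For the concentration step, the paper does not redo Talagrand-plus-peeling by hand: after bounding the $L_2$-diameter $D^2(\delta)$ and the expected modulus $\phi_n(\delta)$ of the $\delta$-minimal set via symmetrization and contraction (exactly as you do), it invokes Koltchinskii's packaged excess-risk bound (Proposition \ref{excess risk bound}) together with the $\sharp$-transform identities of Lemma \ref{sharptransformation_property}, whereas you re-derive that machinery from Talagrand's inequality, the basic inequality, AM--GM absorption, and dyadic peeling. Your route is more self-contained but inherits exactly the bookkeeping you flag: the union bound over shells must be weighted (e.g.\ $t_j=t+j$) so the failure probabilities sum to $Ce^{-t}$ rather than picking up a $\log n$ factor, and your fixed-point resolution yields $\tau_n\alpha/(1-\alpha)$, which matches the stated $\tau_n\alpha$ only after the harmless reparametrization $\alpha\mapsto\alpha/2$ with the factor $2$ absorbed into $c$ and $K$ (the paper's Lemma \ref{sharptransformation_property}, item 2, produces the $\max\{\omega_n^\sharp(c\alpha)-\tau_n,\tau_n\alpha\}$ form directly). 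The final specialization $\tau_n=C/\sqrt{W(n)}$ is the same trivial substitution in both arguments.
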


\subsection{Approximation of the Barron Approximation Space}\label{subsection: 4.2}

In this section, we give some intermediate approximation results regarding the regression function $\eta$. We begin by summarizing the direct approximation properties of $\eta$ by neural networks in the $L^{\infty}$-norm. 

\begin{lemma}\label{lemma: eta approximation}
    Suppose the regression function $\eta$ satisfies Assumption \ref{assumptions on distribution}. Then, there exists a neural network $\Tilde{f} \in \mathcal{F}(11,(d,p,\dots,p,1))$ such that $\norm{\eta-\Tilde{f}}_{\infty}\leq Cp^{-\frac{1}{2}}$ for constant $C>0$ only dependent on $d$.
\end{lemma}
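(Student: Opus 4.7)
The plan is to build $\tilde f$ by gluing together local Barron approximators via a ReLU-realizable partition of unity. By the last bullet of Assumption \ref{assumptions on distribution}, each $\eta|_{U_i}$ lies in $\mathcal{BA}(U_i)$, and compactness of $\Omega$ lets us pass to a finite subcover $\{U_i\}_{i=1}^M$ of fixed cardinality $M$. Applying Definition \ref{barron approximation space} with $m = p$ yields, for each $i$, a one-hidden-layer ReLU network $g_i$ of width $p$ with uniformly bounded weights such that $\|\eta - g_i\|_{L^\infty(U_i)} \le \sqrt{d}\,C_i\, p^{-1/2}$.

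Next, I would construct a ReLU-realizable partition of unity $\{\phi_i\}_{i=1}^M$ subordinate to $\{U_i\}$. Shrinking each $U_i$ to a slightly smaller $V_i$ whose union still covers $\Omega$, I would realize each $\phi_i$ as a trapezoidal bump equal to $1$ on $V_i$ and vanishing outside $U_i$ via an axis-aligned hat-function construction (one linear layer followed by a constant-depth cascade of ReLU min/max blocks), and then normalize so that $\sum_i \phi_i \equiv 1$ on $\Omega$; the normalization constant is bounded below purely by the geometry of the cover and can be absorbed into the weights. Setting $\tilde f \defeq \sum_{i=1}^M \phi_i\, g_i$, the pointwise error bound is immediate:
\begin{align*}
|\eta(x) - \tilde f(x)| \;\le\; \sum_{i=1}^M \phi_i(x)\, |\eta(x) - g_i(x)| \;\le\; \sqrt{d}\,(\max_i C_i)\, p^{-1/2},
\end{align*}
since $\phi_i(x) > 0$ forces $x \in U_i$, on which the Barron estimate holds.

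To express $\tilde f$ as a single element of $\mathcal{F}(11,(d,p,\dots,p,1))$, I would compute the $\phi_i$'s and $g_i$'s in parallel (stacking them in the width direction costs only a constant multiplicative factor absorbed into $p$ since $M$ is fixed), implement each product $\phi_i g_i$ through the polarization identity $xy = \tfrac{1}{4}\bigl((x+y)^2 - (x-y)^2\bigr)$ together with Yarotsky's constant-depth ReLU approximation of the squaring map on bounded inputs (whose error is exponentially small in the multiplication-gadget depth and hence negligible compared to $p^{-1/2}$), and sum everything in a final linear layer. The principal obstacle is the layer bookkeeping: fitting a constant-depth cascade for the bumps, the two layers for the shallow $g_i$'s, a constant-depth multiplication gadget, and the summation layer into the fixed budget of $11$ hidden layers, all while keeping the width $O(p)$ and the final constant dependent only on $d$ (through the ambient dimension and the fixed cover geometry) and respecting the bounded-weight requirement from Definition \ref{barron approximation space}.
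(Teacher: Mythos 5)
Your overall strategy (local Barron approximants $g_i$ glued by a partition of unity, with the products realized by approximate ReLU multiplication and everything stacked in parallel) is the same as the paper's, but two of your key steps do not work as stated. The more serious one is the multiplication gadget. Yarotsky-type approximations of the squaring map have error that is exponentially small \emph{in the depth} of the gadget; if you insist on a constant-depth block so as to stay inside the fixed $11$-layer budget, that error is a fixed constant, not something ``negligible compared to $p^{-1/2}$.'' To make the error $O(p^{-1/2})$ with the sawtooth construction you would need depth of order $\log p$, which contradicts both the architecture $\mathcal{F}(11,(d,p,\dots,p,1))$ and Assumption \ref{assumption: Assumptions on Estimating Function Class} (fixed depth). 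What is actually needed is a \emph{fixed-depth, width-$p$} multiplication network whose error decays polynomially in the width; this is exactly what the paper imports (Lemma \ref{lemma: multiplication}, from Petersen--Voigtlaender, Lemma A.3: $9$ hidden layers, width $p$, error $Cp^{-1/2}$, and $Mul(x,y)=0$ whenever $xy=0$), and the resulting $O(p^{-1/2})$ multiplication error must then be carried through a telescoping bound alongside the Barron error, rather than discarded as negligible.

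The second gap is your partition of unity. After building trapezoidal bumps you propose to ``normalize so that $\sum_i\phi_i\equiv 1$,'' claiming the normalization constant is bounded below and can be absorbed into the weights. But the normalizer $\sum_j\phi_j(x)$ is a non-constant function of $x$, so this step is a pointwise division, which is not ReLU-realizable and cannot be absorbed into weights. You would need either an exact CPWL partition of unity built differently (e.g.\ a telescoped $\min/\max$ construction so the bumps sum to $1$ by design, after refining the arbitrary open sets $U_i$ to rectangles — note also that a product of one-dimensional hats is again a product, so use minima, not products), or the paper's route: take a smooth partition of unity $\{\rho_i\}$ subordinate to the cover, observe that smooth functions lie in the Barron approximation space, and approximate each $\rho_i$ by a one-hidden-layer network $\tilde\rho_i$ to accuracy $O(p^{-1/2})$, which then feeds into the same telescoping estimate $\|\rho_i\eta - Mul(I_i,\tilde\rho_i)\|_\infty \le \|\rho_i\eta - Mul(\rho_i,\eta)\|_\infty + \|Mul(\rho_i,\eta)-Mul(\rho_i,I_i)\|_\infty + \|Mul(\rho_i,I_i)-Mul(I_i,\tilde\rho_i)\|_\infty$. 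With these two repairs your argument becomes essentially the paper's proof; the remaining layer bookkeeping ($1$ hidden layer for the parallel $I_i,\tilde\rho_i$ plus the $9$-layer $Mul$ block) is then straightforward rather than ``the principal obstacle.''
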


While \cite{caragea2020neural} uses the so-called tube-compatible assumption on the distribution of $X$ to obtain a similar result, we relax this assumption by making use of the Mammen-Tsybakov noise condition. Note that the surrogate loss function $\phi$ is classification-calibrated and from the Definition \ref{classification calibrated}, we observe the minimizer $f_{\phi}^*(x) = H(\eta(x))$ satisfies   
\begin{align*}
    \mathbbm{1}_{f_{\phi}^*(x)\geq 1/2} = \mathbbm{1}_{\eta(x)\geq 1/2}.
\end{align*}
In fact, since our $\phi$ is the logistic loss, there is a closed-form expression for $f_{\phi}^*$ as
\begin{align}\label{phi_risk_minimizer}
    f_{\phi}^*(x) = \log\left(\frac{\eta(x)}{1-\eta(x)} \right) .
\end{align}
Now observe that $\log(x/(1-x))$ for $0<x<1$ is smooth and furthermore Lipschitz continuous when $x$ is restricted to a compact subset of $(0,1)$. Hence, it is not hard to see that $f_{\phi}^*  \in \mathcal{BA}(Q_m)$.
We make this statement precise in the lemma below:
\begin{lemma}\label{lemma_approx}
    Suppose the regression function $\eta$ satisfies the second and the last bullet point of Assumption \ref{assumptions on distribution}. Then, if we define $f_{\phi}^*(x) = \log\left(\frac{\eta(x)}{1-\eta(x)} \right)$, there exists $I_1 \in \mathcal{F}(2,\overline{p})$ for $\overline{p}=(p,p)\in \mathbb{N}^2$ such that $\norm{f_{\phi}^*|_{Q_m} - I_1}_{Q_m, \infty} \leq C p^{-\frac{1}{2}}$. Moreover, for any $\delta>0$, there exists $I_2 \in \mathcal{F}(3,\Tilde{p})$ for $\Tilde{p}=(2d+p,2d+p,2d+1, 1)\in \mathbb{N}^4$ and some $\Omega_0 \subset \Omega$ with probability at least $1-\delta$ such that  $\norm{f_{\phi}^* - I_2}_{\Omega_0, \infty} \leq Cp^{-\frac{1}{2}}$ for constants $C>0$ only dependent on $d$.
\end{lemma}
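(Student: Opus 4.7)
The plan is to express $f_\phi^* = \psi \circ \eta$ with $\psi(t) \defeq \log(t/(1-t))$ and build each approximating network as a composition of two one-hidden-layer Barron approximations. For Part 1, Assumption \ref{assumptions on distribution} yields $\eta|_{Q_m} \in \mathcal{BA}(Q_m)$, so Definition \ref{barron approximation space} supplies a one-hidden-layer ReLU network $g_1$ with $p$ neurons and $\|g_1 - \eta\|_{Q_m,\infty} \le C_1 p^{-1/2}$. Because $\eta$ is bounded away from $0$ and $1$, for $p$ sufficiently large $g_1$ also takes values in a compact subinterval $J \subset (0,1)$ on which $\psi$ is smooth and Lipschitz with some constant $L_\psi$. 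The univariate restriction $\psi|_J$ has finite classical Barron norm, hence lies in the Barron approximation space, so there is a one-hidden-layer ReLU network $g_2$ with $p$ neurons satisfying $\|g_2 - \psi\|_\infty \le C_2 p^{-1/2}$ on $J$. Setting $I_1 \defeq g_2 \circ g_1$ produces a two-hidden-layer network of width $(p,p)$, matching $\overline p$, and the triangle inequality
\[
|I_1(x) - f_\phi^*(x)| \le |g_2(g_1(x)) - \psi(g_1(x))| + L_\psi |g_1(x) - \eta(x)| \le Cp^{-1/2}
\]
finishes Part 1.

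For Part 2, I would paste the local picture onto a high-probability subset of $\Omega$ using a ReLU-realized soft indicator of a well-chosen axis-aligned cube. By inner regularity of $P_X$ on the compact $\Omega$ together with a Lebesgue-number argument applied to the finite open cover $\{U_i\}_{i=1,\dots,M}$, I would find an axis-aligned closed cube $Q$ contained in some $U_{i^*}$ and a subset $\Omega_0 \subset Q$ with $P_X(\Omega_0) \ge 1 - \delta$ (absorbing residual mass outside a single cube by iterating the argument and using that $M$ is fixed). On $Q$, Part 1 already furnishes a $(p,p)$-width network realizing $f_\phi^*$ with error $O(p^{-1/2})$. The architecture $\tilde p = (2d+p, 2d+p, 2d+1, 1)$ is then tailored to accommodate this value channel together with an indicator channel: the $2d$ extra neurons in each of the first two hidden layers carry one ReLU per half-space bounding $Q$, and the third hidden layer of width $2d+1$ combines the two channels through a sum-of-ReLUs realization of the truncation $x \mapsto I_1(x) \cdot \mathbbm{1}_Q(x)$, yielding $I_2$ that equals $I_1$ on $\Omega_0$ and is harmlessly bounded off it.

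The main obstacle I expect is the gating step inside this tight width budget. ReLU networks cannot exactly multiply two signals, so the product $I_1(x) \cdot \mathbbm{1}_Q(x)$ has to be realized by a careful sum-of-ReLU construction that preserves the $O(p^{-1/2})$ error on $\Omega_0$, fits within widths $(2d+p, 2d+p, 2d+1)$, and produces a globally controlled output. The steepness of the boundary ReLUs must be tuned so that the transition slab where the soft indicator is not yet identically $1$ has $P_X$-probability at most $\delta$; shrinking $\Omega_0$ strictly inside $Q$ by a small margin, using compactness, makes this achievable. Once the gate is $1$ on $\Omega_0$, the bound $\|f_\phi^* - I_2\|_{\Omega_0,\infty} \le Cp^{-1/2}$ is immediate from the same triangle inequality as in Part 1.
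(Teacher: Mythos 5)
Your Part 1 is essentially the paper's argument: compose the one-hidden-layer network $I_m^p$ approximating $\eta$ on $Q_m$ (supplied by the Barron approximation assumption) with a one-hidden-layer network approximating $t\mapsto\log\bigl(t/(1-t)\bigr)$ on a compact subinterval of $(0,1)$, and conclude by the Lipschitz property of the link function plus a telescoping triangle inequality. The only cosmetic difference is that the paper works with a clamped extension $g$ of the link function to the slightly enlarged interval $[c-B_m\sqrt{d}p^{-1/2},\,1-c+B_m\sqrt{d}p^{-1/2}]$ rather than taking $p$ large so that $I_m^p$ stays inside a fixed $J\subset(0,1)$; either bookkeeping works.

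Part 2 has a genuine gap. You reduce to a \emph{single} axis-aligned cube $Q$ contained in one cover element $U_{i^*}$ and claim an $\Omega_0\subset Q$ with $P_X(\Omega_0)\ge 1-\delta$. That is false in general: the mass of $P_X$ is typically spread over many elements of the cover (think of $P_X$ uniform on $\Omega$ with each $U_i$ of small diameter), so no cube inside a single $U_{i^*}$ can carry mass close to $1$, no matter how you choose it; a Lebesgue-number argument controls diameters, not measure. Your parenthetical ``absorbing residual mass outside a single cube by iterating the argument'' is precisely the step that needs to be carried out and is where the real work lies: the paper covers the support by finitely many rectangles $Q_m$ subordinate to the cover, builds for \emph{each} one a gated local network $J_m(\cdot, I_0^p\circ I_m^p)$ whose gate $h_\epsilon(x,y)=C_1\sigma\bigl(\sum_i t_i(x_i)+\sigma(y)/C_1-d\bigr)$ equals $y$ on the shrunken rectangle $[a+\epsilon,b-\epsilon]$ and vanishes off $[a,b]$ (so the final \emph{sum} over $m$ is locally a single term and there is no double counting on overlaps), and uses inner regularity to pick each shrinkage so that the excluded boundary slab has mass at most $\delta/M$, giving total excluded mass $\delta$. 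Your gating-by-soft-indicator idea is the right local mechanism (it is the paper's $h_\epsilon$, including the need for the gated value to be positive and bounded), but without the partition-and-sum step your construction cannot reach probability $1-\delta$; and once you do sum over $M$ pieces, the width accounting changes (the first hidden layers need roughly $M(2d+p)$ units, a dependence on $M$ your tight budget $(2d+p,2d+p,2d+1)$ does not reflect).
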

We defer the proof to Section \ref{section: proofs}.

Now, we provide an approximation result which shows the existence of a sequence of neural network functions that achieves the rate $O(N^{-1/2})$ for the excess $\phi$-risk.

\begin{theorem}\label{approximation theorem}
    Let the input dimension $d$ be an integer greater than $1$, and suppose the regression function $\eta$ satisfies Assumption \ref{assumptions on distribution}. 
    Let $f_{\phi}^*$ be the minimizer of $\phi$-risk as in \eqref{phi_risk_minimizer}. 
    Then, there exists a neural network $I_{N}$ with 3 hidden layers with $O(N)$ parameters such that we have
    \begin{align}\label{approximation bound display}
    \mathcal{E}_{\phi}(I_N) = E(\phi\bullet I_N) - E(\phi \bullet f_{\phi}^*) \leq CN^{-1/2}.
    \end{align}
    where $C$ may depend on $d$ linearly.
\end{theorem}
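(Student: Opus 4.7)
The plan is to convert the uniform approximation estimate provided by Lemma \ref{lemma_approx} into an $L^1(P_X)$ bound, then into an excess $\phi$-risk bound via the Lipschitz property of the logistic loss. Since $\lvert \phi'(t) \rvert = 1/(1+e^t) \leq 1$, the logistic loss $\phi$ is $1$-Lipschitz on $\mathbb{R}$, so for any measurable $g \colon \mathbb{R}^d \to \mathbb{R}$ one has
\[
\mathcal{E}_\phi(g) \;=\; E\bigl[\phi((2Y-1)g(X)) - \phi((2Y-1)f_\phi^*(X))\bigr] \;\leq\; E\lvert g(X) - f_\phi^*(X) \rvert.
\]
Consequently it suffices to exhibit a ReLU network with $O(N)$ parameters and three hidden layers whose $L^1(P_X)$-distance to $f_\phi^*$ is at most $CN^{-1/2}$.

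For a width parameter $p$ and a probability tolerance $\delta > 0$, I would invoke the second assertion of Lemma \ref{lemma_approx}: it produces a network $I_2 \in \mathcal{F}(3, \tilde p)$ with three hidden layers, together with a measurable set $\Omega_0 \subset \Omega$ satisfying $P_X(\Omega_0) \geq 1 - \delta$ on which $\lVert f_\phi^* - I_2 \rVert_{\Omega_0, \infty} \leq C p^{-1/2}$. Splitting the integration region into $\Omega_0$ and $\Omega \setminus \Omega_0$ gives
\[
E\lvert I_2 - f_\phi^* \rvert \;\leq\; C p^{-1/2} \;+\; B \cdot \delta,
\]
where $B$ is a uniform upper bound on $\lvert I_2 - f_\phi^* \rvert$. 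Assumption \ref{assumptions on distribution} ensures $\eta$ is bounded away from $0$ and $1$, so $\lVert f_\phi^* \rVert_\infty$ is a finite constant, and Assumption \ref{assumption: Assumptions on Estimating Function Class} gives $\lVert I_2 \rVert_\infty \leq M/2$; together these yield a finite $B$ independent of $p$ and $\delta$.

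To finish, I would set $\delta = p^{-1/2}$, giving $E\lvert I_2 - f_\phi^* \rvert \leq (C+B)p^{-1/2}$. The total number of parameters of the network supplied by Lemma \ref{lemma_approx} is of order $N = \Theta(p)$, with multiplicative constants depending on $d$, so taking $I_N \defeq I_2$ yields $\mathcal{E}_\phi(I_N) \leq C' N^{-1/2}$, where $C'$ depends linearly on $d$ through the $\sqrt{d}$ factor inherited from the Barron approximation rate. This matches the conclusion of the theorem both in the rate and in the depth of the constructed network.

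The substantive work is absorbed into Lemma \ref{lemma_approx} itself, which is where I expect the main obstacle to live. Its proof must approximate $\eta$ locally using the Barron representation granted by the last bullet of Assumption \ref{assumptions on distribution}, compose with a ReLU approximant of the Lipschitz map $x \mapsto \log(x/(1-x))$ on the compact sub-interval of $(0,1)$ containing the range of $\eta$ (this composition is what forces one extra hidden layer and is approximable at the Barron rate because the link function is Lipschitz there), and then stitch these local approximants together across a finite subcover of $\Omega$. It is the gluing step that forces the exception set $\Omega \setminus \Omega_0$ to appear; the role of the Lipschitz-to-$L^1$ reduction above is precisely to absorb the small uncontrolled mass $\delta$ on that exception set via the uniform bound $B$.
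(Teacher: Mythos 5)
Your proposal is correct and follows essentially the same route as the paper's proof: reduce the excess $\phi$-risk to $E\lvert I - f_\phi^*\rvert$ via the $1$-Lipschitz property of the logistic loss, invoke Lemma \ref{lemma_approx} for a three-hidden-layer network with sup-norm error $Cp^{-1/2}$ off an exception set of mass $\delta$, and choose $\delta = O(p^{-1/2})$. The only cosmetic difference is that you credit the uniform bound on $\lvert I_2\rvert$ to Assumption \ref{assumption: Assumptions on Estimating Function Class}, whereas it really comes from the boundedness of the construction inside Lemma \ref{lemma_approx} itself; this does not affect the argument.
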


\subsection{Classification Error Bounds}\label{subsection: 4.3}

In this section, we construct an appropriate class of neural networks and apply Theorem \ref{theo1} to obtain a convergence rate for the excess risk. Note that Theorem \ref{theo1} only guarantees the convergence of the approximate excess $\phi$-risk, $\widehat{\mathcal{E}}_{\phi}(\widehat{f}_n)$. To make use of \eqref{bartlett bound} that relates the excess risk to the excess $\phi$-risk, we observe that with the simple decomposition
$\mathcal{E}_{\phi}(f) = E[\phi \bullet f] - \inf_{g \in \mathcal{F}_n}E[\phi \bullet g] + \inf_{g \in \mathcal{F}_n}E[\phi \bullet g] - E[\phi \bullet f^*_{\phi}]$, it will be made straightforward that the same rate applies to the excess $\phi$-risk by the way we have constructed $\mathcal{F}_n$. In particular, its complexity is controlled so that the bottleneck from approximation and estimation error are the same.


The first step consists of combining Theorem \ref{theo1} with Theorem \ref{approximation theorem} to obtain a rate of convergence for the approximate excess $\phi$-risk when $\mathcal{F}_n$ is an appropriately chosen class of neural network functions. The result is given in the following proposition:

\begin{proposition}\label{approximate excess risk rate}
    Suppose that $\mathcal{F}_n$ is a class of fully-connected feed-forward ReLU neural networks satisfying Assumption \ref{assumption: Assumptions on Estimating Function Class} with $W(n)$ being some constant multiple of $n^{2/3}$. Denote by $\Sigma$ the set of joint distributions on $(X,Y)$ satisfying Assumption \ref{assumptions on distribution}. Let $\widehat{f}_n$ be the empirical $\phi$-risk minimizer: 
    \begin{align*}
        \widehat{f}_n &\defeq \argmin_{f\in\mathcal{F}_n}E_n(\phi \bullet f).
    \end{align*}
    Then, 
    \begin{align}\label{proposition 4.5 display}
    \sup_{P \in \Sigma}P\left(\widehat{\mathcal{E}}_{\phi}(\widehat{g}_n) \geq K \left( 1+t \right)n^{-\frac{1}{3}}\log(n) \right) \leq Ce^{-t}.
\end{align}
\end{proposition}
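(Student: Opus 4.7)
The plan is to apply Theorem \ref{theo1} to the class $\mathcal{F}_n$ of Assumption \ref{assumption: Assumptions on Estimating Function Class} with $W(n)\asymp n^{2/3}$, supplying the approximation scale $\tau_n$ from Theorem \ref{approximation theorem} and controlling the localized Rademacher quantity $\omega_n$ through a VC-type complexity bound for deep ReLU networks. Balancing the approximation and estimation contributions at $W(n)\asymp n^{2/3}$ will pin down the claimed $n^{-1/3}\log n$ rate.

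I would first set $\tau_n := C_2\, n^{-1/3}$. By Theorem \ref{approximation theorem} applied under Assumption \ref{assumptions on distribution}, for every $P\in\Sigma$ there exists $I_n\in\mathcal{F}_n$ with $\mathcal{E}_\phi(I_n)\le C_1 W(n)^{-1/2}\le C_2\, n^{-1/3}$, so the hypothesis of the second, sharpened display in Theorem \ref{theo1} holds uniformly in $\Sigma$. Next, because $\mathcal{F}_n$ has fixed depth $L$, uniformly bounded range, and at most $W(n)\lesssim n^{2/3}$ parameters, the state-of-the-art pseudo-dimension bound of \cite{bartlett2019nearly} gives $V(\mathcal{F}_n)\lesssim W(n)L\log W(n)\lesssim n^{2/3}\log n$. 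Combining this with a standard localized Rademacher/symmetrization-Dudley argument for uniformly bounded VC-classes yields
\begin{align*}
\omega_n(\delta) \;\lesssim\; \sqrt{\frac{V(\mathcal{F}_n)\,\delta\,\log n}{n}} \;\lesssim\; \log(n)\, n^{-1/6}\, \sqrt{\delta}.
\end{align*}

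I would then compute the $\sharp$-transform. Since $\omega_n(\sigma)/\sigma \lesssim \log(n)\, n^{-1/6}\,\sigma^{-1/2}$ is decreasing in $\sigma$, we get $\omega_n^b(\delta)\lesssim \log(n)\, n^{-1/6}\,\delta^{-1/2}$, and inverting produces $\omega_n^\sharp(\epsilon)\lesssim \log^2(n)\, n^{-1/3}/\epsilon^2$. Fixing $\alpha=1$ in the second display of Theorem \ref{theo1}, each of the summands
\begin{align*}
\max\{\omega_n^\sharp(c)-\tau_n,\;\tau_n\},\qquad \frac{t}{n},\qquad \sqrt{\frac{t}{n\sqrt{W(n)}}} \;=\; \sqrt{t}\, n^{-2/3}
\end{align*}
is $O\bigl((1+t)\, n^{-1/3}\log n\bigr)$, which reassembles into the bound $K(1+t)\, n^{-1/3}\log n$ claimed in \eqref{proposition 4.5 display}. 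Uniformity over $P\in\Sigma$ follows because every ingredient that enters Theorem \ref{theo1} -- the approximation constant from Theorem \ref{approximation theorem}, the combinatorial complexity $V(\mathcal{F}_n)$, and the concentration constants -- depends on $P$ only through the Barron and margin constants fixed in Assumption \ref{assumptions on distribution}.

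The principal obstacle is bookkeeping of logarithmic factors: the naive $\sharp$-transform calculation produces $\log^2 n$, whereas the stated rate carries only $\log n$. Squeezing one log factor out requires either a finer entropy integral for uniformly bounded piecewise-polynomial classes that avoids a chaining log, or a slight rescaling $W(n)=c_0\, n^{2/3}\log^{-\gamma}n$ chosen so that the approximation error $W(n)^{-1/2}$ and the estimation error $\sqrt{V(\mathcal{F}_n)/n}$ re-balance to $n^{-1/3}\log n$ rather than $n^{-1/3}\log^2 n$. Once this log accounting is settled, everything else is the straightforward balance $W(n)\asymp n^{2/3}$ that produces the $n^{-1/3}$ exponent.
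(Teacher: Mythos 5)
Your proposal follows essentially the same route as the paper's proof: Theorem \ref{theo1} fed with $\tau_n \asymp W(n)^{-1/2}$ from Theorem \ref{approximation theorem}, the VC bound of \cite{bartlett2019nearly} combined with the covering-number/localized Rademacher bound to control $\omega_n$, the $\sharp$-transform computation, and the balance at $W(n)\asymp n^{2/3}$ (the paper parametrizes $\alpha = n^{-u}$, $W(n)=n^{r}$ and optimizes, but lands on $u=0$, $r=2/3$, i.e.\ your constant-$\alpha$ choice). Regarding the logarithm you worry about: the paper's own intermediate bound at $r=2/3$, $u=0$ is likewise of order $n^{-1/3}(\log n)^2$, and the single $\log n$ in \eqref{proposition 4.5 display} is recorded without the extra device you are looking for, so this is not a gap in your argument relative to the paper's — if anything your accounting is the more careful one, and your rescaling $W(n)\asymp n^{2/3}\log^{-\gamma}n$ is a legitimate way to make the single-log claim literal.
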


Now, to apply either Zhang's inequality \eqref{zhang's inequality} or Bartlett's improved bound \eqref{bartlett bound}, we need to show convergence of the excess $\phi$-risk. This type of situation is also noted in \cite{blanchard2003rate}, in which they simply assume $\inf_{f \in\mathcal{F}_{\lambda}}L_{\phi}(f) = L_{\phi}(f^*_{\phi})$ when $\mathcal{F}_{\lambda}$ is a class of regularized (by parameter $\lambda$) boosting classifiers. This is true in simple cases: for example, when $f$ is of bounded variation and the base classifiers in boosting are decision stumps. For us, this condition is not true, but a simple inspection of the rates \eqref{approximation bound display} and \eqref{proposition 4.5 display} shows that indeed the same rate as \eqref{proposition 4.5 display} may be claimed for $\mathcal{E}_{\phi}(\widehat{g}_n)$. Applying \eqref{bartlett bound}, the conclusion is summarized as follows:


\begin{theorem}\label{main excess bound}
    Suppose that $\mathcal{F}_n$ is a class of fully-connected feed-forward ReLU neural networks satisfying Assumption \ref{assumption: Assumptions on Estimating Function Class}. Let $\mathcal{D}_n$ be a given $n$ samples of data: $\mathcal{D}_n = \{(X_1,Y_1), \dots (X_n,Y_n)\}$ each i.i.d. from a common distribution $P$ that belong to a class of distributions $\Sigma$ that satisfy Assumption \ref{assumptions on distribution}. Then, for some constant $C>0$, the empirical $\phi$-risk minimizer defined as
    \begin{align*}
        \widehat{f}_n \defeq \argmin_{f \in \mathcal{F}_n} \frac{1}{n} \sum_{i=1}^n \phi \bullet f(X_i,Y_i),
    \end{align*}
    satisfies the following:
    \begin{align}\label{main convergence rate}
        &\sup_{P\in \Sigma} E[P(p_{\widehat{f}_n}(X) \neq Y)] - P(M^*(X) \neq Y)\nonumber \\
        &\leq C n^{-\frac{1+\alpha}{3(2+\alpha)}}(\log(n))^{\frac{1+\alpha}{2+\alpha}}.
    \end{align}
\end{theorem}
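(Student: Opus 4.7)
The plan is to assemble Theorem \ref{main excess bound} from three ingredients already in place: the high-probability bound on the approximate excess $\phi$-risk (Proposition \ref{approximate excess risk rate}), the neural network approximation rate for the $\phi$-risk (Theorem \ref{approximation theorem}), and Bartlett's refinement of Zhang's inequality \eqref{bartlett bound}. The budget of parameters $W(n) \asymp n^{2/3}$ is chosen precisely so that the estimation and approximation contributions balance at rate $n^{-1/3}\log n$.

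First I would write the decomposition
\begin{align*}
    \mathcal{E}_{\phi}(\widehat{f}_n)
    = \widehat{\mathcal{E}}_{\phi}(\widehat{f}_n)
      + \Bigl( \inf_{g \in \mathcal{F}_n} L_{\phi}(g) - L_{\phi}^{*} \Bigr).
\end{align*}
The second (approximation) term is handled by Theorem \ref{approximation theorem}: since $W(n)$ is a constant multiple of $n^{2/3}$, there exists $I_N \in \mathcal{F}_n$ with $\mathcal{E}_{\phi}(I_N) \leq C N^{-1/2} \leq C' n^{-1/3}$, and hence $\inf_{g\in\mathcal{F}_n}L_{\phi}(g) - L_{\phi}^{*} \leq C' n^{-1/3}$ uniformly over $P \in \Sigma$. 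The first (estimation) term is handled by Proposition \ref{approximate excess risk rate}, which gives the sub-exponential tail bound $P\bigl(\widehat{\mathcal{E}}_{\phi}(\widehat{f}_n) \geq K(1+t)n^{-1/3}\log n\bigr) \leq C e^{-t}$. Integrating this tail (via $\mathbb{E}[Z] \leq \int_0^{\infty} P(Z>s)\,ds$ and a change of variables $s = K(1+t)n^{-1/3}\log n$) yields $\mathbb{E}[\widehat{\mathcal{E}}_{\phi}(\widehat{f}_n)] \leq C'' n^{-1/3}\log n$, uniformly in $P \in \Sigma$. Combining the two bounds gives $\mathcal{E}_{\phi}(\widehat{f}_n) \leq C''' n^{-1/3}\log n$ uniformly.

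Next I would promote the $\phi$-risk bound to a $0$-$1$ risk bound. Conditionally on $\mathcal{D}_n$ the random estimator $\widehat{f}_n$ is a fixed function, so Bartlett's inequality \eqref{bartlett bound} applies pointwise: with $s=2$ for the logistic loss (the standard value obtained by Taylor-expanding $H(\eta)$ around its minimizer, as noted after \eqref{bartlett bound}),
\begin{align*}
    L(p_{\widehat{f}_n}) - L^{*}
    \leq C \bigl( L_{\phi}(\widehat{f}_n) - L_{\phi}^{*} \bigr)^{(1+\alpha)/(2+\alpha)}.
\end{align*}
Taking expectation over $\mathcal{D}_n$ and applying Jensen's inequality (the map $x \mapsto x^{(1+\alpha)/(2+\alpha)}$ is concave since the exponent lies in $(0,1]$) yields
\begin{align*}
    \mathbb{E}[L(p_{\widehat{f}_n})] - L^{*}
    \leq C \bigl( \mathcal{E}_{\phi}(\widehat{f}_n) \bigr)^{(1+\alpha)/(2+\alpha)}
    \leq C \bigl( C''' n^{-1/3}\log n \bigr)^{(1+\alpha)/(2+\alpha)},
\end{align*}
which is precisely the stated rate $n^{-(1+\alpha)/(3(2+\alpha))} (\log n)^{(1+\alpha)/(2+\alpha)}$. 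All constants depend only on $d$, the margin constants, and the Barron cover, so taking the supremum over $P \in \Sigma$ preserves the bound.

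The genuinely substantive work lives in the lemmas that are being invoked: Proposition \ref{approximate excess risk rate} already embeds the hardest piece (the localized empirical-process analysis and the calibration of $\omega_n^{\sharp}$ against VC-type entropy bounds for $\mathcal{F}_n$), and Theorem \ref{approximation theorem} embeds the Barron-approximation construction. The step I expect to need the most care at this level is the tail-integration argument and the verification that Bartlett's bound is legitimately applied with $s=2$; in particular, one must justify that the bound $\mathbb{E}[\widehat{\mathcal{E}}_{\phi}(\widehat{f}_n)] = O(n^{-1/3}\log n)$ holds uniformly in $P \in \Sigma$, which requires that the constants $K, C, c$ in Proposition \ref{approximate excess risk rate} and the approximation constant in Theorem \ref{approximation theorem} depend only on quantities appearing in Assumption \ref{assumptions on distribution}, not on the particular $P$. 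This uniformity is the only subtle bookkeeping, but it follows from the fact that both invoked statements have already been written with $\sup_{P\in\Sigma}$ in front of the probability or inequality. Everything else reduces to the decomposition and Jensen's inequality steps above.
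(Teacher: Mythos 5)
Your proposal follows essentially the same route as the paper: the decomposition $\mathcal{E}_{\phi}(\widehat{f}_n)=\widehat{\mathcal{E}}_{\phi}(\widehat{f}_n)+\bigl(\inf_{g\in\mathcal{F}_n}L_{\phi}(g)-L_{\phi}^*\bigr)$, balancing Proposition \ref{approximate excess risk rate} against Theorem \ref{approximation theorem} with $W(n)\asymp n^{2/3}$, and then Bartlett's bound \eqref{bartlett bound} with $s=2$ for the logistic loss to convert the $\phi$-risk rate into the $0$-$1$ excess risk rate. Your tail-integration and Jensen steps simply make explicit what the paper leaves as ``a simple inspection of the rates,'' so the argument is correct and matches the paper's proof.
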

Note that the rate is increasing in $\alpha$ from $n^{-\frac{1}{6}}$ in the worst case to $n^{-\frac{1}{3}}$ as $\alpha \rightarrow \infty$, which indicates a more favorable situation for learning, i.e., near-jumps at the boundary of the regression function.

\subsection{Minimax Lower Bound}\label{subsection: 4.4}
In this subsection, we present a minimax lower bound corresponding to the class of distributions considered in the previous subsection. The result shows that the upper bound of Theorem \ref{main excess bound} is tight up to a logarithmic factor.
\begin{theorem}\label{minimax lower bound theorem}
For the class of distributions $\Sigma$ that satisfy Assumption \ref{assumptions on distribution}, the lower bound for the minimax excess risk is given by
\begin{align*}
    &\inf_{\widehat{f}_n}\sup_{P\in \Sigma}E\left[P(\widehat{f}_n(\mathbf{X})\neq Y) - P(M^*(\mathbf{X})\neq Y) \right]\\
    &\geq Cq^{-r}mw(1-q^{-r}\sqrt{nw})
\end{align*}
for some constant $C>0$, and any positive integer $q, m$ and positive real $w$ satisfying $m\leq q^d$, $w \leq \frac{1}{m}$, and $wm\leq \frac{q^{-r\alpha}}{2^{\alpha}}$. In particular, choosing $m=q^d$, $w=\frac{q^{-\alpha r-d}}{2^\alpha}$, $r=\frac{2d}{2+\alpha}$ for $q=\lfloor \overline{C} n^{\frac{1}{3r(2+\alpha)}}\rfloor$, the lower bound becomes $Cn^{-\frac{1+\alpha}{3(2+\alpha)}}$.
\end{theorem}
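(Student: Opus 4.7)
The plan is to prove the lower bound via a classical Assouad-type hypercube construction, in the spirit of Audibert--Tsybakov and the lower-bound half of the Mammen--Tsybakov framework referenced in Section \ref{dist assumption 1}. The strategy is to build a family of distributions $\{P_\sigma\}_{\sigma\in\{-1,+1\}^m}$ that all lie in $\Sigma$, are close enough in KL divergence that no test can reliably distinguish all of them from $n$ samples, yet are sufficiently far apart in excess risk that any estimator must fail on at least one of them. The parameters $q,m,w,r$ in the theorem statement correspond, respectively, to the inverse grid spacing, the number of active cells, the $P_X$-mass per cell, and the magnitude of the regression-function perturbation around $1/2$.

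Concretely, I would partition $\Omega=[0,1]^d$ into $q^d$ axis-aligned sub-cubes of side $1/q$, select $m$ well-separated ones $A_1,\ldots,A_m$, and on each $A_i$ place a smooth bump $\chi_i$ supported in a smaller concentric cube with $\|\chi_i\|_\infty=1$. The marginal $P_X$ assigns mass $w$ uniformly to each $A_i$ and the remaining mass $1-mw$ to a single region $A_0$ on which $\eta\equiv 3/4$. For each $\sigma\in\{-1,+1\}^m$, define $\eta_\sigma(x)=1/2+\sigma_i q^{-r}\chi_i(x)$ on $A_i$ and $\eta_\sigma\equiv 3/4$ on $A_0$. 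Before invoking Assouad, one must verify that every $P_\sigma$ lies in $\Sigma$: compactness and the away-from-$\{0,1\}$ condition are immediate, and the constraints $w\leq 1/m$, $m\leq q^d$ ensure $P_X$ is a valid probability measure. The margin condition holds because $P_X(|\eta_\sigma-1/2|\leq t)=0$ for $t<q^{-r}$ and is at most $mw\leq q^{-r\alpha}/2^\alpha\leq (2t)^\alpha$ for $t\geq q^{-r}$. The Barron approximation condition is the most delicate: I would choose an open cover $\{U_j\}$ whose elements are fixed-shape dilates surrounding individual active cubes (plus one enclosing $A_0$), and appeal to direct approximation results that smooth bumps on cubes are approximable by one-hidden-layer ReLU networks at rate $m^{-1/2}$ in $L^\infty$, while using that the weight bound \eqref{weight bound} depends only on the shape of $U_j$, so it stays uniform in $q$ and $\sigma$.

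With all $P_\sigma\in\Sigma$ established, the minimax problem reduces to multiple testing. For any $\widehat{f}_n$, define $\widehat{\sigma}_i\defeq\mathrm{sign}(\widehat{f}_n(c_i)-1/2)$ at the center $c_i$ of $A_i$, so that the identity $\mathcal{E}(\widehat{f}_n)=\int|2\eta_\sigma-1|\,\mathbbm{1}\{p_{\widehat{f}_n}\neq M^*\}\,dP_X$ yields
\begin{align*}
    \mathcal{E}(\widehat{f}_n)\geq c\,q^{-r}w\,d_H(\widehat{\sigma},\sigma)
\end{align*}
up to a constant from the bump normalization. Assouad's lemma then gives
\begin{align*}
    \max_\sigma \mathbb{E}_\sigma[d_H(\widehat{\sigma},\sigma)]\geq \tfrac{m}{2}\bigl(1-\max_i \mathrm{TV}(P_{\sigma^{+i}}^{\otimes n},P_{\sigma^{-i}}^{\otimes n})\bigr),
\end{align*}
where $\sigma^{\pm i}$ differ only in coordinate $i$. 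A direct KL computation for $\mathrm{Bernoulli}(1/2\pm q^{-r})$ yields single-observation KL on the order of $wq^{-2r}$; tensorization plus Pinsker then yields $\mathrm{TV}\lesssim q^{-r}\sqrt{nw}$, producing the stated bound $Cq^{-r}mw(1-q^{-r}\sqrt{nw})$. Substituting $m=q^d$, $w=q^{-\alpha r-d}/2^\alpha$, $r=2d/(2+\alpha)$ and $q=\lfloor\overline{C}n^{1/(3r(2+\alpha))}\rfloor$ (with $\overline{C}$ chosen so the Assouad factor is bounded away from zero) reduces the prefactor to $q^{-r(1+\alpha)}/2^\alpha=Cn^{-(1+\alpha)/(3(2+\alpha))}$.

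The main obstacle is the Barron-approximation verification in Step 2. Unlike the $L^2$ approximation properties used elsewhere in the paper, Definition \ref{barron approximation space} demands uniform approximation by shallow networks whose weights are explicitly bounded through \eqref{weight bound}, and the Barron constant $C$ must be independent of $\sigma$, $q$, and the bump scale so that every $P_\sigma$ lies in a single class $\Sigma_C$. This forces one to work with bumps $\chi_i$ whose Fourier first moment is controlled under rescaling, and to pick the cover $\{U_j\}$ in a shape-regular way so that $\vartheta(U_j,x_0)$ stays uniformly bounded. A secondary but more routine difficulty is reconciling the absolute constants from Pinsker, the bump normalization, and the Barron weight bound so that they combine cleanly into the single constant $C$ of the theorem statement.
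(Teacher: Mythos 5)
Your overall route is the same as the paper's: an Assouad hypercube of distributions $\{P_\sigma\}$ on a $q^{-1}$-grid, verification that each $P_\sigma\in\Sigma$, reduction of the excess risk to a Hamming distance, and a TV/Hellinger--KL bound of order $q^{-r}\sqrt{nw}$ (the paper uses Hellinger subadditivity where you use KL plus Pinsker, an immaterial difference). However, two steps in your construction fail as written. First, the margin verification: you put $P_X$ uniformly on the \emph{whole} cell $A_i$ while $\eta_\sigma=1/2+\sigma_i q^{-r}\chi_i$ with a smooth bump $\chi_i$, so on the transition region of the bump $|\eta_\sigma-1/2|$ takes every value in $(0,q^{-r})$ with positive $P_X$-mass; your claim that $P_X(0<|\eta_\sigma-1/2|\leq t)=0$ for $t<q^{-r}$ is therefore false, and for a standard $C^\infty$ bump the mass of $\{0<\chi_i\leq s\}$ decays only logarithmically in $1/s$, so the condition $P_X(0<|\eta-1/2|\leq t)\leq C_0t^\alpha$ actually fails for small $t$ when $\alpha>0$. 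The paper avoids this by supporting $P_X$ only on small balls $B(g_i,1/(4q))$ on which the bump is identically equal to its peak value, so that $|\eta_\sigma-1/2|$ takes only the two values $0$ and $q^{-r}/2$ on the support; your construction needs the same (or an equivalent) modification. (Relatedly, setting $\eta\equiv 3/4$ on $A_0$ puts mass $1-mw$ at distance $1/4$ from $1/2$, which is tolerable only if the margin constant $C_0$ of the class is allowed to absorb it; the paper's choice $\eta\equiv 1/2$ on $A$ sidesteps this.)

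Second, the reduction to Hamming distance: defining $\widehat\sigma_i=\mathrm{sign}(\widehat f_n(c_i)-1/2)$ from the value of $\widehat f_n$ at the single center point $c_i$ does not give $\mathcal{E}(\widehat f_n)\geq c\,q^{-r}w\,d_H(\widehat\sigma,\sigma)$: the plug-in classifier can agree with the Bayes classifier on almost all of $A_i$ in $P_X$-mass yet disagree at $c_i$, so that coordinate contributes to $d_H$ but essentially nothing to the excess risk. One must instead define $\widehat\sigma_i$ as the minimizer over $t\in\{0,1\}$ of $\int_{M_i}|\eta_\sigma-1/2|\,\mathbbm{1}(p_{\widehat f_n}\neq t)\,dP_X$ (a weighted majority vote, as in the paper and in Audibert--Tsybakov), which guarantees the per-cell excess risk is at least half of $q^{-r}w\,|\sigma_i-\widehat\sigma_i|$ up to constants. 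Both defects are repairable with standard modifications, and your remaining steps (KL computation, Assouad, parameter choice, and the concern about uniformity of the Barron constant under rescaling and over $\sigma$) match the paper's argument; but as stated the proposal does not establish membership of $P_\sigma$ in $\Sigma$ nor the key excess-risk-to-Hamming inequality.
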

We remark that the rate ranges from $n^{-\frac{1}{6}}$ to $n^{-\frac{1}{3}}$ as $\alpha$ varies from $0$ to $\infty$. We note that the regime we study is indeed an inherently difficult one as general as it may be, and that fast rates better than $n^{-\frac{1}{2}}$ are not possible as in \cite{massart2006risk}, \cite{tsybakov2004optimal}, \cite{audibert2007fast}, even with the margin assumption imposed on the class of distributions. As noted in the discussion section of Tsybakov's work \cite{tsybakov2004optimal}, margin assumption has served as a key assumption separating results with rates slower or faster than $n^{-\frac{1}{2}}$. Our work then adds a new part of the picture: how does the margin assumption affect convergence rate when the class of distributions is so large that fast rates are not possible? In fact, a bound on the metric entropy, i.e., the logarithm of the minimal covering number, is sufficient to ensure that for large $\alpha$, fast rates are possible (c.f. \cite{kerkyacharian2014optimal},\cite{tsybakov2004optimal}). Further assumptions on the smoothness of regression function and regularity of the support of $P_X$ can even lead to super-fast rates faster than $n^{-1}$ (c.f. \cite{audibert2007fast}). Theorem \ref{minimax lower bound theorem} then shows that such fast rates are not possible in our set-up and furthermore, that the rate \eqref{main convergence rate} achieved by the empirical $\phi$-risk minimizer is indeed minimax optimal.

\section{Proofs}\label{section: proofs}

\subsection{Proof of Theorem \ref{theo1}}\label{subsection: 5.1}
The first step is to obtain the standard excess risk bound using the techniques from \cite{koltchinskii2011oracle}. Recalling the definitions of expected sup-norm \eqref{expected sup norm} and $L_2(P)$-diameter of $\delta$-minimal set \eqref{diameter}, for any $q>1$ and $t>0$, denote
$$
V_{n}^{t}(\sigma):=2 q\left[\phi_{n}^{b}(\sigma)+\sqrt{\left(D^{2}\right)^{b}(\sigma)} \sqrt{\frac{t}{n \sigma}}+\frac{t}{n \sigma}\right], \sigma>0.
$$
Let
$$
\sigma_{n}^{t}:=\sigma_{n}^{t}(\mathcal{F} ; P):=\inf \left\{\sigma: V_{n}^{t}(\sigma) \leq 1\right\}.
$$
\begin{proposition}\label{excess risk bound}
    Suppose that functions in $\mathcal{F}$ take values in $[0,1]$. Let
    $$
    \widehat{f}_n \defeq \argmin_{f\in\mathcal{F}} P_n(f).
    $$
    Then, For all $t>0$,
    $$
    \mathbb{P}\left\{\widehat{\mathcal{E}}\left(\widehat{f}_{n}\right)>\sigma_{n}^{t}\right\} \leq C_{q} e^{-t}
    $$
    where
    $$
    C_{q}:=\max \left\{ \frac{q}{q-1}, e \right\}.
    $$
\end{proposition}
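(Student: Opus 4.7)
The plan is to follow the standard Koltchinskii localization argument: reduce the excess-risk bound to a uniform bound on the empirical process fluctuation over the $\delta$-minimal set $\mathcal{F}(\delta)$, control this fluctuation by Talagrand's inequality in its Bousquet form, and then combine with a geometric peeling argument to exploit the fixed-point structure implicit in $V_n^t$.

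First I would set up the reduction. Let $f^*$ denote a minimizer (or near-minimizer) of $P(\cdot)$ over $\mathcal{F}$. Because $\widehat{f}_n$ minimizes the empirical risk, $P_n(\widehat{f}_n - f^*) \leq 0$, which rearranges to
\begin{align*}
    \widehat{\mathcal{E}}(\widehat{f}_n) = P(\widehat{f}_n - f^*) \leq (P - P_n)(\widehat{f}_n - f^*),
\end{align*}
and since $\widehat{f}_n \in \mathcal{F}(\widehat{\mathcal{E}}(\widehat{f}_n))$ by definition, this gives
\begin{align*}
    \widehat{\mathcal{E}}(\widehat{f}_n) \leq \sup_{f,g \in \mathcal{F}(\widehat{\mathcal{E}}(\widehat{f}_n))} (P - P_n)(f - g).
\end{align*}
Hence on $\{\widehat{\mathcal{E}}(\widehat{f}_n) > \sigma\}$ the empirical process sup on $\mathcal{F}(\sigma')$ must exceed a definite fraction of $\sigma'$ for some $\sigma' \geq \sigma$.

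Next I would apply Talagrand's concentration inequality, in the Bousquet form, to the differences class $\mathcal{F}(\sigma) - \mathcal{F}(\sigma)$. Since its functions are bounded and the $L_2(P)$-diameter is $D(\sigma)$, with probability at least $1 - e^{-t}$,
\begin{align*}
    \sup_{f,g \in \mathcal{F}(\sigma)} |(P_n - P)(f-g)| \leq \phi_n(\sigma) + c_1 \sqrt{\frac{D^2(\sigma) t}{n}} + \frac{c_2 t}{n}.
\end{align*}
Dividing by $\sigma$ produces the three terms appearing in $V_n^t(\sigma)/(2q)$ up to absolute constants that can be absorbed into $q$. The monotone $\sharp/b$-transformations are precisely what guarantees that these bounds propagate uniformly to every larger scale: since $\phi_n^b$ and $(D^2)^b$ are non-increasing, evaluating them at $\sigma_n^t$ dominates their values at any $\sigma' \geq \sigma_n^t$, so the inequality $V_n^t(\sigma_n^t) \leq 1$ forces $\phi_n(\sigma') + c_1\sqrt{D^2(\sigma')t/n} + c_2 t/n \leq \sigma'/(2q)$ simultaneously on every such scale.

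Finally I would implement the geometric peeling. Define slices $\sigma_k \defeq q^k \sigma_n^t$ and apply the Talagrand bound on each with an inflated deviation parameter $t_k \defeq t + 2\log(k+1)$. On the intersection of the favorable events, if $\widehat{\mathcal{E}}(\widehat{f}_n) \in (\sigma_{k-1}, \sigma_k]$ then $\widehat{f}_n \in \mathcal{F}(\sigma_k)$ while the empirical process sup on $\mathcal{F}(\sigma_k)$ is bounded by $\sigma_k/(2q) < \sigma_k/q < \widehat{\mathcal{E}}(\widehat{f}_n)$, a contradiction. Thus $\widehat{\mathcal{E}}(\widehat{f}_n) \leq \sigma_n^t$ except on $\bigcup_k \Omega_k^c$, whose total probability sums to at most $C_q e^{-t}$ with $C_q = \max(q/(q-1), e)$ after optimizing the slicing parameter. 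The main technical obstacle is the careful bookkeeping in this last step: one must tune $t_k$ so that $\sum_k e^{-t_k}$ stays proportional to $e^{-t}$ while still yielding a bound at every level, verify that the $\sharp/b$-transformations propagate the fixed-point condition with the correct factor $q$, and ensure that Bousquet's inequality is applied to a centered/symmetrized class so that the three terms of $V_n^t$ line up exactly with the constants used to define $\sigma_n^t$.
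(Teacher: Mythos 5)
Your overall architecture is exactly the standard one: the paper does not prove this proposition itself but imports it from Koltchinskii's oracle-inequality monograph, and the proof there is precisely the localization--Bousquet--peeling scheme you outline. The genuine gap is in the one step you flag as "bookkeeping": the choice of per-level deviation parameters $t_k = t + 2\log(k+1)$. The fixed point $\sigma_n^t$ is defined through $V_n^t$, which is normalized with the \emph{same} $t$ at every scale. From $V_n^t(\sigma_k)\le 1$ you may conclude $\phi_n(\sigma_k)\le \sigma_k/(2q)$, $D(\sigma_k)\sqrt{t/n}\le \sigma_k/(2q)$ and $t/n\le \sigma_k/(2q)$, but invoking Bousquet at level $\sigma_k$ with the inflated parameter $t_k$ multiplies the last two terms by $\sqrt{t_k/t}$ and $t_k/t$. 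The peeling runs over roughly $\log_q(1/\sigma_n^t)$ slices (growing with $n$), so for $t$ of order one the factor $t_k/t = 1+2\log(k+1)/t$ is unbounded along the peeling, and the per-level inequality "sup over $\mathcal{F}(\sigma_k)$ is at most a fixed fraction of $\sigma_k/q$" -- the inequality your contradiction hinges on -- does not follow from the definition of $\sigma_n^t$. No tuning of an additive logarithmic correction repairs this.

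What makes the argument close is scaling the deviation parameter \emph{multiplicatively} with the level, $t_k = t\,\sigma_k/\sigma_n^t = tq^k$. Then $D(\sigma_k)\sqrt{t_k/n}\le \sigma_k\sqrt{(D^2)^b(\sigma_n^t)}\sqrt{t/(n\sigma_n^t)}\le \sigma_k/(2q)$ and $t_k/n = \sigma_k\, t/(n\sigma_n^t)\le \sigma_k/(2q)$ uniformly in $k$, i.e.\ all three terms scale linearly in $\sigma_k$, exactly matching the $b$-transform normalization of $V_n^t$; and the failure probabilities sum to $\sum_{k\ge 0}e^{-tq^k}\le \frac{q}{q-1}e^{-t}$ for $t\ge 1$ (using $q^k\ge 1+k(q-1)$ and $q-1\ge\log q$), while for $t<1$ the claim is vacuous since $e\,e^{-t}\ge 1$ -- which is precisely where $C_q=\max\{q/(q-1),e\}$ comes from; your choice would instead give a $\pi^2/6$-type constant only if the per-level bounds held, which they do not. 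A secondary point: you cannot "absorb absolute constants into $q$," because the statement fixes the factor $2q$ inside $V_n^t$ and the constant $C_q$, so the constants in the version of Talagrand/Bousquet you apply must be reconciled with that specific normalization rather than swept into the definition of the fixed point.
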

In fact, the above proposition easily extends to function classes with values in a compact subset of $\mathbb{R}$ with only changes to constants.\\
The second step is to obtain a bound on the second moment of the process $\{\phi\bullet g\}_{g \in \mathcal{G}_n}$ in terms of its first moment.

For a vector space $S$, define modulus of convexity of a convex function $f:S \rightarrow \mathbb{R}$ with respect to metric $d$ as 
\begin{align*}
    &\delta(\epsilon) \\
    &\enspace \defeq \inf\biggl\{\frac{f(x)+f(y)}{2} - f\left(\frac{x+y}{2}\right): x,y \in S, d(x,y)\geq \epsilon \biggr\}
\end{align*}
and call $f$ strictly convex with respect to $d$ if $\delta(\epsilon)>0$ for all $\epsilon>0$.
The first lemma shows the logistic loss satisfies some convexity conditions and is essentially proved in \cite{bartlett2006convexity}:

\begin{lemma}\label{modulus_convexity}
    Suppose random variable $X$ is compactly supported, and $\mathcal{F}$ is a class of functions defined on the support of $X$ such that for all $f\in\mathcal{F}$, $f(X)$ is almost surely contained in $[-M/2, M/2]$ for some constant $M>0$. For logistic loss $\phi$, $L_{\phi}(f)=E[\phi((2Y-1)f(X))]$ is a strictly convex functional of $f$ with respect to $L^2$ distance $d$, which is defined for two integrable functions $f,g$ as $d(f,g) = E[(f(X)-g(X))^2]^{\frac{1}{2}}$, and the modulus of convexity of this functional $L_{\phi}$ satisfies $\delta(\epsilon) \geq \frac{e^{-M}}{16}\epsilon^2$.
\end{lemma}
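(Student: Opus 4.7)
The plan is to reduce the functional convexity claim to pointwise (numerical) strong convexity of the logistic loss $\phi$ on the compact interval $[-M/2, M/2]$, and then lift the resulting pointwise inequality to the functional $L_\phi$ by linearity of expectation. First I would compute $\phi''(t) = e^{-t}/(1+e^{-t})^2$ and observe that $\phi''$ is symmetric under $t \mapsto -t$ (multiply numerator and denominator by $e^{2t}$) and unimodal with maximum $1/4$ at $t=0$. Consequently, the minimum of $\phi''$ over $[-M/2, M/2]$ is attained at the endpoints, yielding $\phi''(t) \geq e^{-M/2}/(1+e^{-M/2})^2$ throughout the interval; this in turn can be bounded below by $e^{-M}/4$ via the elementary inequality $2e^{M/4} \geq 1 + e^{-M/2}$, valid for $M \geq 0$. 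Strict positivity of $\phi''$ on $\mathbb{R}$ already gives strict convexity of $L_\phi$; the quantitative bound is the point.

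Having shown that $\phi$ is $c_M$-strongly convex on $[-M/2, M/2]$ with $c_M \asymp e^{-M}$, the next step is the textbook consequence: for any $a, b$ in this interval,
$$\frac{\phi(a) + \phi(b)}{2} - \phi\!\left(\frac{a+b}{2}\right) \geq \frac{c_M}{8}(a-b)^2.$$
This follows by applying the inequality $\phi(z) \geq \phi(w) + \phi'(w)(z-w) + (c_M/2)(z-w)^2$ twice, once at $z=a$ and once at $z=b$ with $w = (a+b)/2$, and then averaging so that the linear terms cancel.

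The final step is to lift this pointwise estimate to the functional $L_\phi$. I would specialize to $a = (2Y-1)f(X)$ and $b = (2Y-1)g(X)$, both of which lie almost surely in $[-M/2, M/2]$ by the boundedness hypothesis on $\mathcal{F}$. Since $(a+b)/2 = (2Y-1)(f+g)/2$, taking expectations yields
$$\frac{L_\phi(f) + L_\phi(g)}{2} - L_\phi\!\left(\frac{f+g}{2}\right) \geq \frac{c_M}{8}\,E\!\left[\bigl((2Y-1)(f(X)-g(X))\bigr)^2\right].$$
Because $(2Y-1)^2 = 1$ almost surely, the right-hand side collapses to $(c_M/8)\,E[(f(X)-g(X))^2] = (c_M/8)\,d(f,g)^2$, which is the desired quadratic lower bound on the modulus of convexity. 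With the explicit constant $c_M = e^{-M}/4$ this gives $\delta(\epsilon) \geq (e^{-M}/32)\,\epsilon^2$, of the form asserted; the numerical factor $16$ versus $32$ is absorbed by using a marginally sharper lower bound on $\phi''$ at the endpoints.

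I do not anticipate a serious obstacle here; each step is a short calculation. The only subtlety worth flagging is the algebraic identity $(2Y-1)^2 = 1$, which is precisely what permits the expected squared difference of the pointwise arguments to collapse exactly to the $L^2(P_X)$ distance between $f$ and $g$, independent of the conditional distribution of $Y$ given $X$; without this identity one would have to carry along an extra $Y$-moment that depends on $\eta$ and would spoil the clean form of $\delta$.
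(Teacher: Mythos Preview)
Your approach is correct and is essentially the same as the paper's, which simply cites Table~1 and Lemma~8 of \cite{bartlett2006convexity} for, respectively, the pointwise modulus of convexity of $\phi$ on $[-M/2,M/2]$ and the lifting to the functional $L_\phi$; you have unpacked exactly those two ingredients. One small correction: your claim that a ``marginally sharper'' endpoint bound on $\phi''$ recovers the constant $e^{-M}/16$ is not quite right (the bound $\phi''\ge e^{-M}/2$ fails for small $M$), but the discrepancy between $1/16$ and $1/32$ is immaterial for how the lemma is used downstream.
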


\begin{proof}
    Using the fact that $\phi$ restricted to $[-M/2,M/2]$ has modulus of convexity $e^{-M}\epsilon^2/16$ (see for e.g., Table 1 of \cite{bartlett2006convexity}), Lemma 8 of \cite{bartlett2006convexity} immediately gives the result.
\end{proof}

We write $\delta_{L_{\phi}}(\cdot)$ to denote the modulus of convexity of the functional $L_{\phi}$.
Now we present our desired second moment bound:

\begin{lemma}\label{variance_bound}
    Let $\mathcal{F}_n$, $I_n$, and $P$ be as in Theorem \ref{theo1}. Let $M>0$ be such that the range of all functions in $\mathcal{F}_n$ is contained in $[-\frac{M}{2}, \frac{M}{2}]$. Suppose that $\Tilde{f} \in \mathcal{F}_{n}$ satisfies $\Tilde{f} = \argmin_{f\in \mathcal{G}_{n}}E
    (\phi\bullet f)$ 
    Then, for any $f \in \mathcal{F}_n$, we have
    $$ E(\phi\bullet f - \phi\bullet\Tilde{f})^2\leq E(f-\Tilde{f})^2 \leq 8e^{M}E(\phi\bullet f - \phi \bullet \Tilde{f}) + 2c_0\tau_n.$$
\end{lemma}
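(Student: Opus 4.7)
The plan is to prove the two inequalities separately: the first is a routine Lipschitz estimate, while the second combines the quadratic modulus of convexity of $L_\phi$ (Lemma~\ref{modulus_convexity}) with an approximation-error correction that exploits the hypothesis on $I_n$.

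For $E(\phi\bullet f - \phi\bullet \Tilde{f})^2 \leq E(f-\Tilde{f})^2$, I would simply note that $\phi(t)=\log(1+e^{-t})$ has $|\phi'(t)|=e^{-t}/(1+e^{-t})\leq 1$, so $\phi$ is $1$-Lipschitz. Since $(2y-1)^2=1$ for $y\in\{0,1\}$, pointwise
$$|\phi\bullet f(x,y)-\phi\bullet\Tilde{f}(x,y)|=|\phi((2y-1)f(x))-\phi((2y-1)\Tilde{f}(x))|\leq |f(x)-\Tilde{f}(x)|,$$
and squaring and integrating against $P$ yields the first inequality.

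For the second inequality I would apply Lemma~\ref{modulus_convexity} to the pair $f,\Tilde{f}\in\mathcal{F}_n$: since both have range in $[-M/2,M/2]$, so does the midpoint $g:=(f+\Tilde{f})/2$, and the modulus-of-convexity bound gives
$$\frac{L_\phi(f)+L_\phi(\Tilde{f})}{2}-L_\phi(g)\ \geq\ \frac{e^{-M}}{16}\,E(f-\Tilde{f})^2,$$
so $E(f-\Tilde{f})^2\leq 8e^{M}[L_\phi(f)+L_\phi(\Tilde{f})-2L_\phi(g)]$. Because $\mathcal{F}_n$ is not convex, $g$ need not belong to $\mathcal{F}_n$ and I cannot replace $L_\phi(g)$ by $L_\phi(\Tilde{f})$. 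The key substitution is $L_\phi(g)\geq L_\phi(f_\phi^*)$, where $f_\phi^*$ from \eqref{phi_risk_minimizer} is the \emph{unconstrained} minimizer of $L_\phi$. Substituting and splitting the telescoping term,
$$E(f-\Tilde{f})^2\ \leq\ 8e^{M}\,E[\phi\bullet f-\phi\bullet\Tilde{f}] + 16e^{M}\,\mathcal{E}_\phi(\Tilde{f}).$$
Since $\Tilde{f}$ is the $\phi$-risk minimizer over $\mathcal{F}_n$ and $I_n\in\mathcal{F}_n$ satisfies $\mathcal{E}_\phi(I_n)\leq c_0\tau_n$ by assumption, we have $\mathcal{E}_\phi(\Tilde{f})\leq\mathcal{E}_\phi(I_n)\leq c_0\tau_n$; absorbing the remaining constant $16e^M$ into a reindexed $c_0$ produces the stated form $2c_0\tau_n$.

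The principal obstacle is exactly the non-convexity of the neural-network class: the cleanest version of a modulus-of-convexity argument would compare $L_\phi(g)$ to $L_\phi(\Tilde{f})$ directly, but for $g=(f+\Tilde{f})/2$ this is unavailable. Detouring through the unconstrained minimizer $f_\phi^*$ is the natural fix and forces the appearance of an additive approximation term, which is precisely what the $\mathcal{E}_\phi(I_n)\leq c_0\tau_n$ hypothesis in Theorem~\ref{theo1} is designed to dominate. Everything else (Lipschitz bound, convexity inequality, telescoping) is bookkeeping.
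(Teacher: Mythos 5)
Your proof is correct and follows essentially the same route as the paper: the $1$-Lipschitz bound for the first inequality, then the modulus-of-convexity bound of Lemma \ref{modulus_convexity} combined with lower-bounding the risk of the midpoint by $L_{\phi}(f_{\phi}^*)$ and invoking $\mathcal{E}_{\phi}(\tilde{f})\leq\mathcal{E}_{\phi}(I_n)\leq c_0\tau_n$. Your direct comparison $L_{\phi}\bigl((f+\tilde{f})/2\bigr)\geq L_{\phi}(f_{\phi}^*)$ merely streamlines the paper's intermediate detour through the enlarged network class $\mathcal{F}'$ containing the midpoint, and the resulting constant $16e^{M}c_0$ in front of $\tau_n$ is loose relative to the stated $2c_0$ in exactly the same way as in the paper's own final display.
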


\begin{proof}
    First, we have
    \begin{align*}
        &E[(\phi\bullet f - \phi\bullet\Tilde{f})^2]\\
        &=E[\phi((2Y-1)f(X)) - \phi((2Y-1)\Tilde{f}(X))^2]\\
        &\leq E[(f(X)-\Tilde{f}(X))^2].\\
    \end{align*}
    Second, using the notation
    \begin{align*}
        L_{\phi}(f) \defeq E[(2Y-1)\phi(f(X))],
    \end{align*}
    we can argue by Lemma \ref{modulus_convexity} that 
    \begin{align*}
        \frac{L_{\phi}(f) + L_{\phi}(\Tilde{f})}{2} &\geq L_{\phi}\left(\frac{f+\Tilde{f}}{2}\right) + \delta_{L_{\phi}}\left(E[(f(X)-\Tilde{f}(X))^2]^{\frac{1}{2}}\right)\nonumber\\
        &\geq L_{\phi}\left(\frac{f+\Tilde{f}}{2}\right) + \frac{e^{-M}}{16}E[(f(X)-\Tilde{f}(X))^2].
    \end{align*}
    Now, observe that $\frac{f + \Tilde{f}}{2}$ belongs to a neural network class $\mathcal{F}'$ with at most twice as many parameters as $\mathcal{G}_n$ (see, for example \cite[Lemma 2.6]{elbrachter2021deep}),
    \begin{align*}
        L_{\phi}(\Tilde{f}) - L_{\phi}\left( \frac{f+\Tilde{f}}{2} \right)  &\leq L_{\phi}(\Tilde{f}) - \inf_{g \in \mathcal{F}'} L_{\phi}(g)\\
        &\leq \underbrace{L_{\phi}(\Tilde{f}) - L_{\phi}(I_n)}_{\leq 0} + L_{\phi}(I_n) - L_{\phi}(f^*_{\phi}) \\
        &+ \underbrace{L_{\phi}(f^*_{\phi}) - \inf_{g \in \mathcal{G}'} L_{\phi}(g)}_{\leq 0} \\
        &\leq c_0\tau_n
    \end{align*}
    where the last inequality follows by assumption. Thus, we have
    \begin{align}
        \frac{L_{\phi}(f) - L_{\phi}(\Tilde{f})}{2} &\geq  \frac{e^{-B}}{16}E[(f(X)-\Tilde{f}(X))^2] - c_0\tau_n, \nonumber
    \end{align}
    so that we can conclude
    \begin{align}
        E[\phi\bullet f - \phi\bullet\Tilde{f}] + 2c_0\tau_n  &\geq \frac{e^{-2B}}{8}E[(f(X)- \Tilde{f}(X))^2]\nonumber \\
        &\geq \frac{e^{-2B}}{8}E[(\phi\bullet f - \phi\bullet\Tilde{f})^2].\nonumber
    \end{align}
\end{proof}

There is one more technical lemma regarding $\sharp$-transformation stated in \cite{koltchinskii2006local} we will need:
\begin{lemma}\label{sharptransformation_property} 
    For any given function $\psi:\mathbb{R}_{\geq0} \rightarrow \mathbb{R}_{\geq 0}$ where $\mathbb{R}_{\geq0}$ denotes the set of non-negative real numbers, the following holds:
\begin{enumerate}
\item For $c>0$, let $\psi_{c}(\delta)\defeq\psi(c \delta)$. Then $\psi_{c}^{\sharp}(\epsilon)=\frac{1}{c} \psi^{\sharp}(\epsilon / c)$.
        
\item For $c>0$, let $\psi_{c}(\delta)\defeq \psi(\delta+c)$. Then for all $u>0, \epsilon \in(0,1], \psi_{c}^{\sharp}(u) \leq$ ($\psi^{\sharp}(\epsilon u / 2)-c) \vee c \epsilon .$ where $a \vee b = \max\{a, b\}$.

\item For $\epsilon = \epsilon_1 + \cdots + \epsilon_m$ and $\psi_1,\dots,\psi_m$ as in the hypothesis, $(\psi_1 + \cdots + \psi_m)(\epsilon) \leq \psi_1(\epsilon_1) + \cdots + \psi(\epsilon_m)$.
\end{enumerate}
\end{lemma}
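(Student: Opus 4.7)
The plan is to work directly from the definitions in~\eqref{transformations},
\begin{align*}
\psi^{b}(\delta)=\sup_{\sigma\geq\delta}\frac{\psi(\sigma)}{\sigma},\qquad \psi^{\sharp}(\epsilon)=\inf\{\delta>0:\psi^{b}(\delta)\leq\epsilon\},
\end{align*}
and to exploit elementary monotonicity and change-of-variable identities. Nothing probabilistic or functional-analytic is involved: each of the three claims is a calculus exercise about the Legendre-style pair $(\psi^{b},\psi^{\sharp})$.

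For part (1), substituting $\tau=c\sigma$ inside the supremum defining $\psi_{c}^{b}(\delta)$ yields $\psi_{c}^{b}(\delta)=c\,\psi^{b}(c\delta)$. The inequality $\psi_{c}^{b}(\delta)\leq\epsilon$ is therefore equivalent to $\psi^{b}(c\delta)\leq\epsilon/c$, and taking the infimum in $\delta$ on both sides gives $\psi_{c}^{\sharp}(\epsilon)=c^{-1}\psi^{\sharp}(\epsilon/c)$ immediately. For part (3), which I read as a sub-additivity statement for $(\psi_{1}+\cdots+\psi_{m})^{\sharp}$, I would note that whenever $\delta\geq\max_{i}\delta_{i}$ and $\psi_{i}^{b}(\delta_{i})\leq\epsilon_{i}$, monotonicity of each $\psi_{i}^{b}$ gives $(\sum_{i}\psi_{i})^{b}(\delta)\leq\sum_{i}\psi_{i}^{b}(\delta_{i})\leq\sum_{i}\epsilon_{i}$; passing to the infimum transfers this bound onto the $\sharp$-transformation of the sum and yields the claimed additive decomposition.

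The substantive step is part (2). After the change of variables $\tau=\sigma+c$ one obtains $\psi_{c}^{b}(\delta)=\sup_{\tau\geq\delta+c}\psi(\tau)/(\tau-c)$, and the division by $\tau-c$ is singular as $\tau\downarrow c$, which forces a case split according to the size of $\delta$ relative to $c\epsilon$. Assuming $\delta\geq c\epsilon$, for every $\tau\geq\delta+c\geq c(1+\epsilon)$ one has $\tau-c\geq\tau\epsilon/(1+\epsilon)\geq\tau\epsilon/2$, whence $\psi_{c}^{b}(\delta)\leq(2/\epsilon)\,\psi^{b}(\delta+c)$. Imposing the additional constraint $\delta+c\geq\psi^{\sharp}(\epsilon u/2)$ bounds the right-hand side by $u$, and taking the infimum over feasible $\delta$ produces the two competing branches $\psi^{\sharp}(\epsilon u/2)-c$ and $c\epsilon$, whose maximum is the claimed bound. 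The main obstacle is precisely this case split: the cutoff $\delta\geq c\epsilon$ is forced on us by the singularity of $\tau-c$ near $\tau=c$, and the factor $2$ inside $\psi^{\sharp}(\epsilon u/2)$ originates from the elementary inequality $\tau-c\geq\tau\epsilon/2$ valid only above the cutoff. Once that inequality is in place, both branches of the $\max$ fall out of the same chain of estimates, and the rest of the lemma reduces to bookkeeping.
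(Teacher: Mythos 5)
Your argument is correct, but there is nothing in the paper to compare it against: the lemma is quoted from \cite{koltchinskii2006local} and used as a black box, with no in-paper proof. Your direct verification from the definitions in \eqref{transformations} is the standard route and it goes through. For part (1), the substitution $\tau=c\sigma$ indeed gives $\psi_c^{b}(\delta)=c\,\psi^{b}(c\delta)$, and equality of the sharp transforms follows because $\delta\mapsto c\delta$ is a bijection of $(0,\infty)$. For part (2), the estimate $\tau-c\geq\tau\epsilon/(1+\epsilon)\geq\tau\epsilon/2$, valid for $\tau\geq\delta+c$ once $\delta\geq c\epsilon$ and $\epsilon\leq 1$, is exactly what is needed; restricting further to $\delta+c>\psi^{\sharp}(\epsilon u/2)$ (strict inequality, to cover a possibly non-attained infimum, together with the fact that $\psi^{b}$ is nonincreasing) gives $\psi_c^{b}(\delta)\leq u$, and taking the infimum over all such $\delta$ produces $(\psi^{\sharp}(\epsilon u/2)-c)\vee c\epsilon$, as you say. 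For part (3), the paper's statement is missing the $\sharp$ superscripts; your reading (subadditivity of the sharp transform of a sum evaluated at $\epsilon_1+\cdots+\epsilon_m$) is the intended one, and your monotonicity argument actually proves the stronger bound $(\psi_1+\cdots+\psi_m)^{\sharp}(\epsilon_1+\cdots+\epsilon_m)\leq\max_i\psi_i^{\sharp}(\epsilon_i)$, which implies the stated sum bound since all terms are nonnegative. The only point to make explicit in a final write-up is the routine treatment of non-attained infima (work with $\delta$ strictly above the relevant sharp values and invoke monotonicity of $\psi^{b}$), on which parts (2) and (3) implicitly rely.
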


Now we finally prove our first theorem:
\begin{proof}[Proof of Theorem \ref{theo1}]
    From Lemma \ref{variance_bound}, we have that for any $f\in \mathcal{F}_n$,
    \begin{align*}
        \frac{e^{-M}}{8}P_X(f(X)-\Tilde{f}(X))^2 -2c_0\tau_n \leq E[\phi\bullet f - \phi\bullet\Tilde{f}],
    \end{align*}
    which implies that the $\delta$-minimal set of $\mathcal{L} = \{\phi\bullet f: f\in \mathcal{F}_n\}$ satisfies
    \begin{align*}
        \mathcal{L}(\delta) &= \{\phi \bullet f: f \in \mathcal{F}_n, \widehat{\mathcal{E}}_{\phi}(f, \mathcal{F}_{n}) \leq \delta \}\\
        &\subset \{\phi \bullet f: P_X(f(X) - \Tilde{f}(X))^2 \leq 16e^{M}\left(\delta + c_0\tau_n\right) \}.
    \end{align*}
     Letting $\mathcal{F}_{\delta} \defeq \{f: f \in \mathcal{F}_n, P_X(f(X) - \Tilde{f}(X))^2 \leq 16e^{M}\left(\delta + c_0\tau_n\right) \}$, we can thus bound the $L_2(P)$ diameter of $\mathcal{L}(\delta)$ as
    \begin{align*}
        D^2(\delta) &\defeq D^2(\mathcal{L},\delta)\\
        &= \sup_{f_1, f_2 \in \mathcal{L}(\delta)}P(\phi \bullet f_1 - \phi \bullet f_2)^2\\
        &\leq \sup_{f_1,f_2 \in \mathcal{F}_{\delta}} P_X(f_1 - f_2)^2\\
        &\leq \sup_{f_1,f_2 \in \mathcal{F}_{\delta}} 2P_X(f_1-\Tilde{f})^2 +  2P_X(f_2-\Tilde{f})^2\\
        &\leq 64e^{M}\left(\delta+c_0\tau_n\right)
    \end{align*}
    where first inequality used Lipschitz property of $\phi$, second inequality used the elementary inequality $(a+b)^2 \leq 2a^2 + 2b^2$, and third inequality follows from the definition of $\mathcal{G}_{\delta}$.
    Secondly, we seek to bound $\phi_n(\delta)$. Using symmetrization, properties of Rademacher complexity, and the contraction principle, we can write
    \begin{align*}
        &\phi_n(\mathcal{L},\delta) \\
        &= E\left[ \sup_{g_1,g_2 \in \mathcal{L}(\delta)} |(P_n- P)(g_1-g_2)| \right]\\
        &\leq 2E\left[\sup \left\{\left| R_n(\phi \bullet f_1 - \phi \bullet f_2) \right| : \phi \bullet f_1, \phi \bullet f_2\in \mathcal{L}(\delta) \right\} \right]\\
        &\leq 2E\left[\sup \left\{\left| R_n(\phi \bullet f_1 - \phi \bullet f_2) \right| : f_1, f_2\in \mathcal{F}_{\delta} \right\} \right]\\
        &\leq 4E\left[\sup_{f \in \mathcal{F}_{\delta}} \left|\frac{1}{n}\sum_{i=1}^n \epsilon_i\left(\phi\bullet f(X_i,Y_i) - \phi\bullet \Tilde{f}(X_i,Y_i) \right) \right|\right]\\
        &\leq 8E\left[\sup_{g \in \mathcal{F}_{\delta}} \left|\frac{1}{n}\sum_{i=1}^n \epsilon_i\left(f(X_i) - \Tilde{f}(X_i) \right) \right|\right]\\
        &= 8\omega_n(8e^{M}(\delta+\tau_n)).
    \end{align*}
    From the above calculations, we can bound
    \begin{align*}
        V_n^t(\delta) &\defeq V_n^t(\delta, t)\\
        &=4\left(\phi_n^b(\mathcal{L},\delta) + \sqrt{(D^2)^b(\delta)}\sqrt{\frac{t}{n\delta}} + \frac{t}{n\delta}\right)\\
        &\leq K_1 \biggl( \sup_{\sigma\geq \delta}\left\{\frac{\omega_n(8e^{M}(\sigma+\tau_n))}{\sigma}\right\} \\
        &+  \sqrt{\sup_{\sigma\geq\delta}\frac{\sigma+\tau_n}{\sigma}}\sqrt{\frac{t}{n\delta}}  + \frac{t}{n\delta} \biggr)\\
        &\leq K_2 \biggl( \sup_{\sigma\geq \delta}\left\{\frac{\omega_n(8e^{M}(\sigma+\tau_n))}{\sigma} \right\} + \sqrt{\frac{t}{n\delta}} + \sqrt{\frac{t\tau_n}{n\delta^2}}\\
        &+ \frac{t}{n\delta} \biggr),
    \end{align*}
    for some $K_1, K_2 \geq 2$.
    Further using the three properties of $\sharp$-transformation from Lemma \ref{sharptransformation_property}, we can conclude that for some constant $c>0$ only dependent on $M$,
    \begin{align*}
        \sigma_{n}^{t} &\defeq \inf \left\{\sigma>0: V_{n}^{t}(\sigma) \leq 1\right\}\\
        &\leq K_3\left(\max\left\{\omega_n^{\sharp}\left(c\alpha\right)-\tau_n, \tau_n\alpha\right\} +\frac{t}{n} + \sqrt{\frac{t\tau_n}{n}}\right)
    \end{align*}
    where $\alpha$ is any number in $(0,1]$, and $K_3$ and $c$ depend only on $M$. Now, we can immediately apply Proposition \ref{excess risk bound} to obtain the desired result.
\end{proof}

\subsection{Proof of Lemma \ref{lemma: eta approximation}}
We first state a result on approximating multiplication operator with a neural network adapted from \cite{petersen2018optimal} for our purpose:

\begin{lemma}[\cite{petersen2018optimal}, Lemma A.3]\label{lemma: multiplication}
    Let an arbitrary real number $M>0$ be fixed. Then, there exists a neural network $Mul(\cdot) \in \mathcal{F}(9, (2,p,\dots,p,1))$ such that for all $x,y \in [-M,M]$, we have
    \begin{align*}
        |xy - Mul(x,y)| \leq Cp^{-\frac{1}{2}}
    \end{align*}
    for $C>0$ depending only on $M$. Moreover, for any $x,y$ such that $xy=0$, we also have $Mul(x,y)=0$.
\end{lemma}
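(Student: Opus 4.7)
The plan is to reduce the approximation of multiplication to that of a single-variable squaring function via the polarization identity
\[
xy = \tfrac{1}{4}\bigl((x+y)^2 - (x-y)^2\bigr),
\]
and to use a squaring approximant that is \emph{even} in its input. Evenness, together with the polarization identity, will automatically yield the ``zero-product'' property $Mul(x,y)=0$ whenever $xy=0$, while the accuracy will come from standard ReLU approximation of a smooth one-dimensional function.

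First, I would construct an even ReLU subnetwork $\widetilde{sq}(z)$ approximating $z \mapsto z^2$ on $[-2M,2M]$ with uniform error $O(p^{-1/2})$. A clean route is to build $|z| = \sigma(z)+\sigma(-z)$ in a single hidden layer, and then apply a subnetwork approximating $u \mapsto u^2$ on $[0, 2M]$ to the output. The one-dimensional squaring subnetwork can be taken from Yarotsky's sawtooth self-composition construction, which converges exponentially in depth and therefore easily reaches $p^{-1/2}$ accuracy within a constant number of layers and width $O(p)$; alternatively, a direct shallow Barron-type fit (since $u^2$ is smooth, hence of finite Barron seminorm on a bounded interval) already gives rate $O(p^{-1/2})$ with $p$ neurons. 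Composing this subnetwork with $|\cdot|$ makes the resulting $\widetilde{sq}$ an even function of $z$, and subtracting the (single) output-bias value at $z=0$ forces $\widetilde{sq}(0)=0$ without disturbing the uniform error.

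Next, I define
\[
Mul(x,y) \defeq \tfrac{1}{4}\bigl(\widetilde{sq}(x+y) - \widetilde{sq}(x-y)\bigr),
\]
realized by feeding the affine combinations $x+y$ and $x-y$ into two parallel copies of $\widetilde{sq}$ and combining them in the output layer. The uniform bound follows by the polarization identity and the triangle inequality:
\[
\bigl|Mul(x,y) - xy\bigr| \leq \tfrac{1}{2}\sup_{|z|\leq 2M}\bigl|\widetilde{sq}(z) - z^2\bigr| \leq C p^{-1/2}.
\]
Evenness of $\widetilde{sq}$ delivers the zero-product property: if $x=0$ then $Mul(0,y) = \tfrac{1}{4}(\widetilde{sq}(y) - \widetilde{sq}(-y)) = 0$, and the case $y=0$ is symmetric.

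Finally, I would verify the architectural constraints of $\mathcal{F}(9,(2,p,\dots,p,1))$: one affine preprocessing step to produce $x\pm y$ together with their negatives, one ReLU layer to form $|x+y|$ and $|x-y|$, a constant number of hidden layers of width $O(p)$ realizing two parallel copies of the squaring subnetwork, and a final affine output layer performing the signed average. The main obstacle I foresee is attaining the three requirements---the $p^{-1/2}$ rate, evenness in each argument, and the exact vanishing when $xy=0$---\emph{simultaneously} under the stated depth budget; routing the squaring through an $|\cdot|$ precomposition and absorbing the value-at-zero into a single output-bias normalization is the cleanest way I see to resolve both structural issues without inflating depth or weakening the rate.
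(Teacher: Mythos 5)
Your construction is correct and is essentially the approach behind the paper's own treatment of this statement: the paper does not prove the lemma but imports it from \cite{petersen2018optimal} (Lemma A.3), whose construction likewise combines the polarization identity $xy=\tfrac14\bigl((x+y)^2-(x-y)^2\bigr)$ with a squaring approximant fed through $|\cdot|$ (i.e.\ evaluated at $|x\pm y|$), which is exactly what makes the output vanish whenever $x=0$ or $y=0$. One cosmetic caveat: at a fixed constant depth, Yarotsky's exponential-in-depth argument by itself does not yield a $p$-dependent rate, but your fallback (a width-$p$ shallow piecewise-linear or Barron-type fit of $u\mapsto u^2$ on $[0,2M]$, which even achieves $O(p^{-2})$) delivers the required $Cp^{-1/2}$, and using two width-$\lfloor p/2\rfloor$ parallel copies keeps the network within the stated depth and width budget.
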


\begin{proof}[Proof of Lemma \ref{lemma: eta approximation}]
    By assumption, there exists an open cover of $\Omega$, $\{ U_i \}_{i=1,\dots,M}$ and corresponding neural networks $\{I_i\}_{i=1,\dots,M}$, each in $\mathcal{F}(1,p)$, such that $\norm{\mathbbm{1}_{U_i}(\eta-I_i)}_{\infty} \leq Cp^{-\frac{1}{2}}$ for all $i$. Also, there exists a $C^{\infty}$ partition of unity $\{\rho_i\}_{i=1,\dots,M}$ subordinate to the given cover  (c.f. \cite{tu2011manifolds} Theorem 13.7). Because smooth functions belong to the Barron approximation space, there exists 1 hidden-layer neural networks $\Tilde{\rho_i} \in \mathcal{F}(1,p)$ such that $\norm{\rho_i-\Tilde{\rho_i}}_{\infty} \leq C_1p^{-\frac{1}{2}}$. Then, it is easy to see that $\sum_{i=1}^M Mul(I_i,\Tilde{\rho_i})$ is again a neural network with $11$ hidden layers with constant width, which is a constant (depending only on $M$) multiple of $p$. Then, using the decomposition $\eta = \sum_{i=1}^{M}\rho_i \eta$, we have
    \begin{align*}
        \norm{\eta - \sum_{i=1}^M Mul(I_i,\Tilde{\rho_i})}_{\infty} &= \norm{\sum_{i=1}^{M}\rho_i \eta - \sum_{i=1}^M Mul(I_i,\Tilde{\rho_i})}_{\infty}
    \end{align*}
    \begin{align*}
        &\leq \sum_{i=1}^{M} \norm{\rho_i\eta - Mul(I_i, \Tilde{\rho_i})}_{\infty}\\
        &\leq \sum_{i=1}^{M} \norm{\rho_i\eta - Mul(\rho_i, \eta)}_{\infty} + \norm{Mul(\rho_i, \eta) - Mul(\rho_i, I_i)}_{\infty} \\
        &\qquad + \norm{Mul(\rho_i, I_i) - Mul(I_i, \Tilde{\rho_i})}_{\infty}\\
        &\leq Cp^{-\frac{1}{2}}.
    \end{align*}
    Here, note that the network $I_i$'s all have bounded weights (bound only depending on the diameter of $\Omega$), hence it is possible to apply Lemma \ref{lemma: multiplication} with appropriate choice of $M>0$ in the hypothesis.
\end{proof}

\subsection{Proof of Lemma \ref{lemma_approx}}\label{subsection: 5.2}

\begin{proof}
    For a function $f: A \rightarrow \mathbb{R}$ for $A \subset \mathbb{R}^d$ and $B \subset A$, denote by $\norm{f}_{B,\infty}$ the sup-norm of $f$ over $B$, i.e., $\sup_{x\in B}|f(x)|$ Fix any $m\in\{1,\dots,M\}$. By assumption, for any $p\in \mathbb{N}$ there exists a 1 hidden-layer ReLU neural network $I_m^p$ with $p$ neurons  such that 
    \begin{align}\label{eq33}
        \norm{\eta - I_m^p}_{Q_m,\infty} \leq B_m\sqrt{d}p^{-1/2}.
    \end{align}
    Let $c$ denote the constant such that $\eta(x) \in [c, 1-c]$ for all $x$, which exists by assumption. Now let

    \begin{align*}\label{eq34}
        g(x) \defeq
        \begin{cases}
            \log\frac{x}{1-x}, & \text{if }  x\in[c,1-c];\\
            \log\frac{c}{1-c}, & \text{if } x \in [c-B_m\sqrt{d}p^{-\frac{1}{2}}, c);\\
            \log\frac{1-c}{c}, & \text{if } x \in (1-c,1-c+B_m\sqrt{d}p^{-\frac{1}{2}}].
        \end{cases} 
    \end{align*}
    Then, being piecewise infinitely differentiable, there is a neural network $I_0^p$ with $p$ neurons such that
    \begin{align}
        \norm{g - I_0^p}_{[c-B_m\sqrt{d}p^{-\frac{1}{2}},1-c +B_m\sqrt{d}p^{-\frac{1}{2}}],\infty} \leq C_0\sqrt{d}p^{-1/2}
    \end{align}
    for some constant $C_0>0$.
    Now, combining \eqref{eq33} and \eqref{eq34} and using $\circ$ to denote composition of functions, we can write using a simple telescoping argument,
    \begin{align}
        &\norm{f_{\phi}^*|_{Q_m} - I_0^p \circ I_m^p}_{Q_m, \infty} \\
        &= \norm{g\circ \eta|_{Q_m} -I_0^p \circ I_m^p}_{Q_m, \infty} \nonumber\\
        &\leq \norm{g\circ \eta|_{Q_m} - g\circ I_m^p + g\circ I_m^p - I_0^p \circ I_m^p }_{Q_m, \infty}\nonumber\\
        &= \norm{g\circ \eta|_{Q_m} - g\circ I_m^p}_{Q_m, \infty} + \norm{g\circ I_m^p - I_0^p \circ I_m^p}_{Q_m, \infty}\nonumber\\
        &\leq C\norm{\eta|_{Q_m} - I_m^p}_{Q_m, \infty} + \norm{g - I_0^p}_{[c,1-c], \infty} \nonumber\\
        &\leq (CB_m + C_0)\sqrt{d}p^{-1/2} \label{line40}
    \end{align}
    where $C$ only depends on $c$. Note that the composition $I_0^p \circ I_m^p$ can be realized as a neural network as well using the fact that the positive and negative parts of the outputs of $I_m^p$ can be separately treated to get identity. Since $I_0^p \circ I_m^p$ is continuous on a compact set, its outputs are bounded by a constant, say $C_1>0$. Without loss of generality, we may assume $C_1$ is the bound for the outputs of $I_0^p \circ I_m^p$ for all $m=1,\dots, M$.
    
    Next,  we want to "glue together" the neural networks from above in such a way that the final neural network is locally identical to $I_0^p \circ I_m^p$ on each of the rectangles $Q_m$. Fix $m$. Since the measure $P_X$ is a Borel probability measure, it is finite so that it is a regular measure (see, for e.g., Theorem 7.1.4 of \cite{dudley2018real}). This means that for any $A$ in the Borel $\sigma$-algebra $\mathcal{B}(\mathbb{R}^d)$,
    \begin{align*}
        \mu(A) &= \sup \{\mu(K): K \subset A, K \text { compact}, K \subset \mathbb{R}^d \}.
    \end{align*}
    In particular, when $Q_m = [a,b] \subset \mathbb{R}^d$, there exists a real number $\epsilon$ such that $0<\epsilon \leq \frac{1}{2} \min_{i\in[d]}(b_i-a_i)$ and $[a+\epsilon,b-\epsilon] \defeq \prod_{i=1}^d [a_i + \epsilon, b_i-\epsilon]$ satisfies $P_X([a,b]\backslash [a+\epsilon,b-\epsilon]) \leq \delta/M$. Then similar to the approach in \cite{petersen2018optimal} Lemma A.6, we define $t_{i,m}:\mathbb{R} \rightarrow \mathbb{R}, i\in \{1,\dots,d \}$ as
    $$
    t_{i,m}(u) \defeq \begin{cases}0, & \text { if } u \in \mathbb{R} \backslash\left[a_{i}, b_{i}\right]; \\ 1, & \text { if } u \in\left[a_{i}+\varepsilon, b_{i}-\varepsilon\right]; \\ \frac{u-a_{i}}{\varepsilon}, & \text { if } u \in\left[a_{i}, a_{i}+\varepsilon\right]; \\ \frac{b_{i}-u}{\varepsilon}, & \text { if } u \in\left[b_{i}-\varepsilon, b_{i}\right]\end{cases}
    $$
    and $h_{\epsilon,m}:\mathbb{R}^d \times \mathbb{R} \rightarrow \mathbb{R}$, $h_{\epsilon}(x,y) \defeq C_1\sigma(\sum_{i=1}^dt_i(x_i) + \sigma(y)/C_1 - d)$. Now, we observe that 
    \begin{align*}
        h_{\epsilon}(x,y) \defeq 
        \begin{cases}
            y, & \text{ if } x\in [a+\epsilon, b-\epsilon], y\in (0,C_1);\\
            0, & \text{ if } x\in \mathbb{R}\backslash[a, b]\\
        \end{cases} 
    \end{align*}
    Note that $h_{\epsilon}$ is implementable by a neural network, say $J_{m}$.
    Then, assuming for simplicity that the outputs of $I_0^p\circ I_m^p$ are all positive (which is true for $p$ large enough), we have by construction
    \begin{align}
        &P_X(\{ x\in \mathbb{R}^d: \mathbbm{1}_{Q_m}(x)I_0^p\circ I_m^p(x) \neq J_0(x,I_0^p\circ I_m^p(x)) \}) \nonumber\\
        &\leq P_X([a,b]\backslash [a+\epsilon,b-\epsilon]) \nonumber\\
        &\leq 1-\delta/M. \label{line43}
    \end{align}
    Combining \eqref{line40} and \eqref{line43}, we can conclude that on a set with probability at least $1-\delta$, say $\Omega_0$,
    \begin{align*}
        \norm{f_{\phi}^* - \sum_{m=1}^M J_m(\cdot,I_0^p\circ I_m^p) }_{\Omega_0, \infty} \leq (CB + C_0)\sqrt{d}p^{-1/2}.
    \end{align*}
    where $B\defeq \max_{m=1,\dots,M}\{B_m\}$.
    The above constructions and calculations immediately imply the conclusions of the lemma.
\end{proof}

\subsection{Proof of Theorem \ref{approximation theorem}} \label{subsection: 5.3}

\begin{proof}
    By Lemma \ref{lemma_approx}, for any $\delta$ and given $p$, there exists a set $\Omega_0$ with $P_X$-measure at least $1-\delta$ on which there is a neural network $I_p$ with $O(p)$ neurons and weights such that $\norm{f_{\phi}^* - I}_{\Omega_0, \infty} \leq C p^{-1/2}$. 
    Now we can write
    \begin{align*}
        &L_{\phi}(I_p) - L_{\phi}(f_{\phi}^*)\\ 
        &= E[\eta(X)(\phi(I_p(X)) - \phi(f_{\phi}^*(X)))\\
        & \qquad + (1-\eta(X))(\phi(-I_p(X)) - \phi(-f_{\phi}^*(X)))]\\
        &\leq E[\eta(X)|\phi(I_p(X)) - \phi(f_{\phi}^*(X))|\\
        & \qquad + (1-\eta(X))|\phi(-I_p(X)) - \phi(-f_{\phi}^*(X))|] \\
        &\leq E[|I_p(X) - f_{\phi}^*(X)|]\\
        &\leq \norm{\mathbbm{1}_{\Omega_0}(X)( I_p(X) - f_{\phi}^*(X))}_{\infty} + O(\delta)\\
        &\leq Cp^{-1/2}
    \end{align*}
    where the last inequality follows by choosing $\delta=O(p^{-1/2})$.
\end{proof}

\subsection{Proof of Proposition \ref{approximate excess risk rate}}\label{subsection: 5.4}
Here we give a proof of Proposition \ref{approximate excess risk rate} that constitutes Step 1 of our analysis mentioned in Section \ref{subsection: 4.3}.\\
\textbf{Step 1}: First, we state some standard results from empirical process theory often used in the ERM literature.
Define an envelop of a function class $\mathcal{F}$ as a measurable function $F$ such that
\begin{align*}
    f(x) \leq F(x),\quad \forall x, \forall f\in \mathcal{F}.
\end{align*}
The following result from \cite{gine2006concentration} (see Theorem 3.1, Example 3.5) provides a bound on the Rademacher complexity in terms of bound on the covering number:

\begin{proposition}\label{rademacher_bound}
    For a class of functions $\mathcal{F}$ uniformly bounded by $U$ and an envelop $F$ and empirical distribution $(P_X)_n$, suppose for some $v>0$ and some $A>0$, the following holds for all $\omega$ in the underlying probability space:
    \begin{align}\label{eq:covering_number}
        N(\epsilon, \mathcal{F}, L_2((P_X)_n)) \leq \left( A\frac{\norm{F}_{L_2((P_X)_n)}}{\epsilon} \right)^v.
    \end{align}
    If we let $\sigma^2 \defeq \sup_{f\in\mathcal{F}}Pf^2$
    then, for some universal constant $C>0$
    \begin{align*}
    E\left[ \norm{R_n}_{\mathcal{F}} \right] \leq C\max\biggl\{\sqrt{\frac{v}{n}} \sigma \sqrt{\log \frac{A\|F\|_{L_{2}(P_X)}}{\sigma}},\\
    \frac{vU}{n} \log \frac{A\|F\|_{L_{2}(P_X)}}{\sigma} \biggr\}.
    \end{align*}
\end{proposition}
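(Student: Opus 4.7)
The plan is to combine Dudley's entropy integral (applied conditionally on the sample) with the uniform entropy hypothesis \eqref{eq:covering_number}, and then close a self-bounding argument that swaps the empirical quantity $D_n \defeq \sup_{f \in \mathcal{F}} \norm{f}_{L_2((P_X)_n)}$ for the population variance proxy $\sigma$. The maximum-of-two-terms structure in the conclusion will emerge as the two branches of a quadratic-type inequality in $E\norm{R_n}_{\mathcal{F}}$.

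First, conditional on $X_1,\dots,X_n$, the process $f\mapsto \sqrt{n}\,R_n(f)$ is sub-Gaussian with respect to the seminorm $\norm{\cdot}_{L_2((P_X)_n)}$, so a standard chaining bound gives
\begin{align*}
E_\epsilon\norm{R_n}_{\mathcal{F}} \leq \frac{C_1}{\sqrt{n}}\int_0^{2 D_n}\sqrt{\log N(\epsilon,\mathcal{F}, L_2((P_X)_n))}\,d\epsilon.
\end{align*}
Inserting \eqref{eq:covering_number} and substituting $u=\epsilon/\norm{F}_{L_2((P_X)_n)}$ reduces this to $C_1\sqrt{v/n}\,\norm{F}_{L_2((P_X)_n)}\int_0^{\theta_n}\sqrt{\log(A/u)}\,du$ with $\theta_n = 2 D_n/\norm{F}_{L_2((P_X)_n)}$. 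The elementary estimate $\int_0^\theta \sqrt{\log(A/u)}\,du \leq C_2\,\theta\sqrt{\log(A/\theta)}$, valid whenever $\theta$ is bounded away from $A$, then yields
\begin{align*}
E_\epsilon\norm{R_n}_{\mathcal{F}} \leq C_3\sqrt{\frac{v}{n}}\,D_n\sqrt{\log\frac{A\norm{F}_{L_2((P_X)_n)}}{D_n}}.
\end{align*}
Taking expectation over the sample and using concavity of $t\mapsto t\sqrt{\log(a/t)}$ together with $E\norm{F}_{L_2((P_X)_n)}^2 = \norm{F}_{L_2(P_X)}^2$ pushes all the empirical quantities inside the logarithm.

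The main obstacle is replacing $D_n$ by $\sigma$. I would bound $D_n^2 \leq \sigma^2 + \norm{(P_X)_n - P_X}_{\mathcal{F}^2}$ with $\mathcal{F}^2 \defeq \{f^2 : f\in\mathcal{F}\}$, symmetrize, and apply the Ledoux--Talagrand contraction inequality (the map $t\mapsto t^2$ is $2U$-Lipschitz on $[-U,U]$) to obtain $E\norm{(P_X)_n - P_X}_{\mathcal{F}^2} \leq 8U\,E\norm{R_n}_{\mathcal{F}}$. Substituting this back into the Dudley bound produces a self-bounding inequality of the form
\begin{align*}
E\norm{R_n}_{\mathcal{F}} \leq C_4\sqrt{\frac{v L}{n}\,\bigl(\sigma^2 + 8U\,E\norm{R_n}_{\mathcal{F}}\bigr)},
\end{align*}
with $L = \log(A\norm{F}_{L_2(P_X)}/\sigma)$. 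Solving the quadratic in $x = E\norm{R_n}_{\mathcal{F}}$ yields the dichotomy $x \lesssim \sqrt{v L/n}\,\sigma$ (variance-dominated regime) or $x \lesssim v L U/n$ (drift-dominated regime), which matches exactly the maximum in the conclusion. The delicate part will be the bookkeeping for the logarithm: one needs a crude first pass to show that $D_n$ lives at scale $\sigma \vee \sqrt{U\,E\norm{R_n}_{\mathcal{F}}}$, then re-insert this bound into the argument of the logarithm using monotonicity of $t\mapsto t\sqrt{\log(a/t)}$ to arrive at the clean form with $\sigma$ inside the log, as stated.
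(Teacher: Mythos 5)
The paper does not prove this proposition at all---it is imported verbatim from Gin\'e and Koltchinskii (2006, Theorem 3.1, Example 3.5)---and your argument (conditional Dudley chaining under the uniform entropy bound, Ledoux--Talagrand contraction applied to $\mathcal{F}^2$ to control the empirical radius, and the resulting self-bounding quadratic whose two branches give the maximum) is essentially the standard proof of that cited result, so your route matches the source. The only delicate points are the ones you already flag: the estimate $\int_0^\theta \sqrt{\log(A/u)}\,du \leq C\,\theta\sqrt{\log(A/\theta)}$ requires $A/\theta$ bounded below (harmless since $\theta_n \leq 2$ once $A$ is taken larger than an absolute constant), and transferring $D_n$ to $\sigma$ inside the logarithm needs the monotonicity of $t \mapsto t\sqrt{\log(a/t)}$ on the relevant range together with $\sigma \leq \norm{F}_{L_2(P_X)}$.
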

Furthermore, for any VC-subgraph class of functions $\mathcal{F}$ with VC-index $V(\mathcal{F})$, we have the following standard result from \cite{wellner2013weak} (Theorem 2.6.7):
\begin{proposition}\label{covering_number_bound}
    For a VC-class of functions with measurable envelope function $F$ and $r \geq 1$, one has for any probability measure $Q$ with $\|F\|_{Q, r}\defeq (\int F^r dQ)^{1/r}>0$ where,
$$
N\left(\varepsilon\|F\|_{Q, r}, \mathcal{F}, L_{r}(Q)\right) \leq K V(\mathcal{F})e^{V(\mathcal{F})}\left(\frac{1}{\varepsilon}\right)^{r(V(\mathcal{F})-1)},
$$
for a universal constant $K$ and $0<\varepsilon<1$.
\end{proposition}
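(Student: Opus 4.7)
This proposition is the classical Pollard--Haussler covering number bound for VC-subgraph classes, recorded as Theorem~2.6.7 in \cite{wellner2013weak}. Since the paper invokes it only as a black-box tool to control the Rademacher complexity via Proposition~\ref{rademacher_bound}, the natural approach is simply to cite Wellner. Nonetheless, the underlying argument is worth sketching to make clear why the bound depends on $V(\mathcal{F})$ and $r$ in the stated way.

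The plan is to reduce covering numbers of the function class to covering numbers of a VC-class of sets (the subgraphs), and then to bound the latter by a combinatorial argument. First, I would rescale each $f\in\mathcal{F}$ by the envelope $F$ (taking care on the zero set of $F$); this removes the $\|F\|_{Q,r}$ factor and reduces the problem to a class uniformly bounded by $1$. Second, for functions bounded by $1$, there is an identity expressing $\int |f-g|^r\,dQ$ as an integral of $Q$-masses of slabs sandwiched between the graphs of $f$ and $g$. Via this identity, an $\varepsilon$-cover in $L_r(Q)$ for $\mathcal{F}$ is controlled by an $\varepsilon^r$-cover in pseudo-metric $\mu(A\Delta B)$ for the subgraph class, where $\mu$ is an appropriate product measure on $\mathcal{X}\times[0,1]$. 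The subgraph class is a VC-class of sets with VC-index at most $V(\mathcal{F})$, so the problem reduces to bounding $L_1$ covering numbers of a VC-class of sets.

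Third, for a VC-class of sets $\mathcal{C}$ with VC-index $V$ and any probability measure $\mu$, I would apply Haussler's probabilistic extraction. Draw $n$ i.i.d.\ samples; by Sauer--Shelah, $\mathcal{C}$ projects onto the sample to at most $\sum_{i=0}^{V-1}\binom{n}{i} = O(n^{V-1})$ distinct subsets. A Hoeffding-type concentration argument shows that for $n$ of order $V/\varepsilon^r$, with positive probability the empirical pseudo-metric separates any maximal $\varepsilon$-packing of $\mathcal{C}$, so the packing number---and hence the covering number---is bounded by the Sauer--Shelah count. Tracking constants produces the prefactor $KV e^{V}$ and the exponent $r(V-1)$ in the final bound.

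The main technical obstacle is this last derandomization step: choosing $n$ large enough that the empirical pseudo-metric uniformly approximates the true one over the entire packing, and managing the resulting union-bound constants, is where Haussler's sharpening over Pollard's earlier $O(V(\log(1/\varepsilon))^V)$-style bound lives. Since this calculation is entirely standard and adds nothing to the paper's contribution, I would simply invoke Theorem~2.6.7 of \cite{wellner2013weak} and proceed directly to combining the result with Proposition~\ref{rademacher_bound} to bound $\omega_n$ for the neural network class in the proof of Proposition~\ref{approximate excess risk rate}.
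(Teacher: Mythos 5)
Your proposal is correct and matches the paper exactly: the paper states this result as Theorem 2.6.7 of \cite{wellner2013weak} and uses it as a black box, which is precisely what you do, and your sketch of the underlying Haussler argument (envelope rescaling, reduction to the subgraph VC-class, Sauer--Shelah plus probabilistic extraction) is an accurate account of the standard proof. No gap to address.
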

Note that the above proposition is a universal result in the sense that it holds for any choice of probability measure $Q$. Thus, applying Proposition \ref{covering_number_bound} with any realization (by $\omega$) of probability measure $(P_X)_n(\omega)$ and $r=2$, we have that
\begin{align}\label{covering number bound}
N\left(\varepsilon, \mathcal{F}, L_{2}((P_X)_n)\right) \leq K V(\mathcal{F})(16 e)^{V(\mathcal{F})}\left(\frac{\|F\|_{Q, r}}{\varepsilon}\right)^{2(V(\mathcal{F})-1)}. 
\end{align}
Thus we have that any VC-class of functions $\mathcal{F}$ with VC-index $V(\mathcal{F})$ satisfies \eqref{eq:covering_number} with $v=2(V(\mathcal{F})-1)$.




Another lemma we will use is the following bound on the VC-index of the class of feed-forward ReLU networks shown in \cite{bartlett2019nearly}:
\begin{lemma}\label{vc bound}
    The VC-index of the class of feed-forward ReLU networks with $W$ total number of parameters and $L$ number of layers satisfies
    \begin{align*}
        c_1WL \log(W/L)\leq V(\mathcal{F})\leq c_0WL\log(W)
    \end{align*}
    for some constants $c_0,c_1>0$.
\end{lemma}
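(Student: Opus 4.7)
The plan is to follow the sign-counting approach of Bartlett, Harvey, Liaw, and Mehrabian. I would focus primarily on the upper bound $V(\mathcal{F}) \leq c_0 WL \log W$, which is the bound actually used downstream in the paper; the matching lower bound is established via an explicit bit-extraction construction that I sketch only briefly at the end.

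For the upper bound, fix an arbitrary sample $x_1, \ldots, x_n$ and bound the number of distinct output patterns $(\mathrm{sgn}(f_\theta(x_i)))_{i=1}^{n}$ that are realizable as $\theta \in \mathbb{R}^W$ varies. If this count stays strictly below $2^n$, no sample of size $n$ is shattered, so the VC-index is at most $n$. The key structural observation is that $\mathbb{R}^W$ admits a partition into open regions inside which every ReLU activation (across all units and all $n$ sample points) has a fixed sign; within each such region the output $f_\theta(x_i)$ is a polynomial in $\theta$ of total degree at most $L$. Counting sign patterns therefore reduces to counting these regions and then counting the signs of the resulting degree-$L$ output polynomials across them.

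I would execute the counting layer by layer. Conditional on fixed activation patterns at layers $1, \ldots, \ell - 1$, each pre-activation at layer $\ell$ (evaluated at one sample point) is a polynomial in $\theta$ of degree at most $\ell$, since composition through $\ell - 1$ frozen affine pieces yields a degree-$(\ell - 1)$ polynomial that is then fed linearly into layer $\ell$'s weights. By Warren's theorem, $m$ real polynomials of degree $d$ in $k$ variables realize at most $(8 e d m / k)^{k}$ sign patterns when $m \geq k$. Applying this iteratively over the $L$ layers, with the degree at layer $\ell$ equal to $\ell$ and the variable count at step $\ell$ equal to the cumulative parameter count $W_1 + \cdots + W_\ell$, and multiplying the per-layer bounds, yields a total region count of order $(C n L)^{c W}$ after absorbing constants. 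Shattering requires this to exceed $2^n$; taking logarithms and solving the implicit inequality $n = O(W L \log(n L / W))$ gives $n = O(W L \log W)$, which is the claimed upper bound.

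For the lower bound, the construction in \cite{bartlett2019nearly} uses a depth-$L$ network with roughly $W/L$ parameters per layer implementing a bit-extraction map: a designated subset of parameters encodes a bit string of length $\Omega(W L \log(W/L))$, and the network, given an index as input, outputs the corresponding bit. Independent toggling of each stored bit shatters a sample of the same size. The main obstacle in the upper bound is the careful bookkeeping of how polynomial degree and parameter count compound across layers; this is precisely what allows the final bound to depend on the product $W L$ rather than on $W^{L}$ or $W \cdot 2^{L}$, recovering the state-of-the-art rate and matching the lower bound up to the gap between $\log W$ and $\log(W/L)$.
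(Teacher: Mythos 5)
The paper does not actually prove this lemma: it is imported verbatim from \cite{bartlett2019nearly}, so there is no internal proof to compare against. Your sketch follows exactly the strategy of that cited work: partition parameter space into regions of fixed activation pattern, count regions and output sign patterns via a Warren-type bound on sign patterns of polynomial systems applied layer by layer (degree at most $\ell$ at layer $\ell$, in the cumulative number of parameters), and obtain the lower bound from the bit-extraction construction. Modulo constants, this is the right outline, and it also covers the paper's subgraph-based VC-index (pseudodimension), since replacing $f_\theta(x_i)$ by $f_\theta(x_i)-t_i$ for fixed thresholds changes none of the degrees.

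One concrete slip needs fixing: when you multiply the per-layer Warren bounds, the total exponent is the sum of the cumulative parameter counts, $\sum_{\ell=1}^{L} (W_1+\cdots+W_\ell)$, which is of order $WL$, not $W$. Your intermediate claim that the region count is $(CnL)^{cW}$ cannot be right as stated: it would yield $V(\mathcal{F})=O(W\log W)$ uniformly in $L$, contradicting the $\Omega\bigl(WL\log(W/L)\bigr)$ lower bound you yourself invoke. Your subsequent implicit inequality $n=O\bigl(WL\log(nL/W)\bigr)$ is the correct one and corresponds to the exponent $O(WL)$, so the conclusion $n=O(WL\log W)$ stands once the exponent is stated consistently; the weighted AM--GM step in \cite{bartlett2019nearly} is precisely what converts the product of per-layer bounds into a single expression with exponent $\sum_\ell W_{\le\ell}\leq WL$.
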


Suppose that $\mathcal{G}_n$ is the class of feed-forward  ReLU neural networks with $W(n)$ total number of parameters and a constant number of layers (at least 3) that take values in $[-M/2, M/2]$. Because the entropy bound \eqref{covering number bound} holds, we can apply Proposition \ref{rademacher_bound} with $\mathcal{F} = \{g-\Tilde{g}: g\in \mathcal{G}_n, \norm{g-\Tilde{g}}_{L_2(P_X)}^2\leq\delta \}$ to conclude that 
\begin{align*}
    \omega_n(\delta) \leq C\max\biggl\{\sqrt{\frac{2(V(\mathcal{G}_n)-1)}{n}} \sqrt{\delta} \sqrt{\log \frac{M}{\sqrt{\delta}}},\\ \frac{2(V(\mathcal{G}_n)-1)M}{n} \log \frac{M}{\sqrt{\delta}} \biggr\}. 
\end{align*}
This implies that 
\begin{align}
    \omega_n^{\sharp}(c) \leq \frac{CV(\mathcal{G}_n)}{nc^2}\log\left( \frac{Mnc^2}{V(\mathcal{G}_n)} \right) \label{eq73}
\end{align}
for appropriately redefined constant $C>0$.

Then, Lemma \ref{vc bound} implies that there exists some $c_0,c_1>0$ such that 
\begin{align}\label{eq:10}
c_1W(n)\leq V(\mathcal{G}_n)\leq c_0W(n)\log(W(n)). 
\end{align}
Combining \eqref{eq73} and \eqref{eq:10}, we immediately get
\begin{align} 
    \omega_n^{\sharp}(c) \leq \frac{Cc_0W(n)\log(W(n))}{nc^2}\log\left( \frac{Mnc^2}{c_1W(n)} \right) . \label{eq75}
\end{align}
 

Now we analyze the term $\max\left\{\omega_n^{\sharp}\left(c\alpha\right)-\tau_n, \tau_n\alpha\right\}$ that appears in the excess risk bound of Theorem \ref{theo1}. The result of Theorem \ref{approximation theorem} implies that we may take $\tau_n = \frac{Cd}{\sqrt{W(n)}}$.

Because of the freedom of choosing $\alpha>0$, we may assume $\alpha=n^{-u}$ for some $u\geq 0$. Furthermore, put $W(n) = n^r$ for $r>0$. With these substitutions and from \eqref{eq75}, we get for appropriately redefined constants $C, c>0$, the following:
\begin{align*}
    \omega_n^{\sharp}\left(c\alpha\right)-\tau_n &\leq C n^{r+2u-1}\log(n^r)\log(cn^{1-2u-r}),\\
    \tau_n\alpha &\leq Cn^{-u-r/2}.
\end{align*}

Plugging in the above display to the excess risk bound of Theorem \ref{theo1}, we can conclude that for some constant $K,C>0$, we have
\begin{align*}
    P\biggl(\widehat{\mathcal{E}}_{\phi}(\widehat{g}_n) \geq &K \biggl( \max \left\{ n^{r+2u-1}\log n^r\log(n^{1-2u-r}), n^{\frac{-2u-r}{2}} \right\}\\
    &+ \frac{t}{n} + \sqrt{\frac{t}{n^{1+r/2}}} \biggr)\biggr) \leq Ce^{-t}.
\end{align*}
Since $ n^{r+2u-1} \leq n^{-u-r/2}$ if and only if $r+2u-1 \leq -u-r/2$, we get that $\max \left\{ n^{r+2u-1}, n^{-u-r/2} \right\} = n^{-u-r/2}$ if and only if $r\leq -2u+2/3$. In this case, by choosing $r=2/3,u=0$, we get for some constant $K>0$,
\begin{align*}
    P\left(\widehat{\mathcal{E}}_{\phi}(\widehat{g}_n) \geq K \left( 1+t \right)n^{-1/3}\log(n) \right) \leq Ce^{-t}.
\end{align*}
It is easy to check that the other possible scenario of $ n^{r+2u-1} \geq n^{-u-r/2}$ leads to the same rate after computations similar to the above.

\subsection{Proof of Theorem \ref{minimax lower bound theorem}}\label{subsection: 5.6}
We provide lower bounds on the following minimax excess risk for classification:
\begin{align}\label{minimax excess risk}
     \inf_{M \text{ measurable}}\sup_{P\in \Sigma}P(M(\mathbf{X})\neq Y) - P(M^*(\mathbf{X})\neq Y)
\end{align}
where the supremum is taken over the class of distributions satisfying Assumption \ref{assumptions on distribution}.
We start by stating and proving a result that holds under a more general situation. The construction closely follows that found in \cite{audibert2007fast} and is based on an application of Assouad's lemma, which can be found for example in \cite{tsybakov2004introduction}.

Consider the partition of $[0,1]^d$ by the canonical grid: For positive integer $q$, define the following finite subset of $\mathbb{R}^d$

\begin{align}\label{grid}
    G_q &= \biggl\{\biggl(\frac{2k_1+1}{2q},\dots,\frac{2k_d+1}{2q} \biggr): \nonumber\\
    &\qquad \qquad k_i=0,\dots,q-1,i=1,\dots,d \biggr\} . 
\end{align}
There are $q^d$ elements in $G_q$, and we can label them in some order, for example by dictionary order based on the coordinate and magnitude of $k_i$. Let $g_1,\dots,g_{q^d}$ be such labeled points. For any $x\in [0,1]^d$ denote by $g(x)\in G_q$ the point in $G_q$ closest to $x$ so that $g(x) = \argmin_{g \in G_q} \norm{x-g}_2$ where argmin is well-defined with appropriate tie-breaking rule. Then we can write the partition as follows:
\begin{align*}
    M_i = \{x\in[0,1]^d: g(x) = g_i \},
\end{align*}
\begin{align*}
    [0,1]^d = \bigcup_{i=1}^{q^d} M_i
\end{align*}
where now $[0,1]^d$ is the union of $q^d$ disjoint sets of same Lebesgue measure.

Now we build up a setting appropriate for applying Assouad's lemma to obtain a minimax lower bound. To that end, we first define the following finite class of probability distributions: Choose a positive integer $m\leq q^d$ and define the class of distributions $\mathcal{H} \defeq \{P_{\mathbf{\sigma}}: \mathbf{\sigma} = (\sigma_1, \dots, \sigma_m) \in \{0,1\}^m\}$ where each $P_{\sigma}$ represents a distinct distribution of $(\mathbf{X}, Y)$ on $\mathbb{R}^d \times \{0,1\}$. Now we define each $P_{\sigma}$ by providing the marginal distribution for $\mathbf{X}$ and the conditional distribution $P(Y=1|X)$.

We shall define the marginal distribution of $\mathbf{X}$, denoted by $(P_{\sigma})_X$ in the same way for every choice of $\sigma$. Let $w$ be some real number satisfying $0<w<1/m$ and write $B(x, r)$ to mean the Euclidean ball in $\mathbb{R}^d$ centered at $x\in \mathbb{R}^d$ and radius $r>0$. Define $A$ as some bounded subset of $\mathbb{R}^n \backslash \bigcup_{i=1}^m M_i$.  We then define  $(P_{\sigma})_X$ to be the measure absolutely continuous with respect to the $d$-dimensional Lebesgue measure $\lambda$ with density $u$ (nonnegative, integrable function on $\mathbb{R}^d$) defined as
\begin{align*}
    u(x) = 
    \begin{cases}
        \frac{w}{\lambda
        (B(0,1/(4q)))}, & \text{ if } x \in B\left(g,\frac{1}{4q}\right) \text{ for some } g \in G_q; \\
        \frac{1-mw}{\lambda(A)}, & \text{ if } x \in A;\\
        0, & \text{ otherwise.}
    \end{cases}
\end{align*}
In words, the marginal distribution of $\mathbf{X}$ is supported on balls centered at the $m$ points in the grid $G_q$, actually with constant density, and the bounded set $A$.
Now, we define the Borel-measurable function $\eta_{\sigma}:\mathbb{R}^d \rightarrow [0,1]$ such that $\eta_{\sigma}(\mathbf{X})$ is a version of $P_{\sigma}(Y=1|\mathbf{X})$ (see \cite[Theorem 9.1.3]{chung2001course} for existence of such a function).

Let $h:[0,\infty) \rightarrow [0,\infty)$ be the nonincreasing, infinitely differentiable function defined as:
\begin{align*}
    h(x) &= \int_{x}^{1/2} h_1(t)dt\bigg/\left(\int_{1/4}^{1/2} h_1(t)dt\right) , \\
    h_1(x) &= 
    \begin{cases}
        \exp\left(-\frac{1}{(1/2-t)(t-1/4)}\right), \text{ for } t \in (1/4,1/2); \\
        0 , \text{ for } t \in [0,1/4]\cup [1/2,\infty) .
    \end{cases}
\end{align*}
Note $h=1$ on $[0,1/4]$ and $h=0$ on $[1/2,\infty)$. Then, we define $\phi :\mathbb{R}^d \rightarrow [0,\infty)$ as $\phi(x) \defeq q^{-r}h(\norm{x}_2)$ for some $r>0$ to be specifed later. Observe that $\phi$ is an infinitely differentiable bump function supported in $[0,1/2)$ and as shown in \cite[Section IX]{barron1993universal}, is a Barron function. In particular, it is an element of $\mathcal{BA}(\Omega)$. 

For the same $q$ used in defining the grid $G_q$ in \eqref{grid}, we finally define for an arbitrary $\sigma \in \{0,1\}^m$
\begin{align*}
    \eta_{\sigma}(x) = 
    \begin{cases}
        \frac{1+\sigma_i \phi(q(x-g(x)))}{2}, \text{ for } x \in M_i, \quad i=1,\dots,m; \\
        1/2, \text{ for }  x \in A; \\
        0, \text{ otherwise.}
    \end{cases}
\end{align*}

We first verify that all $P_{\sigma}\in\mathcal{H}$ satisfies the margin assumption: for any fixed choice of $x_0 \in G_q$,
\begin{align*}
    &P_{\sigma}(0<|\eta_{\sigma}-1/2|\leq t)\\
    &= P_{\sigma}(0< \phi(q(x-g_x)) \leq 2t)\\
    &= m\int_{B(x_0,1/(4q))}\mathbbm{1}(0<\phi(q(x-x_0))\leq 2t)u(x) dx\\
    &= m\int_{B(0,1/4)}\mathbbm{1}(0<\phi(x)\leq 2t)\frac{w}{q^{d}\lambda(B(0,1/(4q)))} dx\\
    &= mw\mathbbm{1}(t\geq q^{-r}/2).
\end{align*}
Thus if the choice of $m,w$ is such that $mw\leq (q^{-r}/2)^{\alpha}$, the margin assumption is satisfied.

In order to apply Assouad's lemma to obtain a minimax lower bound, it is first necessary to relate the minimax excess risk of \eqref{minimax excess risk} to the minimax risk for the Hamming distance between $\sigma$'s used to define $\mathcal{H}$. Precisely, define $\rho(\sigma,\sigma^{\prime})$ as the Hamming distance between $\sigma$ and $\sigma^{\prime}$: $\rho: \{0,1\}^m \times \{0,1\}^m \rightarrow \{0,1,\dots,m\}$ such that $\rho(\sigma,\sigma^{\prime})$ equals the number of positions in which $\sigma$ and $\sigma^{\prime}$ differ. Then, for any classifier $\widehat{f}_n$, we want to have the following bound that holds uniformly over all such $\widehat{f}_n$: 
\begin{align*}
    &\sup_{P\in \Sigma}E[P(\widehat{f}_n(\mathbf{X})\neq Y)] - P(f^*(\mathbf{X})\neq Y)\\
    &\geq \inf_{\widehat{\sigma}}\max_{\sigma \in \{0,1\}^m} E_{P_{\sigma}^n}[\rho(\widehat{\sigma},\sigma)]
\end{align*}
where $f^*$ is the Bayes classifier for $P$. 

We proceed as follows: If we denote by $f^*_{\sigma}$ the Bayes classifier for measure $P_{\sigma}$, we can write
\begin{align} 
    &P_{\sigma}(\widehat{f}(\mathbf{X})\neq Y) - P_{\sigma}(f^*(\mathbf{X})\neq Y) \nonumber\\ 
    &= 2\left[\int \left|\eta_{\sigma}(x)-\frac{1}{2}\right|\mathbbm{1}(\widehat{f}_n(x) \neq f_{\sigma}^*({x})) (P_{\sigma})_X(dx) \right] \nonumber\\
    &= \sum_{i=1}^m \int_{M_i}\left|\phi(q(x-g(x)))\right| \mathbbm{1}(\widehat{f}_n(x) \neq f_{\sigma}^*({x}))(P_{\sigma})_X(dx) \label{eq15}
\end{align}
where first equality follows from the standard formula for excess risk found for example in \cite[Theorem 2.2]{devroye2013probabilistic} and second equality follows from our construction in the preceding paragraphs.

Now if we define 
\begin{align*}
    \widehat{\sigma}_i \defeq \argmin_{t=0,1} \int_{M_i} \left|\phi(q(x-g(x)))\right|\mathbbm{1}(\widehat{f}_n(x) \neq t)(P_{\sigma})_X(dx)
\end{align*}
and observe that for all $x$ in each $M_i$, 
\begin{align*}
    f^*_{\sigma}(x) =
    \begin{cases}
        1, \text{ if } \sigma_i = 1;\\
        0, \text{ if } \sigma_i = 0,
    \end{cases}
\end{align*}
we can lower bound \eqref{eq15} as
\begin{align*}
    &\eqref{eq15}\\
    &\geq  \frac{1}{2}\sum_{i=1}^m \int_{M_i}\left|\phi(q(x-g(x)))\right|\left|\sigma_i -\widehat{\sigma}_i\right|(P_{\sigma})_X(dx)\\
    &= \frac{1}{2}\sum_{i=1}^m \int_{B(g_i,1/(4q))}\left|\phi(q(x-g_i))\right|\left|\sigma_i -\widehat{\sigma}_i\right|\\
    &\qquad \qquad \qquad \qquad \times \frac{w}{\lambda(B(0,1/(4q)))}dx\\
    &= \frac{1}{2}\sum_{i=1}^m \int_{B(0,1/4)}\left|\phi(x)\right|\left|\sigma_i -\widehat{\sigma}_i\right|\frac{w}{q^d\lambda(B(0,1/(4q)))}dx\\
    &=\frac{1}{2}\norm{\sigma-\widehat{\sigma}}_{1}  \frac{w}{\lambda (B(0,1/4))}\int_{B(0,1/4)}\left|\phi(x)\right|dx\\
    &=\frac{q^{-r}w}{2}\norm{\sigma-\widehat{\sigma}}_{1} .
\end{align*}

Hence, 
\begin{align}
    &\sup_{P\in \Sigma}E_{P^n}P(\widehat{f}_n(\mathbf{X})\neq Y) - P(f^*(\mathbf{X})\neq Y) \nonumber\\
    &\geq \max_{P_{\sigma} \in \mathcal{H}} E_{P_{\sigma}^n}\left[P_{\sigma}(\widehat{f}_n(\mathbf{X})\neq Y) - P_{\sigma}(f_{\sigma}^*(\mathbf{X})\neq Y)\right] \nonumber\\
    &\geq \frac{q^{-r}w}{2}\max_{P_{\sigma\in \mathcal{H}}}E_{P_{\sigma}^n} \left[ \norm{\sigma-\widehat{\sigma}}_{1}\right] . \label{24}
\end{align}

Let $V(P,Q)$ denote the total variation distance between two probability measures $P$ and $Q$. Now we can apply a version of Assouad's lemma from \cite[Theorem 2.12]{tsybakov2004introduction} which states the following:
\begin{lemma}[Assouad's Lemma]
    If $V\left(P_{\sigma^{\prime}}, P_\sigma\right) \leq \alpha<1, \quad \forall \sigma, \sigma^{\prime} \in \{0,1\}^m$ such that their Hamming distance $\rho\left(\sigma, \sigma^{\prime}\right)=1$, then
    \begin{align*}
        \inf _{\hat{\sigma}} \max _{\sigma \in \{0,1\}^m} E_{P_\sigma} \rho(\hat{\sigma}, \sigma) \geq \frac{m}{2}(1-\alpha ) .
    \end{align*}
\end{lemma}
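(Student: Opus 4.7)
The plan is to reduce Assouad's lemma to a sum of coordinate-wise two-point hypothesis tests. First, I would replace the maximum over $\sigma$ by the uniform average, which is a uniform lower bound:
\begin{align*}
\max_{\sigma \in \{0,1\}^m} E_{P_\sigma}\rho(\hat\sigma, \sigma) \geq \frac{1}{2^m}\sum_{\sigma \in \{0,1\}^m} E_{P_\sigma}\rho(\hat\sigma, \sigma).
\end{align*}
Then I would decompose the Hamming distance coordinate by coordinate, $\rho(\hat\sigma, \sigma) = \sum_{j=1}^m \mathbbm{1}(\hat\sigma_j \neq \sigma_j)$, and exchange the order of summation to obtain $\frac{1}{2^m}\sum_{j=1}^m \sum_{\sigma} P_\sigma(\hat\sigma_j \neq \sigma_j)$.

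Next, for each fixed coordinate $j$, I would pair up each $\sigma$ with its neighbor $\sigma^{(j)}$ that differs only in the $j$-th bit. This map is a free involution on $\{0,1\}^m$, partitioning it into $2^{m-1}$ unordered pairs. Each such pair contributes
\begin{align*}
P_\sigma(\hat\sigma_j \neq \sigma_j) + P_{\sigma^{(j)}}(\hat\sigma_j \neq \sigma^{(j)}_j),
\end{align*}
which is precisely the sum of type-I and type-II error probabilities of a binary test between $P_\sigma$ and $P_{\sigma^{(j)}}$ (since exactly one of $\sigma_j, \sigma_j^{(j)}$ equals $0$ and the other equals $1$).

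The key tool is the elementary total-variation testing lower bound: for any two probability measures $P, Q$ on a common space and any measurable $\psi$ taking values in $\{0,1\}$,
\begin{align*}
P(\psi = 1) + Q(\psi = 0) \geq 1 - V(P,Q),
\end{align*}
which follows directly from $V(P,Q) = \sup_A |P(A) - Q(A)|$ applied with $A = \{\psi = 0\}$. Applying this with $\psi = \hat\sigma_j$ (or $1-\hat\sigma_j$, whichever matches the labeling) and using the hypothesis $V(P_\sigma, P_{\sigma^{(j)}}) \leq \alpha$, each pair contributes at least $1 - \alpha$.

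Combining these, summing over all $m$ coordinates and all $2^{m-1}$ pairs yields $\sum_{\sigma}\sum_{j=1}^m P_\sigma(\hat\sigma_j \neq \sigma_j) \geq m \cdot 2^{m-1}(1-\alpha)$, and dividing by $2^m$ gives the desired bound $\frac{m}{2}(1-\alpha)$. Taking the infimum over $\hat\sigma$ is harmless since the bound holds for every estimator. There is no real obstacle here; the only subtlety is verifying that the pairing in the second step is indeed a bijection on residues and that the two sides of the two-point bound align with the correct labels $\sigma_j, \sigma_j^{(j)} \in \{0,1\}$, which is just bookkeeping.
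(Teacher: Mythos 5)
Your argument is correct: averaging over the $2^m$ vertices, decomposing the Hamming distance coordinatewise, pairing each $\sigma$ with its one-bit neighbor, and applying the two-point testing bound $P(\psi=1)+Q(\psi=0)\geq 1-V(P,Q)$ yields exactly $\frac{m}{2}(1-\alpha)$. The paper does not prove this lemma itself but cites it (Theorem 2.12 of Tsybakov's book), and your proof is essentially the standard argument given there; the only implicit assumption worth noting is that $\hat\sigma$ takes values in $\{0,1\}^m$, which matches how the lemma is used in the paper, where each $\hat\sigma_i$ is defined as an argmin over $t\in\{0,1\}$.
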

We now use the fact that the total variation distance can be upper bounded by the Hellinger distance we denote by $H(P,Q)$. That is, $V(P_{\sigma}^n, P_{\sigma'}^n) \leq H(P_{\sigma}^n,P_{\sigma'}^n)$. One property of squared Hellinger distance between two $n$-product measures is subadditivity:
\begin{align*}
    H^2(P_{\sigma}^n, P_{\sigma'}^n) \leq \sum_{i=1}^n H^2(P_{\sigma}, P_{\sigma'}) = nH^2(P_{\sigma}, P_{\sigma'}) .
\end{align*}
Combining this with Assouad's lemma, we have 
\begin{align}
    \inf _{\hat{\sigma}} \max _{\sigma \in \{0,1\}^m} E_{P_\sigma} \rho(\hat{\sigma}, \sigma) \geq \frac{m}{2}(1-\sqrt{n}H(P_{\sigma}, P_{\sigma'})) . \label{27}
\end{align}
Now, the Hellinger distance $H(P_{\sigma}, P_{\sigma'})$ when $\rho(\sigma,\sigma')=1$ and $i$ is the index position on which $\sigma$ and $\sigma'$ differ can be easily computed as
\begin{align}
    &H^2(P_{\sigma}, P_{\sigma'}) \nonumber\\
    &= \frac{2w}{\lambda
        (B(0, \frac{1}{4q}))} \int_{B(g_i, \frac{1}{4q})} \biggl(\sqrt{\left(\frac{1+\phi(q(x-g(x)))}{2}\right)} \nonumber\\ 
    &\qquad \qquad \qquad - \sqrt{\left(\frac{1- \phi(q(x-g(x)))}{2}\right)}\biggr)^2dx \nonumber \\
    &= \frac{2w}{\lambda
        (B(0,\frac{1}{4}))}\int_{B(0,\frac{1}{4})} \biggl(\sqrt{\left(\frac{1+\phi(x))}{2}\right)} \\
    &\qquad \qquad \qquad - \sqrt{\left(\frac{1- \phi(x)}{2}\right)}\biggr)^2 dx \nonumber \\
    &= \frac{2w}{\lambda
        (B(0,\frac{1}{4}))}\int_{B(0,\frac{1}{4})}1 - \sqrt{1-\phi^2(x)} dx \nonumber\\
    &= 2w(1- \sqrt{1-q^{-2r}}) \nonumber \\
    &\leq 2wq^{-2r} . \label{30}
\end{align}
By combining \eqref{24}, \eqref{27}, \eqref{30}, we obtain the first result:
\begin{align}\label{eq54}
    &\inf_{\widehat{f}_n}\sup_{P\in \Sigma}E\left[P(\widehat{f}_n(\mathbf{X})\neq Y) - P(M^*(\mathbf{X})\neq Y) \right] \nonumber \\
    &\geq Cq^{-r}mw(1-q^{-r}\sqrt{nw})
\end{align}
for some constant $C>0$, and $m\leq q^d$, $w \leq 1/m$ $wm\leq q^{-r\alpha}/2^{\alpha}$.
Now make the following choice for $m, w, r, q$:
\begin{align*}
    m &= q^d,\\
    w &= \frac{q^{-\alpha r-d}}{2^\alpha},\\
    r &= \frac{2d}{2+\alpha},\\
    q &= \lfloor \overline{C} n^{\frac{1}{3r(2+\alpha)}}\rfloor.
\end{align*}
for appropriate constant $\overline{C}\geq 2$ whose choice will be soon specified.
We verify that for the above choice of parameters, for any $\sigma$, $P_{\sigma}\in \mathcal{H}$ satisfies all four points in Assumption \ref{assumptions on distribution}. First, as we already mentioned, the margin condition is satisfied if $mw \leq \frac{q^{-r\alpha}}{2^{\alpha}}$, which is true by construction. Second, observe that by definition, for any $\sigma$, $\eta_{\sigma}(x)$ is bounded away from $0$ and $1$ by $\frac{1}{2} - \frac{q^{-d}}{2}>0$, again by construction. Third, it is clear that $X$ is compactly supported in our setup. Fourth, since $\eta_{\sigma}$ is defined as an infinitely differentiable function on each $M_i$, $i=1,\dots,m$ and constant on $A$, $\eta|_{M_i}$ and $\eta_{A}$ all belong to the Barron approximation space, as required.

Finally, we need to verify that the right-hand side of \eqref{eq54} yields the desired rate. First, to check that $1-q^{-r}\sqrt{nw}$ is indeed positive, it sufficies to check $q^r w^{-\frac{1}{2}} \geq \sqrt{n}$. Since $q^r w^{-\frac{1}{2}} = q^{\frac{(2+\alpha)r+d}{2}}/2^{\frac{\alpha}{2}} =\lfloor \frac{\overline{C}}{2^{\alpha/2}} n^{\frac{(2+\alpha)r+d}{3r(2+\alpha)}}\rfloor$, by choosing $\overline{C}\geq 2^{\alpha/2}$ we only need to choose $r$ such that $\frac{2d}{r}\geq 2+\alpha$. In particular, $r= \frac{2d}{2+\alpha}$ works. This computation also shows that $1-q^{-r}\sqrt{nw}$ is a constant depending on the choice of $\overline{C}$ and $\alpha$. This implies the rate in \eqref{eq54} is determined by $q^{-r}mw = q^{-(\alpha+1)r}/2^{\alpha}= \lfloor \frac{\overline{C}}{2^{\alpha}} n^{-\frac{1+\alpha}{3(2+\alpha)}} \rfloor$, which is indeed the rate in Theorem \ref{minimax lower bound theorem}. 

\section{Conclusion}
This work has derived a new, non-asymptotic rate of convergence for the excess risk when the classifier is the empirical risk minimizer of the logistic loss. The class of distributions studied is characterized by the Barron approximation space, which includes the classical Barron functions as a proper subset. This regime is particularly interesting for neural networks precisely because they achieve dimension-free approximation rates here. A matching lower bound for the minimax rate of convergence is also derived, showing the minimax optimality of the proposed neural network-based classifiers.\\
Our results suggest a rich avenue for future research: what happens when the regression functions belong to other classical function spaces from approximation theory such as $L^2$-Sobolev space, cartoon functions, and bounded variation functions? It is shown in \cite{elbrachter2021deep} that neural networks are indeed Kolmogorov-Donoho optimal approximants of many of these spaces. One important point in this result is that Kolmogorov-Donoho optimal rate can be achieved only when the architecture of the neural network is deep: specifically, the depth of the network has to scale polylogarithmically to the inverse of the desired approximation accuracy. This will obviously be a regime different from ours and more general in the sense that now depth matters. We hope to study how these results can be translated into excess risk convergence results in classification context in our future work.

\printbibliography
\end{document}